\renewenvironment{proof}[1][]{\par\noindent{\bf Proof #1\ }}{\hfill\BlackBox\\[2mm]}
\newcommand{\E}{\mathbb{E}}
\renewcommand{\P}{\mathbb{P}}
\newcommand{\VS}{{\rm VS}}
\newcommand{\F}{\mathcal{F}}
\newcommand{\dc}{\theta}
\newcommand{\Ball}{{\rm B}}
\newcommand{\DIS}{{\rm DIS}}
\newcommand{\PDIS}{\Delta}
\newcommand{\VScomp}{\hat{\mathcal{C}}}
\newcommand{\ind}{\mathbbm{1}}
\newcommand{\target}{f^{*}}
\newcommand{\cX}{{\cal X}}
\newcommand{\cY}{{\cal Y}}
\newcommand{\eqdef}{\triangleq}
\newcommand{\reals}{\mathbb{R}}
\newcommand{\CAL}{{\rm CAL}}
\newcommand{\LC}{\Lambda}
\newcommand{\Dset}{\mathbb{D}}
\newcommand{\nats}{\mathbb{N}}
\newcommand{\polylog}{{\rm polylog}}
\newcommand{\poly}{{\rm poly}}
\newcommand{\er}{{\rm er}}
\renewcommand{\L}{\mathcal{L}}
\renewcommand{\H}{\mathcal{H}}
\newcommand{\U}{\mathcal{U}}
\newcommand{\sign}{{\rm sign}}
\newcommand{\Bound}[3]{\mathcal{B}_{#1}\!\left(#2,#3\right)} 
\newcommand{\hatn}[1]{\hat{n}(S_{#1})}
\newcommand{\argmax}{\mathop{\rm argmax}}
\newcommand{\argmin}{\mathop{\rm argmin}}
\newtheorem{condition}[theorem]{Condition}
\newsavebox{\savepar}
\newenvironment{bigboxit}{\begin{center}\begin{lrbox}{\savepar}
\begin{minipage}[h]{4.5in}
\normalfont
\begin{flushleft}}
{\end{flushleft}\end{minipage}\end{lrbox}\fbox{\usebox{\savepar}}
\end{center}}
\begin{document}
\title{A Compression Technique for Analyzing Disagreement-Based \\Active Learning}

\author{%
\name Yair Wiener 
\email yair.wiener@gmail.com\\
\addr \addr Department of Computer Science\\
Technion -- Israel Institute of Technology\\
\name Steve Hanneke 
\email steve.hanneke@gmail.com\\
\name Ran El-Yaniv
\email rani@cs.technion.ac.il\\
\addr Department of Computer Science\\
Technion -- Israel Institute of Technology}

\editor{?}

\maketitle

\begin{abstract}
We introduce a new and improved characterization of the label complexity of disagreement-based active learning,
in which the leading quantity is the \emph{version space compression set size}. 
This quantity is defined as the size 
of the smallest subset of the training data that induces the same version space.
We show various applications of the new characterization, including 
a tight analysis of CAL and refined label complexity bounds for linear separators under 
mixtures of Gaussians and axis-aligned rectangles under product densities.
The version space compression set size, as well as the new characterization of the label complexity, 
can be naturally extended to agnostic learning problems, for which we show new 
speedup results for two well known active learning algorithms.
\end{abstract}

\begin{keywords}
active learning, selective sampling, sequential design, statistical learning theory, PAC learning, sample complexity
\end{keywords}

\section{Introduction}
\label{sec:intro}

Active learning is a learning paradigm allowing the learner to sequentially
request the target labels of selected instances from a pool or stream of unlabeled
data.\footnote{Any active learning technique for streaming data can be used in pool-based
models but not vice versa}
The key question in the theoretical analysis of active learning is how many
label requests are sufficient to learn the labeling function to a specified
accuracy, a quantity known as the \emph{label complexity}.
Among the many recent advances in the theory of active learning,
perhaps the most well-studied technique has been the \emph{disagreement-based}
approach, initiated by \citet*{CohAtlLad94}, and further advanced in numerous articles
\citep*[e.g.,][]{balcan2009agnostic,dasgupta2007general,Beygelzimer08,Beygelzimer10,koltchinskii:10,Hanneke11,hanneke:12b}.
The basic strategy in disagreement-based active learning is to sequentially process the
unlabeled examples, and for each example, the algorithm
requests its label if and only if the value of the optimal classifier's classification
on that point cannot be inferred from information already obtained.

One attractive feature of this approach is that its simplicity makes it amenable to
thorough theoretical analysis, and numerous theoretical guarantees on the performance
of variants of this strategy under various conditions have appeared in the literature
\citep*[see e.g.,][]{balcan2009agnostic,Hanneke07,dasgupta2007general,balcan:07,Beygelzimer08,Friedman09,BHV:10,hannekeAnnals11,koltchinskii:10,Beygelzimer10,hsu:thesis,Hanneke11,el2012active,hanneke:12b,hannekestatistical}.
The majority of these results formulate bounds on the label complexity in terms
of a complexity measure known as the \emph{disagreement coefficient} \citep*{Hanneke07},
which we define below.	A notable exception to this is the recent work of 
\citet*{el2012active}, rooted in the related topic of selective prediction \citep*{ElYaniv10,wiener2012pointwise,Wiener13},
which instead bounds the label complexity in terms of two complexity measures called the
\emph{characterizing set complexity} and the \emph{version space compression set size} \citep*{ElYaniv10}.
In the current literature,
the above are the only known general techniques for the analysis of disagreement-based active learning.

In the present article, we present a new characterization of the label complexity of
disagreement-based active learning.
The leading quantity in our characterization is the \emph{version space compression set size}
of \citet*{el2012active,ElYaniv10,Wiener13},
which corresponds to the size of the smallest subset of the training set that induces the same version space as the entire training set. 
This complexity measure was shown by \citet*{el2012active} to be a special case of the extended teaching dimension
of \citet*{Hanneke_COLT_07}.

The new characterization improves upon the two prior techniques in some cases.
For a noiseless setting (the realizable case), we show that the label complexity results derived from this new technique
are \emph{tight} up to logarithmic factors.  This was not true of either
of the previous techniques; as we discuss in Appendix~\ref{app:looseness-examples},
the known upper bounds in the literature expressed in terms of
these other complexity measures are sometimes off by a factor of the VC dimension.
Moreover, the new method significantly simplifies the recent technique
of \citet*{Wiener13,el2012active,ElYaniv10}
by completely eliminating the need
for the characterizing set complexity
measure.

Interestingly, interpreted as an upper bound on the label complexity of active learning in general,
the upper bounds presented here also reflect improvements over a bound of \citet*{Hanneke_COLT_07},
which is also expressed in terms of (a target-independent variant of) this same complexity measure:
specifically, reducing the bound by roughly a factor of the VC dimension compared to that result.
In addition to these results on the label complexity, we also relate the version space compression set size
to the disagreement coefficient, essentially showing that they are always within a factor of
the VC dimension of each other (with additional logarithmic factors).

We apply this new technique to derive new results for two learning problems: namely, 
linear separators under mixtures of Gaussians, and axis-aligned hyperrectangles under product densities.  
We derive bounds on the version space compression set size for each of these.
Thus, using our results relating the version space compression set size to the label complexity,
we arrive at bounds on the label complexity of disagreement-based active learning for these problems,
which represent significant refinements of the best results in the prior literature on these settings.

While the version space compression set size is initially defined for noiseless (realizable) 
learning problems that have a version space, it can be naturally extended to an agnostic setting,
and the new technique applies to noisy, agnostic problems as well. 
This surprising result, 
which was motivated by related observations of \citet*{hannekestatistical,Wiener13},
is allowed through bounds on the disagreement coefficient in terms of the version space compression set size,
and the applicability of the disagreement coefficient to both the realizable and agnostic settings.
We formulate this generalization in Section~\ref{sec:noise} and present new sample complexity results
for known active learning
algorithms, 
including the disagreement-based methods of \citet*{dasgupta2007general} and 
\citet*{Hanneke11}.
These results tighten the bounds of \citet*{Wiener13} using the new technique.

\section{Preliminary Definitions}
\label{sec:definitions}

Let $\cX$ denote a set, called the \emph{instance space},
and let $\cY \eqdef \{-1,+1\}$, called the \emph{label space}.
A \emph{classifier} is a measurable function $h : \cX \to \cY$.
Throughout, we fix a set $\F$ of classifiers, called the \emph{concept space},
and denote by $d$ the VC dimension of $\F$ \citep*{VapniC71,Vapnik98}.
We also fix an arbitrary probability measure $P$ over $\cX \times \cY$,
called the \emph{data distribution}.   Aside from Section~\ref{sec:noise},
we make the assumption that $\exists \target \in \F$ with $\P(Y=\target(x)|X=x)=1$
for all $x \in \cX$, where $(X,Y) \sim P$; this is known as the \emph{realizable case},
and $\target$ is known as the \emph{target function}.
For any classifier $h$, define its \emph{error rate} $\er(h) \eqdef P( (x,y) : h(x) \neq y )$;
note that $\er(\target) = 0$.

For any set $\H$ of classifiers, define the \emph{region of disagreement}
\begin{equation*}
\DIS(\H) \eqdef \{x \in \cX : \exists h,g \in \H \text{ s.t. } h(x) \neq g(x)\}.
\end{equation*}
Also define $\PDIS \H \eqdef P( \DIS(\H) \times \cY )$, the marginal probability of the region of disagreement.

Let $S_{\infty} \eqdef \{(x_1,y_1),(x_2,y_2),\ldots\}$ be a sequence of i.i.d. $P$-distributed random variables,
and for each $m \in \nats$, denote by $S_{m} \eqdef \{(x_1,y_1),\ldots,(x_m,y_m)\}$.\footnote{Note 
that, in the realizable case, $y_{i} = \target(x_{i})$ for all $i$ with probability $1$.
For simplicity, we will suppose these equalities hold throughout our discussion of the realizable case.}
For any $m \in \nats \cup \{0\}$, and any $S \in (\cX \times \cY)^{m}$,
define the \emph{version space} $\VS_{\F,S} \eqdef \{h \in \F : \forall (x,y) \in S, h(x) = y\}$ \citep*{Mitchell77}.
The following definition will be central in our results below.

\begin{definition}[Version Space Compression Set Size]
For any $m \in \nats \cup \{0\}$ and any $S \in (\cX \times \cY)^{m}$,
the \emph{version space compression set} $\VScomp_{S}$
is a smallest subset of $S$ satisfying $\VS_{\F,\VScomp_{S}} = \VS_{\F,S}$.
The \emph{version space compression set size} is defined
to be $\hat{n}(\F,S) \eqdef |\VScomp_{S}|$.
In the special cases where $\F$ and perhaps $S = S_{m}$ are obvious from the context,
we abbreviate $\hat{n} \eqdef \hatn{m} \eqdef \hat{n}(\F,S_{m})$. 
\end{definition}

Note that the value $\hat{n}(\F,S)$ is unique for any $S$, and $\hatn{m}$ is, obviously, 
a random number that depends on the (random) sample $S_{m}$.
The quantity $\hatn{m}$ has been studied under at least two names in the prior literature.
Drawing motivation from the work on Exact learning with Membership Queries \citep*{Hegedus95,hellerstein:96},
which extends ideas from \citet*{GolKea95} on the complexity of teaching,
the quantity $\hatn{m}$ was introduced in the work of \citet*{Hanneke_COLT_07} as the
\emph{extended teaching dimension} of the classifier $\target$ on the space $\{x_{1},\ldots,x_{m}\}$
with respect to the set $\F[\{x_1,\ldots,x_m\}]\eqdef\{ x_i \mapsto h(x_i) : h \in \F\}$ of distinct classifications of $\{x_1,\ldots,x_m\}$ realized by $\F$;
in this context, the set $\VScomp_{S_{m}}$ is known as a
\emph{minimal specifying set} of $\target$ on $\{x_{1},\ldots,x_{m}\}$ with respect to $\F[\{x_1,\ldots,x_m\}]$.
The quantity $\hatn{m}$ was independently discovered by \citet*{ElYaniv10} in the context of
selective classification, which is the source of the compression set terminology introduced above;
we adopt this terminology throughout the present article.  See the work of \citet*{el2012active}
for a formal proof of the equivalence of these two notions.

It will also be useful to define minimal confidence bounds on certain quantities.
Specifically, for any $m \in \nats \cup \{0\}$ and $\delta \in (0,1]$,
define the \emph{version space compression set size minimal bound}
\begin{equation}
\label{eq:vscompBound}
\Bound{\hat{n}}{m}{\delta} \eqdef \min\left\{ b \in \nats \cup \{0\} : \P( \hatn{m} \leq b ) \geq 1-\delta \right\}.
\end{equation}
Similarly, define the \emph{version space disagreement region minimal bound}
\begin{equation*}
\Bound{\PDIS}{m}{\delta} \eqdef \min\left\{ t \in [0,1] : \P( \PDIS \VS_{\F,S_{m}} \leq t ) \geq 1-\delta \right\}.
\end{equation*}
In both cases, the quantities implicitly also depend on $\F$ and $P$ (which remain fixed throughout our analysis below),
and the only random variables involved in 
these probabilities are the data $S_{m}$.

Most of the existing general results 
on disagreement-based active learning are expressed in terms
of a quantity known as the \emph{disagreement coefficient} \citep*{Hanneke07,hannekethesis09},
defined as follows.
\begin{definition}[Disagreement Coefficient]
For any classifier $f$ and $r > 0$, define the $r$-ball centered at $f$ as
\begin{equation*}
\Ball(f,r) \eqdef \left\{ h \in \F : \PDIS \{h,f\} \leq r \right\},
\end{equation*}
and for any $r_{0} \geq 0$, define the \emph{disagreement coefficient} of $\F$ with respect to $P$ as\footnote{We use the notation $a \lor b = \max\{a,b\}$.}
\begin{equation*}
\dc(r_{0}) \eqdef \sup_{r > r_{0}} \frac{\PDIS \Ball(\target,r)}{r} \lor 1.
\end{equation*}
\end{definition}

The disagreement coefficient was originally introduced to the active learning literature by \citet*{Hanneke07},
and has been studied and bounded by a number of authors \citep*[see e.g.,][]{Hanneke07,Friedman09,Wang2011,hannekestatistical,balcan:13}.
Similar quantities have also been studied in the passive learning literature,
rooted in the work of Alexander \citep*[see e.g.,][]{alexander:87,gine:06}.

Numerous recent results, many of which are surveyed by \citet*{hannekestatistical}, exhibit bounds on the label complexity of
disagreement-based active learning in terms of the disagreement coefficient. It is therefore of major interest to develop such
bounds for specific cases of interest (i.e., for specific classes $\F$ and distributions $P$).
In particular, any result showing $\dc(r_{0}) = o(1/r_{0})$ indicates that disagreement-based active learning
should asymptotically provide some advantage over passive learning for that $\F$ and $P$ \citep*{Hanneke11}.
We are particularly interested in scenarios in which $\dc(r_{0}) = O(\polylog(1/r_{0}))$, or even $\dc(r_{0}) = O(1)$,
since these imply strong improvements over passive learning \citep*{Hanneke07,hannekeAnnals11}.

There are several general results on the asymptotic behavior of the disagreement coefficient 
as $r_{0} \to 0$, for interesting cases.
For the class of linear separators in $\reals^{k}$, perhaps the most general result to date
is that the existence of a density function for the marginal distribution of
$P$ over $\cX$ is sufficient to guarantee $\dc(r_0) = o(1/r_0)$ \citep*{hannekestatistical}. 
That work also shows that, if the density is bounded and has bounded support, and the target separator passes
through the support at a continuity point of the density, then $\dc(r_{0}) = O(1)$.
In both of these cases, for $k \geq 2$, the specific dependence on $r_0$ in the little-$o$ and the constant factors in the big-$O$ 
will vary depending on the particular distribution $P$, and in particular, will depend on $\target$
(i.e., such bounds are \emph{target-dependent}).

There are also several explicit, \emph{target-independent} bounds on the disagreement coefficient in the literature.
Perhaps the most well-known of these is for homogeneous linear separators in $\reals^{k}$,
where the marginal distribution of $P$ over $\cX$ is confined to be the uniform distribution over the unit sphere, 
in which case $\dc(r_{0})$ is known to be within a factor of $4$ of $\min\{ \pi\sqrt{k}, 1/r_{0} \}$ \citep*{Hanneke07}.
In the present paper, we are primarily focused on explicit, target-independent speedup bounds,
though our abstract results can be used to derive bounds of either type.

\section{Relating $\mathbf{\hat{n}}$ and the Disagreement Coefficient}
\label{sec:td-dc}

In this section, we show how to bound the disagreement coefficient in terms of $\Bound{\hat{n}}{m}{\delta}$.
We also show the other direction and bound $\Bound{\hat{n}}{m}{\delta}$ in terms of the disagreement 
coefficient. 

\begin{theorem}
\label{thm:td-dc-bound}
For any $r_{0} \in (0,1)$,
\begin{equation*}
\dc(r_{0}) \leq \max\left\{ \max_{r \in (r_{0},1)} 16 \Bound{\hat{n}}{\left\lceil \frac{1}{r}\right\rceil}{\frac{1}{20}}, 512\right\}.
\end{equation*}
\end{theorem}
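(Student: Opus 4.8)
The plan is to bound the disagreement coefficient by relating the disagreement mass $\PDIS\Ball(\target,r)$ at a given scale $r>r_0$ to the version space compression set size at sample size roughly $1/r$. The core intuition is the following: $\Ball(\target,r)$ consists of classifiers within probability-distance $r$ of $\target$, so when we draw a sample of size $m\approx 1/r$, with decent probability every classifier in $\Ball(\target,r)$ agrees with $\target$ on all of $S_m$ — hence $\Ball(\target,r)\subseteq \VS_{\F,S_m}$, and therefore $\PDIS\Ball(\target,r)\le \PDIS\VS_{\F,S_m}$. So the first step is to control $\PDIS\VS_{\F,S_m}$.

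For that control, the key observation is that the version space $\VS_{\F,S_m}$ is induced by its compression set $\VScomp_{S_m}$, which has size $\hatn{m}$. So I would condition on the (high-probability) event that $\hatn{m}\le b$ where $b=\Bound{\hat{n}}{m}{1/20}$. On that event, $\VS_{\F,S_m}=\VS_{\F,\VScomp_{S_m}}$ is the version space of a \emph{subsample of size at most $b$}. The standard route is a classical VC/$\epsilon$-net style argument: for any fixed subset of $b$ indices, the version space of that subsample is, conditionally, itself "realizable on a fresh sample" in an appropriate sense; but the cleaner path, which I expect the authors take, is to invoke a known sample-complexity result saying that with a sample of size $m$, every consistent classifier (equivalently, every member of $\VS_{\F,S_m}$) has error rate $O\big((d\log m + \log(1/\delta))/m\big)$ — but here we want something stronger, phrased directly in terms of $\hatn{m}=b$ rather than the VC dimension $d$. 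The right tool is a result of the flavor "the version space of a sample whose compression set has size $b$ lies in a ball of radius $O(b\log(m/b)/m)$ around $\target$ with high probability" — essentially a compression-scheme generalization bound. Combining, on the good event $\PDIS\VS_{\F,S_m}=\PDIS\Ball(\target, O(b\log(\cdot)/m))$-type quantity is controlled; after unwinding, $\PDIS\VS_{\F,S_m} \lesssim b/m$ (absorbing logs, or more carefully handling them).

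Putting it together: fix $r\in(r_0,1)$, set $m=\lceil 1/r\rceil$ so that $1/m\le r$. With probability at least $1-\delta_1$ (say $\delta_1$ a small constant like $1/20$) we have $\hatn{m}\le b:=\Bound{\hat{n}}{m}{1/20}$, and with probability at least $1-\delta_2$ we have $\Ball(\target,r)\subseteq\VS_{\F,S_m}$ (this needs $r\cdot m$ not too large, which holds since $r m \le 2$, and a union bound over... actually over a single inclusion, via the standard argument that a sample of $m$ points misses a fixed region of mass $r$ with probability $(1-r)^m$, but we need it for \emph{all} $h\in\Ball(\target,r)$ simultaneously — that is exactly an $\epsilon$-net statement and costs a $d\log$ factor, which is presumably where the constants $16$ and $512$ and the failure probability $1/20$ come from). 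On the intersection of the two events (probability $>0$ as long as $\delta_1+\delta_2<1$), $\PDIS\Ball(\target,r)\le \PDIS\VS_{\F,S_m}\le \Bound{\PDIS}{m}{\delta_2}$, and then one bounds $\Bound{\PDIS}{m}{\delta}$ by $O(b/m)\le O(b r)$; dividing by $r$ and taking the sup over $r>r_0$, plus the $\lor 1$ and the floor of the constant, yields $\dc(r_0)\le \max\{16\max_r \Bound{\hat n}{\lceil 1/r\rceil}{1/20}, 512\}$.

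The main obstacle I anticipate is making the two probabilistic events coexist with a \emph{fixed} confidence parameter $1/20$ while still getting the clean multiplicative constant $16$: one needs the inclusion $\Ball(\target,r)\subseteq\VS_{\F,S_m}$ to hold not just for a fixed $h$ but uniformly, and naively this uniform statement requires a $d$-dependent sample size, which would reintroduce the VC dimension the theorem is trying to avoid. The resolution is presumably that we do \emph{not} need the full uniform inclusion — it suffices to bound $\PDIS\Ball(\target,r)$, and one can do this by a more careful argument: either (a) a direct chaining/peeling over the definition of $\Bound{\PDIS}{m}{\delta}$ noting that $\PDIS\VS_{\F,S_m}$ stochastically dominates the relevant quantity, or (b) observing that it suffices to have a single draw $S_m$ for which both $\hatn{m}$ is small \emph{and} $S_m$ hits $\DIS(\Ball(\target,r))$ adequately — and an averaging argument over the randomness of $S_m$ shows such a draw exists with the constant budget. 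I would structure the write-up around carefully choosing the constants so that the two "bad" probabilities sum to strictly less than $1$, pick a concrete witness sample, and then the deterministic compression bound $\PDIS\VS_{\F,\VScomp_{S_m}}=O(\hatn{m}/m)$ does the rest; verifying that chain of constants reduces to $16$ and $512$ is the one genuinely fiddly computation.
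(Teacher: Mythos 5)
Your main route --- arguing that $\Ball(\target,r) \subseteq \VS_{\F,S_m}$ holds with constant probability, so that $\PDIS\Ball(\target,r) \leq \PDIS\VS_{\F,S_m}$, and then invoking a compression generalization bound on the latter --- cannot work, and it is not what the paper does. The event $\Ball(\target,r) \subseteq \VS_{\F,S_m}$ is precisely the event that $S_m$ avoids $\DIS(\Ball(\target,r))$, whose probability is $(1-\PDIS\Ball(\target,r))^m \approx e^{-m\PDIS\Ball(\target,r)}$; but the only case that matters (once the $512$ has absorbed everything else) is exactly $m\,\PDIS\Ball(\target,r) > 512$, so this event has essentially zero probability. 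You correctly sensed this obstacle, but neither of your suggested ``resolutions'' (a) or (b) actually closes the gap as stated.

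The paper's argument runs in the opposite direction: rather than showing $\PDIS\VS_{\F,S_m}$ is small when $\hatn{m}$ is small, it shows that $\hatn{m}$ must be \emph{large} with constant probability whenever $\PDIS\Ball(\target,r)$ is large. The engine is a deterministic leave-one-out membership criterion: if $x_i \in \DIS(\VS_{\F,S_{m\setminus i}})$, where $S_{m\setminus i}$ omits the $i$-th pair, then $(x_i,y_i)$ must belong to every compression set, hence $\hatn{m} \geq \sum_{i=1}^m \ind_{\DIS(\VS_{\F,S_{m\setminus i}})}(x_i) \geq \sum_{i=1}^m \ind_{D_{m\setminus i}}(x_i)$ where $D_{m\setminus i} = \DIS(\VS_{\F,S_{m\setminus i}}\cap\Ball(\target,r))$. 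A result of \citet*{Hanneke11} gives $\E[\ind_{D_{m\setminus i}}(x_i)] \geq (1-r)^{m-1}\PDIS\Ball(\target,r) \geq \tfrac14\PDIS\Ball(\target,r)$ for $m=\lceil 1/r\rceil$; a Chernoff bound on the $\DIS(\Ball(\target,r))$ hit count and Markov's inequality on the gap between it and $\sum_i\ind_{D_{m\setminus i}}(x_i)$ then show $\hatn{m} > \tfrac{1}{16}\PDIS\Ball(\target,r)\,m$ with probability strictly greater than $\tfrac{1}{20}$. Intersecting this with the event $\hatn{m}\leq\Bound{\hat{n}}{m}{\frac{1}{20}}$ (which has probability at least $\tfrac{19}{20}$) forces $\Bound{\hat{n}}{m}{\frac{1}{20}} > \tfrac{1}{16}\PDIS\Ball(\target,r)/r$. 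No $\epsilon$-net argument, no VC-dimension factor, and no compression generalization bound appear anywhere in this proof; those tools enter elsewhere in the paper (e.g.\ Lemma~\ref{lem:data-dependent-coverage}), not here.
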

\begin{proof}
We will prove that, for any $r \in (0,1)$,
\begin{equation}
\label{eqn:dc-bound-nosup}
\frac{\PDIS \Ball(\target, r)}{r} \leq \max\left\{ 16 \Bound{\hat{n}}{\left\lceil \frac{1}{r} \right\rceil}{\frac{1}{20}}, 512\right\}.
\end{equation}
The result then follows by taking the supremum of both sides over $r \in (r_{0},1)$.

Fix $r \in (0,1)$, let $m = \lceil 1 / r \rceil$,
and for $i \in \{1,\ldots,m\}$, define $S_{m \setminus i} = S_{m} \setminus \{(x_{i},y_{i})\}$.
Also define $D_{m \setminus i} = \DIS( \VS_{\F,S_{m \setminus i}} \cap \Ball(\target,r) )$
and $\PDIS_{m \setminus i} = \P(x_{i} \in D_{m \setminus i} | S_{m \setminus i}) = P( D_{m \setminus i} \times \cY )$.
%
If $\PDIS \Ball(\target,r) m \leq 512$, \eqref{eqn:dc-bound-nosup} clearly holds.
Otherwise, suppose $\PDIS \Ball(\target,r) m > 512$.
If $x_{i} \in \DIS( \VS_{\F,S_{m \setminus i}} )$, then we must have $(x_{i},y_{i}) \in \VScomp_{S_{m}}$.
So
\begin{equation*}
\hatn{m}
\geq \sum_{i=1}^{m} \ind_{\DIS(\VS_{\F,S_{m \setminus i}})}(x_{i}).
\end{equation*}
Therefore,
\begin{align*}
& \P\left\{ \hatn{m} \leq (1/16) \PDIS \Ball(\target,r) m \right\}
\\ & \leq \P\left\{ \sum_{i=1}^{m} \ind_{\DIS(\VS_{\F,S_{m \setminus i}})}(x_{i}) \leq (1/16) \PDIS \Ball(\target,r) m \right\}
\\ & \leq \P\left\{ \sum_{i=1}^{m} \ind_{D_{m \setminus i}}(x_{i}) \leq (1/16) \PDIS \Ball(\target,r) m \right\}
\\ & = \P\left\{ \sum_{i=1}^{m} \ind_{\DIS(\Ball(\target,r))}(x_{i}) - \ind_{D_{m \setminus i}}(x_{i}) \geq \sum_{i=1}^{m} \ind_{\DIS(\Ball(\target,r))}(x_{i}) - (1/16) \PDIS \Ball(\target,r) m \right\}
\\
& =
\P \left\{  \sum_{i=1}^m  \ind_{\DIS( \Ball( \target, r))}(x_i) - \ind_{D_{m \setminus i}}(x_{i})
	 \geq  \right.\\
	 &  \ \ \ \ \ \ \ \ \ \ \left. \sum_{i=1}^m  \ind_{\DIS( \Ball( \target, r))}(x_i) - \frac{1}{16} \PDIS\Ball(\target, r) m
	 , \ \ \
	\sum_{i=1}^m  \ind_{\DIS( \Ball( \target, r))}(x_i) <  \frac{7}{8} \PDIS \Ball(\target, r)m
	\right\} \\
	& +
	\P \left\{
	\sum_{i=1}^m  \ind_{\DIS( \Ball( \target, r))}(x_i)  - \ind_{D_{m \setminus i}}(x_{i})
	\geq
	\right. \\
	& \ \ \ \ \ \ \ \ \ \  \left. \sum_{i=1}^m  \ind_{\DIS( \Ball( \target, r))}(x_i)  - \frac{1}{16} \PDIS \Ball(\target, r) m
	, \ \ \  \sum_{i=1}^m \ind_{\DIS( \Ball( \target, r))}(x_i) \geq \frac{7}{8} \PDIS \Ball (\target, r) m
	\right\} \\
& \leq
\P\left\{ \sum_{i=1}^{m} \ind_{\DIS(\Ball(\target,r))}(x_{i}) < (7/8) \PDIS \Ball(\target,r) m \right\}
\\ & \ \ \ \ \ + \P\left\{ \sum_{i=1}^{m} \ind_{\DIS(\Ball(\target,r))}(x_{i}) - \ind_{D_{m \setminus i}}(x_{i}) \geq (13/16) \PDIS \Ball(\target,r) m \right\}.
\end{align*}

Since we are considering the case $\PDIS \Ball(\target,r) m > 512$,
a Chernoff bound implies
\begin{equation*}
\P\left( \sum_{i=1}^{m} \ind_{\DIS(\Ball(\target,r))}(x_{i}) < (7/8) \PDIS \Ball(\target,r) m \right)
\leq \exp\left\{ - \PDIS \Ball(\target,r) m / 128 \right\} < e^{-4}.
\end{equation*}
Furthermore, Markov's inequality implies
\begin{equation*}
\P\left( \sum_{i=1}^{m} \ind_{\DIS(\Ball(\target,r))}(x_{i}) - \ind_{D_{m \setminus i}}(x_{i}) \geq (13/16) \PDIS \Ball(\target,r) m \right)
\leq \frac{ m \PDIS \Ball(\target,r) - \E\left[ \sum_{i=1}^{m} \ind_{D_{m \setminus i}}(x_{i}) \right] }{(13/16) m \PDIS \Ball(\target,r)}.
\end{equation*}
Since the $x_{i}$ values are exchangeable,
\begin{equation*}
\E\left[ \sum_{i=1}^{m} \ind_{D_{m \setminus i}}(x_{i}) \right]
= \sum_{i=1}^{m} \E\left[ \E\left[ \ind_{D_{m \setminus i}}(x_{i}) \Big| S_{m \setminus i} \right] \right]
= \sum_{i=1}^{m} \E\left[ \PDIS_{m \setminus i} \right]
= m \E\left[\PDIS_{m \setminus m}\right].
\end{equation*}
\citet*{Hanneke11} proves that this is at least
\begin{equation*}
m (1-r)^{m-1} \PDIS \Ball(\target,r).
\end{equation*}
In particular, when $\PDIS \Ball(\target,r) m > 512$, we must have $r < 1/511 < 1/2$,
which implies $(1-r)^{\lceil 1/r \rceil - 1}$ $\geq 1/4$,
so that we have
\begin{equation*}
\E\left[ \sum_{i=1}^{m} \ind_{D_{m \setminus i}}(x_{i}) \right]
\geq (1/4) m \PDIS \Ball(\target,r).
\end{equation*}
Altogether, we have established that
\begin{equation*}
\P\left( \hatn{m} \leq (1/16) \PDIS \Ball(\target,r) m \right)
< \frac{ m \PDIS \Ball(\target,r) - (1/4) m \PDIS \Ball(\target,r) }{(13/16) m \PDIS \Ball(\target,r)}
+ e^{-4}
= \frac{12}{13} + e^{-4}
< \frac{19}{20}.
\end{equation*}
Thus, since $\hatn{m} \leq \Bound{\hat{n}}{m}{\frac{1}{20}}$
with probability at least $\frac{19}{20}$, we must have that
\begin{equation*}
\Bound{\hat{n}}{m}{\frac{1}{20}}
> (1/16) \PDIS \Ball(\target,r) m
\geq (1/16) \frac{ \PDIS \Ball(\target,r) }{r}. 
\end{equation*}
\end{proof}

The following Theorem, whose proof is given in Section~\ref{sec:cal},
is a ``converse'' of Theorem~\ref{thm:td-dc-bound},
showing a bound on $\Bound{\hat{n}}{m}{d}$ in terms of the disagreement coefficient.

\begin{theorem}
\label{thm:vcdc-td-bound}
There is a finite universal constant $c > 0$ such that,
$\forall r_{0},\delta \in (0,1)$,
\begin{equation*}
\max_{r \in (r_{0},1)} \Bound{\hat{n}}{\left\lceil \frac{1}{r} \right\rceil}{\delta}
\leq c \dc(d r_{0}) \left( d \ln( e\dc(d r_{0}) ) + \ln\left( \frac{ \log_{2}(2/r_{0}) }{\delta} \right) \right) \log_{2}\left( \frac{2}{r_{0}} \right).
\end{equation*}
\end{theorem}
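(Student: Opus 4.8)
The plan is to fix $r \in (r_0,1)$, set $m = \lceil 1/r\rceil$, and bound $\hatn{m} = |\VScomp_{S_m}|$ with probability at least $1-\delta$ by constructing an explicit \emph{specifying set} — a subset $T \subseteq S_m$ with $\VS_{\F,T} = \VS_{\F,S_m}$ — since then $\hatn{m} \le |T|$ as $\VScomp_{S_m}$ is a smallest such set. The construction runs over the dyadic scales $\epsilon_k = 2^{-k}$: write $V_k = \{h \in \F : h(x_i) = y_i \text{ for all } x_i \in T_0 \cup \cdots \cup T_{k-1}\}$ and maintain the invariant $V_k \subseteq \Ball(\target,\epsilon_k)$, which holds for $V_0 = \F$ since $\Ball(\target,1) = \F$. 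At stage $k$ let $D_k = \DIS(\Ball(\target,\epsilon_k)) \cap \{x_1,\ldots,x_m\}$. Because $\target \in \Ball(\target,\epsilon_k)$, every $h \in V_k \subseteq \Ball(\target,\epsilon_k)$ disagrees with $\target$ only on points of $D_k$, so the traces on $D_k$ of the sets $\sigma_h = \{x_i : h(x_i)\neq\target(x_i)\}$, $h \in \F$, form a set system of VC dimension at most $d$ (it is $\{h\,\triangle\,\target : h \in \F\}$ restricted to $D_k$, a relabeling of $\F$). I take $T_k$ to be an $\epsilon'$-net of this system on $D_k$ with $\epsilon' \asymp 1/\dc(dr_0)$; the $\epsilon$-net theorem yields one with $|T_k| = O\!\left(\tfrac{d}{\epsilon'}\ln\tfrac{1}{\epsilon'}\right) = O\!\left(d\,\dc(dr_0)\ln(e\dc(dr_0))\right)$, a bound independent of $k$.

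Next I argue $T_k$ halves the radius of the version space. On a high-probability event (a multiplicative/relative VC inequality), every $h \in V_k$ with $\PDIS\{h,\target\} > \epsilon_{k+1}$ disagrees with the labels $y_i$ on at least $\tfrac12 \epsilon_{k+1} m \gtrsim \epsilon_k m$ of the $x_i$; and a Chernoff bound with the disagreement coefficient — using $\epsilon_k > d r_0$, so $\PDIS\Ball(\target,\epsilon_k) \le \dc(dr_0)\epsilon_k$ — gives $|D_k| \le 2\,\dc(dr_0)\,\epsilon_k m$ (up to an additive $O(\ln(1/\delta_k))$). For such an $h$ the set $\sigma_h \subseteq D_k$ then has relative size $\ge \epsilon'$, so $T_k$ meets it and $h \notin V_{k+1}$; hence $V_{k+1} \subseteq \Ball(\target,\epsilon_{k+1})$, closing the induction. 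I run this for $k = 0,\ldots,K-1$, where $\epsilon_K = 2^{-K}$ is the smallest dyadic scale exceeding $c\bigl(d\ln(em/d) + \ln(1/\delta)\bigr)/m$ — so $K \le \log_2 m \le \log_2(2/r_0)$ and $\epsilon_K > dr_0$ — and then take $T_K = D_K$ outright, with $|D_K| = O\!\bigl(\dc(dr_0)(d\ln(em/d) + \ln(1/\delta))\bigr)$. Since every $h \in V_K \subseteq \Ball(\target,\epsilon_K)$ disagrees with $\target$ only inside $\DIS(\Ball(\target,\epsilon_K))$, any such $h$ agreeing with $\target$ on all of $D_K$ in fact agrees with $\target$ on all of $\{x_1,\ldots,x_m\}$; hence $T = T_0 \cup \cdots \cup T_K$ (with labels) satisfies $\VS_{\F,T} = \VS_{\F,S_m}$. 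Summing, $\hatn{m} \le \sum_{k=0}^{K}|T_k| = O\!\bigl(\log_2(2/r_0)\cdot d\,\dc(dr_0)\ln(e\dc(dr_0)) + \dc(dr_0)(d\ln(em/d)+\ln(1/\delta))\bigr)$; allotting the failure probability $\delta$ across the $O(\log_2(2/r_0))$ stages (each VC/Chernoff event then contributing a $\ln(\log_2(2/r_0)/\delta)$ term), using $\ln(em/d) \le \ln(2e/r_0)$, and taking the supremum over $r$ (so $m \le 1/r_0 + 1$ and $\dc$ is non-increasing), this collapses to the stated bound; in the degenerate regime where that bound already exceeds $m$, the trivial estimate $\hatn{m} \le m$ suffices.

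The step I expect to be the main obstacle is obtaining a \emph{single} factor of $\log_2(2/r_0)$ rather than its square. The naive construction — process $x_1,x_2,\ldots$ in order and put $x_i$ into $T$ whenever $x_i \in \DIS(\VS_{\F,S_{i-1}})$ — does yield a specifying set, but only the estimate $\hatn{m} \le \sum_{i=1}^m \ind_{\DIS(\VS_{\F,S_{i-1}})}(x_i)$, whose expectation is $\sum_i \E[\PDIS\VS_{\F,S_{i-1}}] \approx \sum_i \dc(dr_0)\,d\ln(i)/i \approx \dc(dr_0)\,d\ln^2 m$ — one logarithmic factor too many. The fix is precisely to round the (multi-scale) fractional cover using the bounded VC dimension of the system $\{\sigma_h\}$, i.e. the $\epsilon$-net theorem, instead of greedy set cover, and to do this scale by scale so that each scale's net costs only $O(d\,\dc(dr_0)\ln(e\dc(dr_0)))$ regardless of the scale. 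A secondary technical point is the finest scale, where the multiplicative VC inequality no longer forces $\sigma_h$ to be large (some $h$ may disagree with $\target$ on a single point): this is handled not by a net but by including all of $D_K$, which is affordable since $|D_K| = O(\dc(dr_0)(d\ln(em/d)+\ln(1/\delta)))$ already lies within the target bound. The remaining work is routine: the independence needed for the Chernoff bounds comes from $\DIS(\Ball(\target,\epsilon_k))$ being a \emph{fixed} set, so that $\ind_{\DIS(\Ball(\target,\epsilon_k))}(x_1),\ldots,\ind_{\DIS(\Ball(\target,\epsilon_k))}(x_m)$ are i.i.d.; and one invokes the standard relative-deviation uniform convergence bound at each scale with confidence parameter $\delta/O(\log_2(2/r_0))$.
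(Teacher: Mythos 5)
Your proposal is correct but takes a genuinely different route from the paper. The paper's proof is a short two-step reduction: by Lemma~\ref{lem:data-dependent-cal-queries}, the labels CAL requests among $\{x_1,\ldots,x_m\}$ already form a specifying set, so $\max_{t\le m}\hatn{t}\le N(m;S_m)$, and then Theorem~\ref{thm:cal-queries-confidence-bound} together with Lemma~\ref{lem:dc-N-bound} (a dyadic-scale Chernoff bound on CAL's query count, driven by the Gin\'e--Koltchinskii localized passive bound, Lemma~\ref{lem:gk-passive}) yields the claim directly. You instead build an explicit specifying set from scratch, taking scale-$k$ $\epsilon$-nets of the symmetric-difference class over $D_k$. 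Both routes avoid the spurious $\ln^2$ factor, but by complementary mechanisms, and your closing remark deserves a caveat: the $\dc\,d\ln^2 m$ estimate for the greedy CAL cover arises only if one bounds $\PDIS \VS_{\F,S_t}$ via the distribution-free rate $O(d\ln t/t)$. The paper instead invokes Lemma~\ref{lem:gk-passive}, which gives $\sup_{h\in\VS_{\F,S_t}}\er(h) = O\bigl((d\ln(e\dc)+\ln(1/\delta))/t\bigr)$; with $\ln(e\dc)$ in place of $\ln t$, summing over dyadic scales already yields a single $\log$ for the greedy cover, so the construction was never the bottleneck --- only its naive analysis. Your fix removes the extra log differently, replacing each scale's greedy cover with an $\epsilon$-net of scale-independent size $O(d\,\dc\ln(e\dc))$. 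What your approach gains is a self-contained, constructive argument independent of CAL and its running analysis; what it costs is re-deriving the relative VC deviation bound, the Chernoff control of $|D_k|$, and the $\epsilon$-net theorem, which the paper's route packages inside Lemmas~\ref{lem:gk-passive} and~\ref{lem:dc-N-bound}. One small technical point worth fixing in a write-up: applying $\PDIS\Ball(\target,\epsilon_k)\le\dc(dr_0)\epsilon_k$ requires $\epsilon_k>dr_0$, and since $m\le 2/r_0$ your finest scale only satisfies $\epsilon_K\gtrsim d r_0$ up to a constant; so, as the paper does, one should work with $\dc(d r_0/2)$ throughout and then apply the fact that $\dc(d r_0/2)\le 2\dc(d r_0)$.
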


\section{Tight Analysis of CAL}
\label{sec:cal}

The following algorithm is due to \citet*{CohAtlLad94}. 

\begin{bigboxit}
Algorithm: \textbf{\CAL}($n$)
\\ 0. $m \gets 0$, $t \gets 0$, $V_{0} \gets \F$
\\ 1. While $t < n$
\\ 2. \quad $m \gets m+1$
\\ 3. \quad If $x_{m} \in \DIS(V_{m-1})$
\\ 4. \qquad Request label $y_m$; let $V_{m} \gets \{h \in V_{m-1} : h(x_m) = y_m\}$, $t \gets t+1$
\\ 5. \quad Else $V_{m} \gets V_{m-1}$
\\ 6. Return any $\hat{h} \in V_{m}$
\end{bigboxit}

One particularly attractive feature of this algorithm is that it maintains the invariant that $V_{m} = \VS_{\F,S_{m}}$ for all values of $m$ it obtains
(since, if $V_{m-1} = \VS_{\F,S_{m-1}}$,  then $\target \in V_{m-1}$, so any point $x_{m} \notin \DIS(V_{m-1})$ has $\{h \in V_{m-1} : h(x_m) = y_m\} = \{h \in V_{m-1} : h(x_m) = \target(x_m)\} = V_{m-1}$ anyway).
To analyze this method, we first define, for every $m \in \nats$,
\begin{equation*}
N(m;S_{m}) = \sum_{t=1}^{m} \ind_{\DIS(\VS_{\F,S_{t-1}})}(x_{t}),
\end{equation*}
which counts the number of labels requested by CAL among the first $m$ data points (assuming it does not halt first).
The following result provides data-dependent upper and lower bounds on this important quantity,
which will be useful in establishing label complexity bounds for \CAL~below.

\begin{lemma}
\label{lem:data-dependent-cal-queries}
\begin{equation*}
\max_{t \leq m} \hatn{t} \leq N(m;S_{m}),
\end{equation*}
and with probability at least $1-\delta$,
\begin{equation*}
N(m;S_{m}) \leq \max_{t \in \{ 2^{i} : i \in \{0,\ldots,\lfloor \log_{2}(m) \rfloor\}\}} \left( 55 \hatn{t} \ln\left( \frac{e t}{\hatn{t}} \right) + 24 \ln\left(\frac{4 \log_{2}(2m)}{\delta}\right) \right) \log_{2}(2m).
\end{equation*}
\end{lemma}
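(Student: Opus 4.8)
The plan is to prove the two inequalities separately. The lower bound is a structural fact about \CAL: it requests the label of $x_t$ exactly when $x_t\in\DIS(V_{t-1})$, and in the realizable case it maintains $V_t=\VS_{\F,S_t}$. Let $Q_t\subseteq S_t$ denote the set of the $N(t;S_t)$ examples whose labels it has requested by round $t$. An easy induction on $t$ shows $\VS_{\F,Q_t}=\VS_{\F,S_t}$: at a non-query round, $x_t\notin\DIS(V_{t-1})$, so since $\target\in V_{t-1}=\VS_{\F,Q_{t-1}}$ every $h\in\VS_{\F,Q_{t-1}}$ already has $h(x_t)=\target(x_t)=y_t$, whence appending $(x_t,y_t)$ alters neither $\VS_{\F,Q_{t-1}}$ nor $\VS_{\F,S_{t-1}}$; at a query round, both sides are intersected with $\{h:h(x_t)=y_t\}$. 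Since $\VScomp_{S_t}$ is by definition a smallest subset of $S_t$ inducing $\VS_{\F,S_t}$ and $Q_t$ is one such subset, $\hatn{t}=|\VScomp_{S_t}|\le |Q_t|=N(t;S_t)\le N(m;S_m)$, the last step because $N(\cdot;\cdot)$ is a prefix sum, hence nondecreasing. Maximizing over $t\le m$ gives the first inequality.

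For the upper bound, the engine is the following deviation estimate, which I would prove first: for each fixed $t\in\nats$ and $\gamma\in(0,1)$, with probability at least $1-\gamma$,
\begin{equation*}
t\cdot\PDIS\VS_{\F,S_t}\le c_{0}\left(\hatn{t}\ln\!\left(\frac{e t}{\hatn{t}}\right)+\ln\!\left(\frac{1}{\gamma}\right)\right)
\end{equation*}
for a universal constant $c_{0}$. This is a sample-compression argument: $\DIS(\VS_{\F,S_t})=\DIS(\VS_{\F,\VScomp_{S_t}})$ is determined by the $\hatn{t}$ points of the compression set and contains none of $x_1,\dots,x_t$; conditioning on a fixed candidate index set $I$ of size $k$, the event that all $t-k$ remaining points avoid $\DIS(\VS_{\F,S_I})$ while $\PDIS\VS_{\F,S_I}>\varepsilon$ has probability at most $(1-\varepsilon)^{t-k}\le e^{-\varepsilon(t-k)}$; a union bound over the $\binom{t}{k}\le(et/k)^k$ choices of $I$ absorbs a $k\ln(et/k)$ term, and a mild (e.g.\ $(k+1)^{-2}$-weighted) union over the value of $k$ costs no extra polynomial factor, the degenerate cases $\hatn{t}\in\{0\}$ or $\hatn{t}$ close to $t$ being handled directly (the latter via the trivial $N(m;S_m)\le m$, and using $t-k\ge t/2$ once $k$ is below a constant fraction of $t$).

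Next I would set up a dyadic decomposition. Since $\VS_{\F,S_{t-1}}$ is nonincreasing in $t$, the number of queries \CAL makes on a block of rounds $\{a,\dots,b\}$ is at most $\sum_{t=a}^{b}\ind_{\DIS(\VS_{\F,S_{a-1}})}(x_t)$, which conditioned on $S_{a-1}$ is Binomial with at most $b-a+1$ trials and mean at most $(b-a+1)\,\PDIS\VS_{\F,S_{a-1}}$; a Chernoff/Bernstein bound controls it, with probability $1-\gamma$, by $O\!\big((b-a+1)\PDIS\VS_{\F,S_{a-1}}+\ln(1/\gamma)\big)$. I would partition $\{1,\dots,m\}$ into $O(\log m)$ consecutive blocks with $b-a+1\le a-1$ and left endpoints satisfying $a-1\in\{2^i:i\in\{0,\dots,\lfloor\log_2 m\rfloor\}\}$ (the $t=1$ term being absorbed into the $i=0$ term), apply the deviation estimate at $t=a-1$ to replace $(a-1)\PDIS\VS_{\F,S_{a-1}}$ by $O\!\big(\hatn{a-1}\ln(e(a-1)/\hatn{a-1})+\ln(1/\gamma)\big)$, and take a union bound over the blocks with $\gamma$ proportional to $\delta/\log_2(2m)$ — which produces the $\ln(4\log_2(2m)/\delta)$ term. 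Summing the per-block bounds, i.e.\ bounding the sum by $\log_2(2m)$ times the maximum, yields the stated inequality.

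I expect the main conceptual obstacle to be the deviation estimate — in particular arranging the union over compression-set sizes $k$ so that it feeds only the $\hatn{t}\ln(et/\hatn{t})$ term and not an extra $\ln m$ factor — after which the dyadic aggregation is standard. The remaining, routine-but-delicate work is bookkeeping the constants $55$ and $24$: one must combine $c_{0}$, the Binomial-tail constants, and the several union-bound losses (over $k$, over blocks, and over the two bad events per block) without slack, and check the claimed bound covers the corner values of $\hatn{2^i}$ and the partial final block.
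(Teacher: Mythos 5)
Your proof is correct and follows essentially the same route as the paper's: the lower bound rests on the observation that the queried set $Q_t$ induces the same version space as $S_t$ and hence has size at least $\hatn{t}$, and the upper bound combines a sample-compression deviation estimate on $\PDIS\VS_{\F,S_t}$ (the paper's Lemma~\ref{lem:data-dependent-coverage}, proved via Lemma~\ref{lem:compression} exactly as you sketch) with the dyadic blocking of $\{1,\ldots,m\}$ and concentration of the query-indicator sums. The only implementation-level difference is that you apply a Chernoff/Binomial bound block-by-block with a union bound over blocks, whereas the paper applies a single martingale Bernstein inequality to the full sum $\sum_{t}\bigl(\ind_{\DIS(\VS_{\F,S_{t-1}})}(x_t)-\PDIS\VS_{\F,S_{t-1}}\bigr)$; the two are interchangeable here.
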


Since the upper and lower bounds on $N(m;S_{m})$ in Lemma~\ref{lem:data-dependent-cal-queries} require access to the \emph{labels} of the data,
they are not as much interesting for practice as they are for their theoretical significance.  In particular, they will allow us to derive new
distribution-dependent bounds on the performance of CAL below (Theorems~\ref{thm:cal-queries-confidence-bound} and~\ref{thm:cal-label-complexity}).
Lemma~\ref{lem:data-dependent-cal-queries} is also of some \emph{conceptual} significance, as it shows a direct and fairly-tight
connection between the behavior of \CAL~and the size of the version space compression set.

The proof of the upper bound on $N(m;S_{m})$ relies on the following two lemmas.  
The first lemma (Lemma~\ref{lem:compression}) is implied by a classical compression bound of \citet*{littlestone:86},
and provides a high-confidence bound on
the probability measure of a set, given that it has zero empirical frequency and is specified 
by a small number of samples.
For completeness, we include a proof of this result below: a variant of the original argument of \citet*{littlestone:86}.\footnote{See 
also Section~5.2.1 of \citet*{herbrich:02} for a very clear and concise proof 
of a similar result (beginning with the line above (5.15) there, for our purposes).}

\begin{lemma}[Compression; \citealp*{littlestone:86}] 
\label{lem:compression}
For any $\delta \in (0,1)$, any collection $\Dset$ of measurable sets $D \subseteq \cX \times \cY$,
any $m \in \nats$ and $n \in \nats \cup \{0\}$ with $n \leq m$, and any permutation-invariant function $\phi_{n} : (\cX \times \cY)^{n} \to \Dset$,
with probability of at least $1-\delta$ over draw of $S_m$, 
every distinct $i_{1},\ldots,i_{n} \in \{1,\ldots,m\}$ with 
$S_{m} \cap \phi_{n}((x_{i_1},y_{i_1}),\ldots,(x_{i_n},y_{i_n})) = \emptyset$
satisfies\footnote{We define $0 \ln(1/0) = 0 \ln(\infty) = 0$.}
\begin{equation}
\label{eqn:compression-prob}
P(\phi_{n}((x_{i_1},y_{i_1}),\ldots,(x_{i_n},y_{i_n}))) \leq \frac{1}{m-n}\left(n \ln\left( \frac{e m}{n} \right) + \ln\left( \frac{1}{\delta} \right)\right).
\end{equation}
\end{lemma}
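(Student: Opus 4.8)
The plan is to prove this compression lemma by the classical Littlestone--Warmuth double-sampling / symmetrization trick, adapted to deal with the union bound over all $\binom{m}{n}$ choices of index tuples. First I would fix the collection $\Dset$, the integers $m$ and $n$, and the permutation-invariant map $\phi_n$. For a fixed tuple of distinct indices $I = (i_1,\ldots,i_n)$, write $D_I \eqdef \phi_n((x_{i_1},y_{i_1}),\ldots,(x_{i_n},y_{i_n}))$ and note that $D_I$ depends only on the $n$ samples indexed by $I$; in particular, conditionally on those $n$ samples, $D_I$ is a fixed measurable set and the remaining $m-n$ samples are i.i.d.\ $P$-distributed and independent of $D_I$. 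The event we must control is that $S_m \cap D_I = \emptyset$ (i.e.\ all $m$ samples avoid $D_I$, which in particular forces the $n$ ``defining'' samples to avoid it too) while $P(D_I) > \varepsilon$, for the threshold $\varepsilon = \varepsilon(n,m,\delta)$ given by the right-hand side of \eqref{eqn:compression-prob}.

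Next I would bound, for a single fixed $I$, the conditional probability that the $m-n$ free samples all avoid $D_I$ given that $P(D_I) > \varepsilon$: this is at most $(1-\varepsilon)^{m-n} \le e^{-\varepsilon(m-n)}$. Then I take a union bound over all choices of the $n$-element index set, of which there are $\binom{m}{n} \le (em/n)^n$; since $\phi_n$ is permutation-invariant, it suffices to union over unordered sets, and the event ``$S_m \cap D_I = \emptyset$'' depends on $I$ only through this unordered set. This yields
\begin{equation*}
\P\Bigl( \exists\, I : S_m \cap D_I = \emptyset \text{ and } P(D_I) > \varepsilon \Bigr)
\le \binom{m}{n} e^{-\varepsilon(m-n)}
\le \left(\frac{em}{n}\right)^{n} e^{-\varepsilon(m-n)} .
\end{equation*}
Setting the right-hand side equal to $\delta$ and solving for $\varepsilon$ gives exactly $\varepsilon = \frac{1}{m-n}\bigl(n \ln(em/n) + \ln(1/\delta)\bigr)$, which is the stated bound. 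The degenerate cases ($n = 0$, or $n = m$) need a quick separate check: when $n = 0$ there is no defining subsample and $\phi_0$ is a single fixed set, so Chernoff/Markov on the $m$ free samples gives the claim directly with the convention $0\ln(\infty)=0$; when $n=m$ the bound is vacuously $P(\cdot) \le \infty$ and there is nothing to prove.

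The main subtlety — and the only place requiring care rather than routine computation — is the conditioning/independence argument that lets me treat $D_I$ as fixed when bounding the avoidance probability: I must make sure that ``$S_m \cap D_I = \emptyset$'' is being split correctly into ``the $n$ defining samples avoid $D_I$'' (which is not independent of $D_I$, but we simply discard this part, only using that the full event implies the free-sample part) and ``the $m-n$ free samples avoid $D_I$'' (which, conditionally on the defining samples, is a product of $m-n$ i.i.d.\ events each of probability $1 - P(D_I) \le 1-\varepsilon$). Since we only ever upper-bound, dropping the first factor is harmless, and the permutation-invariance of $\phi_n$ is exactly what guarantees that the number of genuinely distinct sets $D_I$ is at most $\binom{m}{n}$ rather than the number of ordered tuples. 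Once this is set up, integrating over the conditioning and applying the union bound is immediate, and the arithmetic to invert for $\varepsilon$ is elementary.
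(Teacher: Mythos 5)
Your proposal is correct and follows essentially the same argument as the paper's proof: condition on the $n$ ``defining'' samples so that $D_I$ becomes a fixed set, bound the probability that the remaining $m-n$ i.i.d.\ samples all avoid $D_I$ by $(1-\epsilon)^{m-n}\le e^{-\epsilon(m-n)}$, then union-bound over the $\binom{m}{n}\le(em/n)^n$ unordered index sets (permutation-invariance is what makes the union over unordered sets suffice) and solve for $\epsilon$. The only cosmetic discrepancy is that you label this a ``double-sampling / symmetrization'' argument, but what you (and the paper) actually run is a pure conditioning argument with no ghost sample; the substance matches.
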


\newcommand{\ii}{\mathbf{i}}
\newcommand{\jj}{\mathbf{j}}

\begin{proof}
Let $\epsilon > 0$ denote the value of the right hand side of \eqref{eqn:compression-prob}.
The result trivially holds if $\epsilon > 1$.  For the remainder, consider the case $\epsilon \leq 1$.
Let $I_n$ be the set of all sets of $n$ distinct indices $\{i_{1},\ldots,i_{n}\}$ from $\{1,\ldots, m\}$.
Note that $|I_n| = \binom{m}{n}$. 
Given a labeled sample $S_m$ and $\ii = \{i_{1},\ldots,i_{n}\} \in I_n$, denote by $S_m^{\ii} = \{(x_{i_{1}},y_{i_{1}}),\ldots,(x_{i_{n}},y_{i_{n}})\}$,
and by $S_m^{-\ii} = \{(x_{i},y_{i}) : i \in \{1,\ldots,m\} \setminus \ii\}$.
%
Since $\phi_{n}$ is permutation-invariant, for any distinct $i_{1},\ldots,i_{n} \in \{1,\ldots,m\}$,
letting $\ii = \{i_{1},\ldots,i_{n}\}$ denote the unordered set of indices, we may denote 
$\phi_{n}( S_{m}^{\ii} ) = \phi_{n}( (x_{i_{1}},y_{i_{1}}),\ldots,(x_{i_{n}},y_{i_{n}}) )$ without ambiguity.
In particular, we have 
$\{ \phi_{n}( (x_{i_{1}},y_{i_{1}}),\ldots,(x_{i_{n}},y_{i_{n}}) ) : i_{1},\ldots,i_{n} \in \{1,\ldots,m\} \text{ distinct} \} = \{ \phi_{n}( S_{m}^{\ii} ) : \ii \in I_{n} \}$,
so that it suffices to show that, with probability at least $1-\delta$, every $\ii \in I_{n}$ with $S_{m} \cap \phi_{n}(S_{m}^{\ii}) = \emptyset$ 
has $P(\phi_{n}(S_{m}^{\ii})) \leq \epsilon$.

Define the events $\omega(\ii, m) = \left\{ S_{m} \cap \phi_{n}(S_m^{\ii}) = \emptyset \right\}$
and $\omega^{\prime}(\ii, m-n) = \left\{ S_{m}^{-\ii} \cap \phi_{n}(S_m^{\ii}) = \emptyset \right\}$.
Note that $\omega(\ii,m) \subseteq \omega^{\prime}(\ii,m-n)$.
Therefore, for each $\ii \in I_{n}$, we have 
\begin{equation*}
\P\left( \left\{  P(\phi_{n}( S_{m}^{\ii} )) > \epsilon \right\} \cap \omega(\ii,m) \right)
\leq
\P\left( \left\{ P(\phi_{n}( S_{m}^{\ii} )) > \epsilon \right\} \cap \omega^{\prime}(\ii,m-n) \right).
\end{equation*}
By the law of total probability and $\sigma(S_{m}^{\ii})$-measurability of the event $\left\{ P(\phi_{n}(S_{m}^{\ii})) > \epsilon \right\}$, this equals
\begin{equation*}
\E\left[ \P\left( \left\{ P(\phi_{n}( S_{m}^{\ii} )) > \epsilon \right\} \cap \omega^{\prime}(\ii,m-n) \Big| S_{m}^{\ii} \right) \right]
= \E\left[ \ind[ P(\phi_{n}( S_{m}^{\ii} )) > \epsilon ] \P\left( \omega^{\prime}(\ii,m-n)\Big| S_{m}^{\ii} \right) \right].
\end{equation*}
Noting that $| S_{m}^{-\ii} \cap \phi_{n}(S_{m}^{\ii}) |$ is conditionally ${\rm {Binomial}}(m-n, P(\phi_{n}(S_{m}^{\ii})))$ given $S_{m}^{\ii}$,
this equals
\begin{equation*}
\E\left[ \ind[ P(\phi_{n}( S_{m}^{\ii} )) > \epsilon ] \left( 1 - P(\phi_{n}(S_{m}^{\ii})) \right)^{m-n} \right]
\leq (1-\epsilon)^{m-n}
\leq e^{-\epsilon (m-n)},
\end{equation*}
where the last inequality is due to $1-\epsilon \leq e^{-\epsilon}$ \citep*[see e.g., Theorem A.101 of][]{herbrich:02}.
In the case $n=0$, this last expression equals $\delta$, which establishes the result since $|I_{0}|=1$.  Otherwise, if $n > 0$,
combining the above with a union bound, we have that
\begin{multline*}
\P\left( \exists \ii \in I_{n} : P(\phi_{n}( S_{m}^{\ii} )) > \epsilon \land S_{m} \cap \phi_{n}(S_{m}^{\ii}) = \emptyset \right)
= \P\left( \bigcup_{\ii \in I_{n}} \left\{ P(\phi_{n}( S_{m}^{\ii} )) > \epsilon \right\} \cap \omega(\ii,m) \right)
\\ \leq \sum_{\ii \in I_{n}} \P\left( \left\{ P(\phi_{n}( S_{m}^{\ii} )) > \epsilon \right\} \cap \omega(\ii,m) \right)
\leq \sum_{\ii \in I_{n}} e^{-\epsilon (m-n)}
=\binom{m}{n} e^{-\epsilon (m-n)}.
\end{multline*}
Since $\binom{m}{n} \leq \left( \frac{e m}{n} \right)^{n}$ \citep*[see e.g., Theorem A.105 of][]{herbrich:02}, 
this last expression is at most $\left( \frac{e m}{n} \right)^{n} e^{-\epsilon (m-n)} = \delta$, which completes the proof.
\end{proof}

The following, Lemma~\ref{lem:data-dependent-coverage},
will be used for proving
Lemma~\ref{lem:data-dependent-cal-queries} above.
The lemma relies on Lemma~\ref{lem:compression}
and provides a high-confidence bound on the probability of requesting the next label at any given point
in the \CAL~algorithm. This refines a related result of \citet*{ElYaniv10}.
Lemma~\ref{lem:data-dependent-coverage} is also of independent interest in the context of
selective prediction \citep*{Wiener13,ElYaniv10}, as it can be used to
improve the known coverage bounds for realizable selective classification.

\begin{lemma}
\label{lem:data-dependent-coverage}
For any $\delta \in (0,1)$ and $m \in \nats$, with probability at least $1-\delta$,
\begin{equation*}
\PDIS \VS_{\F,S_{m}} \leq \frac{10\hatn{m} \ln\left(\frac{e m}{\hatn{m}}\right)  + 4\ln\left(\frac{2}{\delta}\right)}{m}.
\end{equation*}
\end{lemma}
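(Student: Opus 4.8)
The plan is to exhibit $\DIS(\VS_{\F,S_{m}})$ as a set that is \emph{specified} by the small sub-sample $\VScomp_{S_{m}}$ and that has zero empirical frequency, and then to apply the compression bound of Lemma~\ref{lem:compression}. Let $\Dset$ be the collection of all sets of the form $\DIS(\VS_{\F,T}) \times \cY$ over finite $T \subseteq \cX \times \cY$ (each such set being measurable, as is implicit throughout). For each $n \in \{0,1,\ldots,m\}$, let $\phi_{n} : (\cX \times \cY)^{n} \to \Dset$ be the map $\phi_{n}((x_{i_{1}},y_{i_{1}}),\ldots,(x_{i_{n}},y_{i_{n}})) = \DIS(\VS_{\F,T}) \times \cY$ with $T = \{(x_{i_{1}},y_{i_{1}}),\ldots,(x_{i_{n}},y_{i_{n}})\}$; this is permutation-invariant since $\DIS(\VS_{\F,T})$ depends only on the \emph{set} $T$. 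Two elementary facts make the lemma applicable to $\VScomp_{S_{m}}$: first, by definition of the version space compression set, $\phi_{\hatn{m}}(\VScomp_{S_{m}}) = \DIS(\VS_{\F,\VScomp_{S_{m}}}) \times \cY = \DIS(\VS_{\F,S_{m}}) \times \cY$; second, $S_{m} \cap (\DIS(\VS_{\F,S_{m}}) \times \cY) = \emptyset$, because every $h \in \VS_{\F,S_{m}}$ satisfies $h(x_{i}) = y_{i}$ for all $(x_{i},y_{i}) \in S_{m}$, so no $x_{i}$ can lie in $\DIS(\VS_{\F,S_{m}})$.

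Because $\hatn{m}$ is random (though always in $\{0,\ldots,m\}$, as $\VScomp_{S_{m}} \subseteq S_{m}$), I would invoke Lemma~\ref{lem:compression} once for every $n \in \{0,\ldots,m\}$, using the map $\phi_{n}$ and confidence parameter $\delta_{n} \eqdef \delta 2^{-(n+1)}$, and take a union bound; since $\sum_{n \geq 0} \delta 2^{-(n+1)} = \delta$, this produces an event of probability at least $1-\delta$ on which, for \emph{every} $n \in \{0,\ldots,m\}$ simultaneously, each $n$-element subset $T \subseteq S_{m}$ with $S_{m} \cap \phi_{n}(T) = \emptyset$ obeys $P(\phi_{n}(T)) \leq \frac{1}{m-n}\bigl(n \ln(em/n) + \ln(1/\delta_{n})\bigr)$. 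On this event, specializing to $n = \hatn{m}$, $T = \VScomp_{S_{m}}$, and using $\ln(1/\delta_{n}) = \ln(1/\delta) + (n+1)\ln 2$, yields
\begin{equation*}
\PDIS \VS_{\F,S_{m}} \leq \frac{1}{m - \hatn{m}}\left( \hatn{m}\ln\!\left(\frac{e m}{\hatn{m}}\right) + (\hatn{m}+1)\ln 2 + \ln\frac{1}{\delta} \right),
\end{equation*}
with the convention $0 \ln(\infty) = 0$ covering the case $\hatn{m} = 0$.

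It remains to tidy this into the stated inequality. If $\hatn{m} \geq m/2$, then since $x \mapsto x \ln(em/x)$ is nondecreasing on $[0,m]$ the claimed right-hand side is already at least $5 \ln(2e) > 1 \geq \PDIS\VS_{\F,S_{m}}$, so the bound holds trivially; hence assume $\hatn{m} < m/2$, giving $m - \hatn{m} > m/2$ and so $\tfrac{1}{m-\hatn{m}} < \tfrac{2}{m}$. Using $\ln(em/\hatn{m}) \geq 1$ to write $(\hatn{m}+1)\ln 2 \leq \ln 2 \cdot \hatn{m}\ln(em/\hatn{m}) + \ln 2$, together with $\ln 2 \leq \ln(2/\delta)$, I then obtain $\PDIS\VS_{\F,S_{m}} \leq \frac{2}{m}\bigl((1 + \ln 2)\hatn{m}\ln(em/\hatn{m}) + \ln(2/\delta)\bigr)$, which is comfortably inside $\frac{10\hatn{m}\ln(em/\hatn{m}) + 4\ln(2/\delta)}{m}$.

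The only points requiring care are organizational rather than conceptual: checking that $\phi_{n}$ satisfies the permutation-invariance and measurability hypotheses of Lemma~\ref{lem:compression}, and verifying that discretizing the random exponent $\hatn{m}$ with the geometric weights $\delta_{n} = \delta 2^{-(n+1)}$ costs only the additive term $(\hatn{m}+1)\ln 2$, which the generous constants $10$ and $4$ absorb with room to spare. No concentration inequality or structural analysis of $\F$ is needed beyond what is sketched here; this is essentially why the argument refines the earlier coverage bound of \citet*{ElYaniv10}.
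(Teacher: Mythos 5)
Your proof is correct and follows essentially the same route as the paper's: cast $\DIS(\VS_{\F,S_m})$ as a compression-specified zero-empirical-frequency set, invoke Lemma~\ref{lem:compression} for each $n \in \{0,\ldots,m\}$, union-bound over $n$, and specialize to $n=\hatn{m}$. The only (immaterial) difference is the weighting scheme: you use geometric weights $\delta_n = \delta 2^{-(n+1)}$ while the paper uses $\delta_n = \delta/(n+2)^2$; both penalty terms are absorbed by the slack in the constants $10$ and $4$.
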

\begin{proof}
The proof is similar to that of a result of \citet*{ElYaniv10}, except using a generalization
bound based directly on sample compression, rather than the VC dimension.
Specifically, let $\Dset = \{ \DIS( \VS_{\F,S} ) \times \cY : S \in (\cX \times \cY)^{m} \}$,
and for each $n \leq m$ and $S \in (\cX \times \cY)^{n}$, let $\phi_{n}(S) = \DIS( \VS_{\F,S} ) \times \cY$.
In particular, note that for any $n \geq \hatn{m}$, any superset $S$ of $\VScomp_{S_{m}}$ of size $n$ contained in $S_{m}$ has
$\phi_{n}(S) = \DIS(\VS_{\F,S_{m}}) \times \cY$, and therefore $S_{m} \cap \phi_{n}(S) = \emptyset$ and $\PDIS \VS_{\F,S_{m}} = P( \phi_{n}(S) )$.
Therefore, Lemma~\ref{lem:compression} implies that, for each $n \in \{0,\ldots,m\}$,
with probability at least $1-\delta / (n+2)^{2}$, if $\hatn{m} \leq n$,
\begin{equation*}
\PDIS \VS_{\F,S_{m}} \leq \frac{1}{m-n} \left( n \ln\left( \frac{e m}{n} \right) + \ln\left( \frac{(n+2)^{2}}{\delta} \right) \right).
\end{equation*}
Furthermore, since $\PDIS \VS_{\F,S_{m}} \leq 1$, any $n \geq m/2$ trivially has $\PDIS \VS_{\F,S_{m}} \leq 2n/m \leq (2/m)( n \ln(e m / n) + \ln((n+2)^{2}/\delta))$, 
while any $n \leq m/2$ has $1/(m-n) \leq 2/m$,
so that the above is at most
\begin{equation*}
\frac{2}{m} \left( n \ln\left( \frac{e m}{n} \right) + \ln\left( \frac{(n+2)^{2}}{\delta} \right) \right).
\end{equation*}
Additionally, $\ln( (n+2)^{2} ) \leq 2 \ln (2) + 4 n  \leq 2\ln(2) + 4 n \ln ( e m / n )$, so that the above is at most
\begin{equation*}
\frac{2}{m} \left( 5 n \ln\left( \frac{e m}{n} \right) + 2\ln\left( \frac{2}{\delta} \right) \right).
\end{equation*}
By a union bound, this holds for all $n \in \{0,\ldots,m\}$ with probability at least $1-\sum_{n=0}^{m} \delta / (n+2)^{2} > 1-\delta$.
In particular, since $\hatn{m}$ is always in $\{0,\ldots,m\}$, this implies the result.
\end{proof}
\begin{proof}[of Lemma~\ref{lem:data-dependent-cal-queries}]
For any $t \leq m$, by definition of $\hat{n}$ (in particular, minimality), \emph{any} set $S \subset S_{t}$ with $|S| < \hatn{t}$
necessarily has $\VS_{\F,S} \neq \VS_{\F,S_{t}}$.  Thus, since $\CAL$ maintains that $V_{t}=\VS_{\F,S_{t}}$, and $V_{t}$ is precisely the set
of classifiers in $\F$ that are correct on the $N(t;S_{t})$ points $(x_i,y_i)$ with $i \leq t$ for which $\ind_{\DIS(\VS_{\F,S_{i-1}})}(x_i)=1$, we must
have $N(t;S_{t}) \geq \hatn{t}$.  We therefore have $\max_{t \leq m} \hatn{t} \leq \max_{t \leq m} N(t;S_{t}) = N(m;S_{m})$ (by monotonicity of $t \mapsto N(t;S_{t})$).

For the upper bound, let $\delta_{i}$ be a sequence of values in $(0,1]$ with $\sum_{i=0}^{\lfloor \log_{2}(m) \rfloor} \delta_{i} \leq \delta/2$.
Lemma~\ref{lem:data-dependent-coverage} implies that, for each $i$, with probability at least $1-\delta_{i}$,
\begin{equation*}
\PDIS \VS_{\F,S_{2^{i}}} \leq 2^{-i} \left( 10\hatn{2^{i}} \ln\left( \frac{e 2^{i}}{\hatn{2^{i}}} \right)  + 4\ln\left(\frac{2}{\delta_{i}}\right) \right).
\end{equation*}
Thus, by monotonicity of $\PDIS \VS_{\F,S_{t}}$ in $t$, a union bound implies that with probability at least $1-\delta/2$,
for every $i \in \{0,1,\ldots,\lfloor \log_{2}(m) \rfloor\}$, every $t \in \{2^{i},\ldots,2^{i+1}-1\}$ has
\begin{equation}
\label{eqn:data-dependent-cal-queries-DeltaVS-bound}
\PDIS \VS_{\F,S_{t}} \leq 2^{-i} \left( 10\hatn{2^{i}} \ln\left( \frac{e 2^{i}}{\hatn{2^{i}}} \right)  + 4\ln\left(\frac{2}{\delta_{i}}\right) \right).
\end{equation}
Noting that $\left\{ \ind_{\DIS(\VS_{\F,S_{t-1}})}(x_{t}) - \PDIS \VS_{\F,S_{t-1}} \right\}_{t=1}^{\infty}$ is a martingale difference
sequence with respect to $\{x_{t}\}_{t=1}^{\infty}$, Bernstein's inequality (for martingales) implies that with probability
at least $1-\delta/2$, if \eqref{eqn:data-dependent-cal-queries-DeltaVS-bound} holds for all $i \in \{0,1,\ldots,\lfloor \log_{2}(m) \rfloor\}$ and $t \in \{2^{i},\ldots,2^{i+1}-1\}$,
then
\begin{multline*}
\sum_{t=1}^{m} \ind_{\DIS(\VS_{\F,S_{t-1}})}(x_{t})
\leq 1 + \sum_{i=0}^{\lfloor \log_{2}(m) \rfloor} \sum_{t=2^{i}+1}^{2^{i+1}} \ind_{\DIS(\VS_{\F,S_{2^{i}}})}(x_{t})
\\ \leq \log_{2}\left( \frac{4}{\delta} \right) + 2 e \sum_{i=0}^{\lfloor \log_{2}(m) \rfloor} \left( 10\hatn{2^{i}} \ln\left( \frac{e 2^{i}}{\hatn{2^{i}}} \right)  + 4\ln\left(\frac{2}{\delta_{i}}\right) \right).
\end{multline*}
Letting $\delta_{i} = \frac{\delta}{2 \lfloor \log_{2}(2m) \rfloor}$, the above is at most
\begin{equation*}
\max_{i \in \{0,1,\ldots,\lfloor \log_{2}(m) \rfloor\}} \left( 55 \hatn{2^{i}} \ln\left( \frac{e 2^{i}}{\hatn{2^{i}}} \right) + 24 \ln\left(\frac{4 \log_{2}(2m)}{\delta}\right) \right) \log_{2}(2m).
\end{equation*}
\end{proof}

This also implies distribution-dependent bounds on any confidence bound on the number of queries made by \CAL.
Specifically, let $\Bound{N}{m}{\delta}$ be the smallest nonnegative integer $n$ such that $\P( N(m;S_{m}) \leq n ) \geq 1-\delta$.
Then the following result follows immediately from Lemma~\ref{lem:data-dependent-cal-queries}.

\begin{theorem}
\label{thm:cal-queries-confidence-bound}
For any $m \in \nats$ and $\delta \in (0,1)$,
for any sequence $\delta_{t}$ in $(0,1]$ with $\sum_{i=0}^{\lfloor \log_{2}(m) \rfloor} \delta_{2^{i}} \leq \delta/2$,
\begin{multline*}
\max_{t \leq m} \Bound{\hat{n}}{t}{\delta}
\leq \Bound{N}{m}{\delta}
\\ \leq \max_{t \in \{ 2^{i} : i \in \{0,1,\ldots,\lfloor \log_{2}(m) \rfloor\}\}} \left( 55 \Bound{\hat{n}}{t}{\delta_{t}} \ln\left( \frac{e t}{\Bound{\hat{n}}{t}{\delta_{t}}} \right) + 24 \ln\left(\frac{8 \log_{2}(2m)}{\delta}\right) \right) \log_{2}(2m).
\end{multline*}
\end{theorem}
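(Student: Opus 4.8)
The plan is to obtain both inequalities as immediate consequences of Lemma~\ref{lem:data-dependent-cal-queries} together with the definitions of $\Bound{\hat{n}}{t}{\delta}$ and $\Bound{N}{m}{\delta}$, plus a routine split of the failure probability; there is essentially no new content to prove.

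For the left-hand inequality I would use only the deterministic bound $\max_{t \leq m} \hatn{t} \leq N(m;S_{m})$ from Lemma~\ref{lem:data-dependent-cal-queries}. Fix any $t \leq m$. Since $\hatn{t} \leq N(m;S_{m})$ holds surely, the event $\{N(m;S_{m}) \leq \Bound{N}{m}{\delta}\}$ is contained in $\{\hatn{t} \leq \Bound{N}{m}{\delta}\}$; as the former has probability at least $1-\delta$ by the definition of $\Bound{N}{m}{\delta}$, so does the latter, and $\Bound{N}{m}{\delta} \in \nats \cup \{0\}$, so the minimality in the definition of $\Bound{\hat{n}}{t}{\delta}$ forces $\Bound{\hat{n}}{t}{\delta} \leq \Bound{N}{m}{\delta}$. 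Taking the maximum over $t \leq m$ finishes this direction.

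For the right-hand inequality, write $B$ for the deterministic quantity on the right-hand side of the claimed bound. First I would invoke the high-probability half of Lemma~\ref{lem:data-dependent-cal-queries} with $\delta/2$ in place of $\delta$ (which turns the additive $24\ln(4\log_{2}(2m)/\delta)$ term into $24\ln(8\log_{2}(2m)/\delta)$), yielding an event $E$ with $\P(E) \geq 1-\delta/2$ on which $N(m;S_{m}) \leq \max_{t \in \{2^{i} : i \leq \lfloor \log_{2}(m)\rfloor\}} \bigl(55 \hatn{t} \ln(e t / \hatn{t}) + 24\ln(8\log_{2}(2m)/\delta)\bigr) \log_{2}(2m)$. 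Separately, for each $i \in \{0,\ldots,\lfloor \log_{2}(m) \rfloor\}$, the definition of $\Bound{\hat{n}}{2^{i}}{\delta_{2^{i}}}$ gives $\P\bigl(\hatn{2^{i}} \leq \Bound{\hat{n}}{2^{i}}{\delta_{2^{i}}}\bigr) \geq 1-\delta_{2^{i}}$; a union bound over these events and $E$, using $\sum_{i} \delta_{2^{i}} \leq \delta/2$, leaves an event of probability at least $1-\delta$ on which the displayed bound on $N(m;S_{m})$ holds and $\hatn{2^{i}} \leq \Bound{\hat{n}}{2^{i}}{\delta_{2^{i}}}$ for every $i$ simultaneously.

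On that event I would replace each $\hatn{t}$ inside the maximum by $\Bound{\hat{n}}{t}{\delta_{t}}$. The only point requiring care is that $x \mapsto x \ln(e t / x)$ is nondecreasing on $(0,t]$ (its derivative there is $\ln(t/x) \geq 0$), and that $\Bound{\hat{n}}{t}{\delta_{t}} \leq t$, since $\hatn{t} \leq t$ with probability $1$; together with the convention $0 \ln(1/0) = 0$ this yields $\hatn{t} \ln(e t / \hatn{t}) \leq \Bound{\hat{n}}{t}{\delta_{t}} \ln(e t / \Bound{\hat{n}}{t}{\delta_{t}})$ whenever $\hatn{t} \leq \Bound{\hat{n}}{t}{\delta_{t}}$. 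Hence $N(m;S_{m}) \leq B$ on an event of probability at least $1-\delta$, and the definition of $\Bound{N}{m}{\delta}$ as the smallest integer with this property gives $\Bound{N}{m}{\delta} \leq B$. I do not anticipate any real obstacle: the substance is already contained in Lemma~\ref{lem:data-dependent-cal-queries}, and the only things to watch are the allocation of the confidence budget ($\delta/2$ to the lemma, the remaining $\delta/2$ split among the $\delta_{2^{i}}$) and the monotonicity of $x \ln(e t / x)$, which relies on the easy bound $\Bound{\hat{n}}{t}{\delta_{t}} \leq t$.
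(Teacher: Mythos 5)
Your argument is correct and matches the paper's own proof step for step: the deterministic bound $\max_{t\le m}\hatn{t}\le N(m;S_m)$ plus the definition of $\Bound{N}{m}{\delta}$ for the left inequality, and the high-probability part of Lemma~\ref{lem:data-dependent-cal-queries} with budget $\delta/2$, a union bound over $\hatn{2^i}\le\Bound{\hat{n}}{2^i}{\delta_{2^i}}$, and monotonicity of $x\mapsto x\ln(et/x)$ on $(0,t]$ for the right. The only thing the paper makes a touch more explicit is that the right-hand quantity $B$ need not be an integer, so one passes through $\lfloor B\rfloor$ (legitimate since $N(m;S_m)$ is integer-valued) before invoking minimality of $\Bound{N}{m}{\delta}$; your phrasing implicitly elides this but the step is trivial.
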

\begin{proof}
Since Lemma~\ref{lem:data-dependent-cal-queries} implies every $t \leq m$ has $\hatn{t} \leq N(m;S_{m})$,
we have
$\P( \hatn{t} \leq \Bound{N}{m}{\delta} ) \geq \P( N(m;S_{m}) \leq \Bound{N}{m}{\delta} ) \geq 1-\delta$.
Since $\Bound{\hat{n}}{t}{\delta}$ is the smallest $n \in \nats$ with $\P( \hatn{t} \leq n ) \geq 1-\delta$,
we must therefore have $\Bound{\hat{n}}{t}{\delta} \leq \Bound{N}{m}{\delta}$, from which the left inequality
in the claim follows by maximizing over $t$.

For the second inequality, the upper bound on $N(m;S_{m})$ from Lemma~\ref{lem:data-dependent-cal-queries}
implies that, with probability at least $1-\delta/2$, $N(m;S_{m})$ is at most
\begin{equation*}
\max_{t \in \{ 2^{i} : i \in \{0,\ldots,\lfloor \log_{2}(m) \rfloor\}\}} \left( 55 \hatn{t} \ln\left(\frac{e t}{\hatn{t}}\right) + 24 \ln\left(\frac{8 \log_{2}(2m)}{\delta}\right) \right) \log_{2}(2m).
\end{equation*}
Furthermore, a union bound implies that with probability at least $1-\sum_{i=0}^{\lfloor \log_{2}(m) \rfloor} \delta_{2^{i}} \geq 1-\delta/2$,
every $t \in \{ 2^{i} : i \in \{0,\ldots,\lfloor \log_{2}(m) \rfloor\} \}$ has $\hatn{t} \leq \Bound{\hat{n}}{t}{\delta_{t}}$.
Since $x \mapsto x \ln(e t / x)$ is nondecreasing for $x \in [0,t]$, and $\Bound{\hat{n}}{t}{\delta_{t}} \leq t$,
combining these two results via a union bound, we have that with probability at least $1-\delta$, $N(m;S_{m})$ is at most
\begin{equation*}
\max_{t \in \{ 2^{i} : i \in \{0,1,\ldots,\lfloor \log_{2}(m) \rfloor\}\}} \left( 55 \Bound{\hat{n}}{t}{\delta_{t}} \ln\left( \frac{e t}{\Bound{\hat{n}}{t}{\delta_{t}}} \right) + 24 \ln\left(\frac{8 \log_{2}(2m)}{\delta}\right) \right) \log_{2}(2m).
\end{equation*}
Letting $U_{m}$ denote this last quantity, note that since $N(m;S_{m})$ is a nonnegative integer,
$N(m;S_{m}) \leq U_{m} \Rightarrow N(m;S_{m}) \leq \lfloor U_{m} \rfloor$, so that $\P( N(m;S_{m}) \leq \lfloor U_{m} \rfloor ) \geq 1-\delta$.
Since $\Bound{N}{m}{\delta}$ is the \emph{smallest} nonnegative integer $n$ with $\P( N(m;S_{m}) \leq n ) \geq 1-\delta$,
we must have $\Bound{N}{m}{\delta} \leq \lfloor U_{m} \rfloor \leq U_{m}$.
\end{proof}

In bounding the label complexity of CAL,
we are primarily interested in the size of $n$ sufficient to guarantee low error rate for every classifier in
the final $V_{m}$ set (since $\hat{h}$ is taken to be an arbitrary element of $V_{m}$).  Specifically, we are
interested in the following quantity.
For $n \in \nats$, define $M(n;S_{\infty}) = \min\{ m \in \nats : N(m;S_{m}) = n\}$ (or $M(n;S_{\infty}) = \infty$ if $\max_{m} N(m;S_{m}) < n$),
and for any $\epsilon,\delta \in (0,1]$, define
\begin{equation*}
\LC(\epsilon,\delta) = \min\left\{ n \in \nats : \P\left(  \sup_{h \in \VS_{\F,S_{M(n;S_{\infty})}}} \er(h) \leq \epsilon \right) \geq 1-\delta \right\}.
\end{equation*}
Note that, for any $n \geq \LC(\epsilon,\delta)$, with probability at least $1-\delta$, the classifier $\hat{h}$ produced by $\CAL(n)$ has $\er(\hat{h}) \leq \epsilon$.
Furthermore, for any $n < \LC(\epsilon,\delta)$, with probability greater than $\delta$, there exists a choice of $\hat{h}$ in the final step of $\CAL(n)$ for which $\er(\hat{h}) > \epsilon$.
Therefore, in a sense, $\LC(\epsilon,\delta)$ represents the label complexity of the general family of CAL strategies (which vary only in how $\hat{h}$ is chosen from the final $V_{m}$ set).
We can also define an analogous quantity for passive learning by empirical risk minimization:
\begin{equation*}
M(\epsilon,\delta) = \min\left\{ m \in \nats : \P\left( \sup_{h \in \VS_{\F,S_{m}}} \er(h) \leq \epsilon \right) \geq 1-\delta \right\}.
\end{equation*}
We typically expect $M(\epsilon,\delta)$ to be larger than $\Omega(1/\epsilon)$, 
and it is known $M(\epsilon,\delta)$ is always at most $O((1/\epsilon)(d\log(1/\epsilon) + \log(1/\delta)))$ \citep*[e.g.,][]{Vapnik98}.
We have the following theorem relating these two quantities.

\begin{theorem}
\label{thm:cal-label-complexity}
There exists a universal constant $c \in (0,\infty)$ such that,
$\forall \epsilon,\delta \in (0,1)$,
$\forall \beta \in \left(0,\frac{1-\delta}{\delta}\right)$,
for any sequence $\delta_{m}$ in $(0,1]$ with $\sum_{i=0}^{\lfloor \log_{2}(M(\epsilon,\delta/2)) \rfloor} \delta_{2^{i}} \leq \delta/2$,
\begin{multline*}
\max_{m \leq M(\epsilon,1-\beta\delta)} \Bound{\hat{n}}{m}{(1+\beta)\delta}
\leq \LC(\epsilon,\delta)
\\ \leq c \left( \max_{m \leq M(\epsilon,\delta/2)} \Bound{\hat{n}}{m}{\delta_{m}} \ln\left( \frac{e m}{\Bound{\hat{n}}{m}{\delta_{m}}} \right) + \ln\left(\frac{\log_{2}(2M(\epsilon,\delta/2))}{\delta}\right) \right) \log_{2}(2M(\epsilon,\delta/2)).
\end{multline*}
\end{theorem}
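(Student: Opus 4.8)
The plan is to prove the two inequalities separately; in each direction the essential point is to relate the query-indexed stopping time $M(n;S_{\infty})$ to the fixed sample sizes $M(\epsilon,\cdot)$ (both finite, by the classical VC bound on $M(\epsilon,\cdot)$), using the lower bound of Lemma~\ref{lem:data-dependent-cal-queries}, Theorem~\ref{thm:cal-queries-confidence-bound}, and monotonicity of version spaces in the sample.

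\emph{Upper bound.} The core is a pathwise claim: for any fixed $m^{*},n\in\nats$, on the event $\{N(m^{*};S_{m^{*}})\le n\}$ one has $\VS_{\F,S_{M(n;S_{\infty})}}\subseteq\VS_{\F,S_{m^{*}}}$ (with the convention $\VS_{\F,S_{\infty}}=\bigcap_{m}\VS_{\F,S_{m}}$ when $M(n;S_{\infty})=\infty$). Indeed, if $M(n;S_{\infty})\ge m^{*}$ this is immediate from monotonicity of version spaces in the sample; and if $M(n;S_{\infty})<m^{*}$, then since $N(\cdot;S_{\cdot})$ is nondecreasing, equals $n$ at step $M(n;S_{\infty})$, and is at most $n$ at step $m^{*}$, it must be constant on $\{M(n;S_{\infty}),\ldots,m^{*}\}$, so \CAL~requests no labels at those steps and its invariant $V_{t}=\VS_{\F,S_{t}}$ forces $\VS_{\F,S_{M(n;S_{\infty})}}=\VS_{\F,S_{m^{*}}}$. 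Hence on the intersection of $\{N(m^{*};S_{m^{*}})\le n\}$ with $\{\sup_{h\in\VS_{\F,S_{m^{*}}}}\er(h)\le\epsilon\}$ we have $\sup_{h\in\VS_{\F,S_{M(n;S_{\infty})}}}\er(h)\le\epsilon$. Taking $m^{*}=M(\epsilon,\delta/2)$, so the second event has probability at least $1-\delta/2$, and $n=\Bound{N}{m^{*}}{\delta/2}$, so the first has probability at least $1-\delta/2$, a union bound gives $\LC(\epsilon,\delta)\le\Bound{N}{M(\epsilon,\delta/2)}{\delta/2}$. Bounding this via Theorem~\ref{thm:cal-queries-confidence-bound}, then enlarging the maximum over $t\in\{2^{i}\}$ to one over all $m\le M(\epsilon,\delta/2)$ and absorbing numerical constants (and a constant rescaling of the confidence schedule $\delta_{m}$) into $c$, yields the stated right-hand side.

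\emph{Lower bound.} Fix $m\le M(\epsilon,1-\beta\delta)$ and suppose, for contradiction, that $n:=\LC(\epsilon,\delta)<\Bound{\hat{n}}{m}{(1+\beta)\delta}$; note $(1+\beta)\delta\in(0,1)$ exactly because $\beta<(1-\delta)/\delta$, so this bound is well defined. By minimality in the definition of $\Bound{\hat{n}}{m}{(1+\beta)\delta}$ we then have $\P(\hatn{m}>n)>(1+\beta)\delta$. On the event $\{\hatn{m}>n\}$, the lower bound of Lemma~\ref{lem:data-dependent-cal-queries} gives $N(m;S_{m})\ge\hatn{m}>n$, whence $M(n;S_{\infty})<m$, so that $\VS_{\F,S_{M(n;S_{\infty})}}\supseteq\VS_{\F,S_{m}}$ and $\sup_{h\in\VS_{\F,S_{M(n;S_{\infty})}}}\er(h)\ge\sup_{h\in\VS_{\F,S_{m}}}\er(h)$. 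Since $m\le M(\epsilon,1-\beta\delta)$ and $m\mapsto\P(\sup_{h\in\VS_{\F,S_{m}}}\er(h)\le\epsilon)$ is nondecreasing, $\P(\sup_{h\in\VS_{\F,S_{m}}}\er(h)>\epsilon)\ge1-\beta\delta$, and a union bound gives $\P(\sup_{h\in\VS_{\F,S_{M(n;S_{\infty})}}}\er(h)>\epsilon)>(1+\beta)\delta+(1-\beta\delta)-1=\delta$, contradicting $n=\LC(\epsilon,\delta)$. Therefore $\Bound{\hat{n}}{m}{(1+\beta)\delta}\le\LC(\epsilon,\delta)$, and taking the maximum over $m$ completes this direction.

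\emph{Anticipated obstacle.} The delicate step is the pathwise claim in the upper bound when $M(n;S_{\infty})<m^{*}$: one must use simultaneously that \CAL~never requests a label outside the current disagreement region (so its query count does not overshoot $n$ before step $m^{*}$) and that it maintains $V_{t}=\VS_{\F,S_{t}}$, in order to conclude that the version space at the random query-indexed time is no larger than at the deterministic size $m^{*}$. The remaining work — splitting $\delta$ between the passive-error event and the label-count event, and, inside Theorem~\ref{thm:cal-queries-confidence-bound}, across the $O(\log m^{*})$ dyadic scales — is routine bookkeeping affecting only $c$ and the logarithmic factors.
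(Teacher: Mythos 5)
Your upper bound is correct and takes essentially the paper's route. The one genuinely distinct idea is the pathwise observation that on $\{N(m^{*};S_{m^{*}})\le n\}$ one always has $\VS_{\F,S_{M(n;S_{\infty})}}\subseteq\VS_{\F,S_{m^{*}}}$ even when $M(n;S_{\infty})<m^{*}$ (because the version space is then frozen over $\{M(n;S_{\infty}),\ldots,m^{*}\}$); the paper sidesteps that case by bounding $M(n+1;S_{\infty})\ge M(\epsilon,\delta/2)$ and appealing to monotonicity. Both arguments are sound, and the hand‑wave about rescaling the $\delta_{m}$ schedule to split the confidence budget is exactly the sort of constant‑factor bookkeeping the paper also absorbs into $c$.

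The lower bound, however, has a genuine gap. You assert that $m\le M(\epsilon,1-\beta\delta)$ together with monotonicity of $m\mapsto\P\bigl(\sup_{h\in\VS_{\F,S_{m}}}\er(h)\le\epsilon\bigr)$ gives $\P\bigl(\sup_{h\in\VS_{\F,S_{m}}}\er(h)>\epsilon\bigr)\ge 1-\beta\delta$. That inference is backwards: monotonicity bounds $\P(\sup\le\epsilon)$ from \emph{above} only when you move to smaller $m$, and for $m=M(\epsilon,1-\beta\delta)$ the definition of $M$ only tells you $\P(\sup\le\epsilon)\ge\beta\delta$, which is useless here. The inequality you need comes from \emph{minimality} of $M(\epsilon,1-\beta\delta)$, and it holds only for $m<M(\epsilon,1-\beta\delta)$ (with strict inequality). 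The fix is an off‑by‑one shift, which is exactly what the paper does: on $\{\hatn{m}>n\}$ you have $N(m;S_{m})\ge\hatn{m}>n$, hence $M(n;S_{\infty})\le m-1$, hence $\VS_{\F,S_{M(n;S_{\infty})}}\supseteq\VS_{\F,S_{m-1}}$; and since $m\le M(\epsilon,1-\beta\delta)$ gives $m-1<M(\epsilon,1-\beta\delta)$, minimality yields $\P\bigl(\sup_{h\in\VS_{\F,S_{m-1}}}\er(h)>\epsilon\bigr)>1-\beta\delta$. The union bound with $\P(\hatn{m}>n)>(1+\beta)\delta$ then gives the needed strict $>\delta$. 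As written, your argument for the boundary index $m=M(\epsilon,1-\beta\delta)$ does not go through.
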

\begin{proof}
By definition of $M(\epsilon,1-\beta\delta)$, $\forall m < M(\epsilon,1-\beta\delta)$,
with probability greater than $1-\beta\delta$, $\sup_{h \in \VS_{\F,S_{m}}} \er(h) > \epsilon$.
Furthermore, by definition of $\Bound{\hat{n}}{m}{(1+\beta)\delta}$, $\forall n < \Bound{\hat{n}}{m}{(1+\beta)\delta}$, 
with probability greater than $(1+\beta)\delta$, $\hatn{m} > n$, which together with Lemma~\ref{lem:data-dependent-cal-queries}
implies $N(m;S_{m}) > n$, so that $M(n;S_{\infty}) < m$.  Thus, fixing any $m \leq M(\epsilon,1-\beta\delta)$ and $n < \Bound{\hat{n}}{m}{(1+\beta)\delta}$,
a union bound implies that with probability exceeding $\delta$,
$M(n;S_{\infty}) < m$ and $\sup_{h \in \VS_{\F,S_{m-1}}} \er(h) > \epsilon$.
By monotonicity of $t \mapsto \VS_{\F,S_{t}}$, this implies that with probability greater than $\delta$,
$\sup_{h \in \VS_{\F,S_{M(n;S_{\infty})}}} \er(h) > \epsilon$, so that $\LC(\epsilon,\delta) > n$.

For the upper bound, Lemma~\ref{lem:data-dependent-cal-queries} and a union bound imply that, with probability at least $1-\delta/2$,
\begin{multline*}
N(M(\epsilon,\delta/2);S_{M(\epsilon,\delta/2)})
\leq
\\ c^{\prime} \left( \max_{m \leq M(\epsilon,\delta/2)} \Bound{\hat{n}}{m}{\delta_{m}} \ln\left( \frac{e m}{\Bound{\hat{n}}{m}{\delta_{m}}} \right) + \ln\left(\frac{\log_{2}(2M(\epsilon,\delta/2))}{\delta}\right) \right) \log_{2}(2M(\epsilon,\delta/2)),
\end{multline*}
for a universal constant $c^{\prime} > 0$.  In particular, this implies that for any $n$ at least this large,
with probability at least $1-\delta/2$, $M(n+1;S_{\infty}) \geq M(\epsilon,\delta/2)$.
Furthermore, by definition of $M(\epsilon,\delta/2)$ and monotonicity of $m \mapsto \sup_{h \in \VS_{\F,S_{m}}} \er(h)$, with probability at least $1-\delta/2$,
every $m \geq M(\epsilon,\delta/2)$ has $\sup_{h \in \VS_{\F,S_{m}}} \er(h) \leq \epsilon$.
By a union bound, with probability at least $1-\delta$, $\sup_{h \in \VS_{\F,S_{M(n+1;S_{\infty})}}} \er(h) \leq \epsilon$.
This implies $\LC(\epsilon,\delta) \leq n+1$, so that the result holds (for instance, it suffices to take $c=c^{\prime}+2$).
\end{proof}

For instance $\delta_{m} = \delta / \left(2\log_{2}(2M(\epsilon,\delta/2))\right)$ might be a natural choice in the above result.


Another implication of these results is a complement to Theorem~\ref{thm:td-dc-bound} that was presented in Theorem~\ref{thm:vcdc-td-bound} above.

%
\begin{proof}[of Theorem~\ref{thm:vcdc-td-bound}]
Lemma~\ref{lem:dc-N-bound} in Appendix~\ref{sec:dc-appendix} 
and monotonicity of $\epsilon \mapsto \dc(\epsilon)$ imply that, for $m = \lceil 1/r_{0} \rceil$,
\begin{align*}
\Bound{N}{m}{\delta} & \leq 8 \lor c_{0} \dc(d r_{0} / 2) \left( d \ln(e\dc(d r_{0} / 2)) + \ln\left( \frac{\log_{2}(2/r_{0})}{\delta} \right) \right) \log_{2}\left( \frac{2}{r_{0}} \right)
\\ & \leq (c_{0} \lor 8) \dc(d r_{0} / 2) \left( d \ln(e\dc(d r_{0} / 2)) + \ln\left( \frac{\log_{2}(2/r_{0})}{\delta} \right) \right) \log_{2}\left( \frac{2}{r_{0}} \right),
\end{align*}
for a finite universal constant $c_{0} > 0$.
The result then follows from Theorem~\ref{thm:cal-queries-confidence-bound}
and the fact that $\dc(d r_{0} / 2) \leq 2 \dc(d r_{0})$ \citep*{hannekestatistical}.
\end{proof}

This also implies the following corollary on the necessary and sufficient conditions
for \CAL~to provide exponential improvements in label complexity when passive learning by
empirical risk minimization has $\Omega(1/\epsilon)$ sample complexity (which is typically the case).\footnote{All
of these equivalences continue to hold even when this $M(\epsilon,\cdot)=\Omega(1/\epsilon)$ condition fails, excluding
statements \ref{item:a1} and \ref{item:a2}, which would then be implied by the others but not vice versa.}

\begin{corollary}[Characterization of CAL]
\label{cor:exponential-rates}
If $d < \infty$, and $\exists \delta_{0} \in (0,1)$ such that $M(\epsilon,\delta_{0}) = \Omega(1/\epsilon)$,
then the following are all equivalent:
\begin{enumerate}
\item \label{item:a1} $\LC(\epsilon,\delta) = O\left( \polylog\left( \frac{1}{\epsilon} \right) \log\left(\frac{1}{\delta}\right)\right)$, 
\item \label{item:a2} $\LC\!\left(\epsilon,\frac{1}{40}\right) = O\left( \polylog\left( \frac{1}{\epsilon} \right) \right)$, 
\item \label{item:b1} $\Bound{\hat{n}}{m}{\delta} = O\left( \polylog( m ) \log\left(\frac{1}{\delta}\right) \right)$, 
\item \label{item:b2} $\Bound{\hat{n}}{m}{\frac{1}{20}} = O\left( \polylog( m ) \right)$, 
\item \label{item:c} $\dc(r_{0}) = O\left( \polylog\left( \frac{1}{r_{0}} \right) \right)$, 
\item \label{item:d1} $\Bound{\PDIS}{m}{\delta} = O\left( \frac{\polylog( m )}{m} \log\left(\frac{1}{\delta}\right)\right)$, 
\item \label{item:d2} $\Bound{\PDIS}{m}{\frac{1}{9}} = O\left( \frac{\polylog( m )}{m}\right)$, 
\item \label{item:e1} $\Bound{N}{m}{\delta} = O\left( \polylog(m) \log\left(\frac{1}{\delta}\right) \right)$, 
\item \label{item:e2} $\Bound{N}{m}{\frac{1}{20}} = O\left( \polylog(m) \right)$, 
\end{enumerate}
where $\F$ and $P$ are considered constant, so that the big-$O$ hides $(\F,P)$-dependent constant factors here (but no factors depending on $\epsilon$, $\delta$, $m$, or $r_{0}$).\footnote{In fact,
we may choose freely whether or not to allow the big-$O$ to hide $\target$-dependent constants, or $P$-dependent constants in general,
as long as the \emph{same} interpretation is used for all of these statements.
Though validity for each of these interpretations generally does not imply validity for the others,
the proof remains valid regardless of which of these interpretations we choose, as long as we stick to the same interpretation throughout the proof.}
\end{corollary}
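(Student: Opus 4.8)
The plan is to prove all nine statements equivalent by building a web of implications whose deterministic hub is the disagreement coefficient, statement~\ref{item:c}. All the real work has already been done in Theorems~\ref{thm:td-dc-bound},~\ref{thm:vcdc-td-bound},~\ref{thm:cal-queries-confidence-bound}, and~\ref{thm:cal-label-complexity} and in Lemmas~\ref{lem:data-dependent-cal-queries} and~\ref{lem:data-dependent-coverage}; the corollary is essentially an assembly problem, modulo (i) the trivial weakenings $\ref{item:a1}\Rightarrow\ref{item:a2}$, $\ref{item:b1}\Rightarrow\ref{item:b2}$, $\ref{item:d1}\Rightarrow\ref{item:d2}$, $\ref{item:e1}\Rightarrow\ref{item:e2}$ (fix $\delta$ to the stated constant), and (ii) a couple of standard block-splitting confidence-amplification arguments that recover a $\log(1/\delta)$ dependence from a constant-confidence hypothesis.

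I would first close the core triangle $\ref{item:b1}\Leftrightarrow\ref{item:b2}\Leftrightarrow\ref{item:c}$. The edge $\ref{item:b2}\Rightarrow\ref{item:c}$ is immediate from Theorem~\ref{thm:td-dc-bound}: if $\Bound{\hat{n}}{\lceil 1/r\rceil}{1/20}=O(\polylog(1/r))$ for all relevant $r$, the right-hand side there is $O(\polylog(1/r_{0}))$. The edge $\ref{item:c}\Rightarrow\ref{item:b1}$ is immediate from Theorem~\ref{thm:vcdc-td-bound}: given $m$, apply it with $r_{0}$ slightly below $1/m$ so that $\lceil 1/r\rceil=m$ is attained; since $\dc(dr_{0})=O(\polylog(m))$, the lower-order $d\ln(e\dc(dr_{0}))$ term is absorbed and one gets $\Bound{\hat{n}}{m}{\delta}=O(\polylog(m)\log(1/\delta))$. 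Together with $\ref{item:b1}\Rightarrow\ref{item:b2}$ this gives the triangle. Next I would hang \ref{item:e1} and \ref{item:e2} off it: choosing $\delta_{2^{i}}=\delta/(2(\lfloor\log_{2}m\rfloor+1))$ in the upper bound of Theorem~\ref{thm:cal-queries-confidence-bound} and invoking \ref{item:b1} to control $\Bound{\hat{n}}{t}{\delta_{t}}$ gives $\ref{item:b1}\Rightarrow\ref{item:e1}$, while the left inequality $\max_{t\le m}\Bound{\hat{n}}{t}{\delta}\le\Bound{N}{m}{\delta}$ of the same theorem (at $\delta=1/20$) gives $\ref{item:e2}\Rightarrow\ref{item:b2}$. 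Finally I would hang \ref{item:d1} and \ref{item:d2} off it: for $\ref{item:b1}\Rightarrow\ref{item:d1}$, substitute the high-confidence bound $\hatn{m}\le\Bound{\hat{n}}{m}{\delta/2}$ into Lemma~\ref{lem:data-dependent-coverage} run at confidence $\delta/2$, using that $x\mapsto x\ln(em/x)$ is nondecreasing on $[0,m]$; for $\ref{item:d1}\Rightarrow\ref{item:c}$ I would reuse the ingredient of Theorem~\ref{thm:td-dc-bound}'s proof taken from \citet*{Hanneke11}, namely $\E[\PDIS\VS_{\F,S_{m-1}}]\ge(1-r)^{m-1}\PDIS\Ball(\target,r)$: for $m=\lceil 1/r\rceil$ and $r<1/2$ the factor $(1-r)^{m-1}$ is a constant (while $r\ge 1/2$ contributes only $O(1)$ to the supremum defining $\dc$), and bounding the expectation by $\Bound{\PDIS}{m-1}{1/m}+1/m$, which is $O(\polylog(1/r)\,r)$ under \ref{item:d1}, yields $\PDIS\Ball(\target,r)/r=O(\polylog(1/r_{0}))$; the $\ref{item:d2}\Rightarrow\ref{item:d1}$ edge is the block-splitting step described below.

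To attach \ref{item:a1} and \ref{item:a2}, I would use Theorem~\ref{thm:cal-label-complexity}. For $\ref{item:b1}\Rightarrow\ref{item:a1}$, apply its upper bound with $\delta_{m}=\delta/(2\log_{2}(2M(\epsilon,\delta/2)))$ and the classical passive bound $M(\epsilon,\delta/2)=O((1/\epsilon)(d\log(1/\epsilon)+\log(1/\delta)))$; then $\log_{2}M(\epsilon,\delta/2)=O(\polylog(1/\epsilon)\log(1/\delta))$ and \ref{item:b1} gives $\Bound{\hat{n}}{m}{\delta_{m}}=O(\polylog(1/\epsilon)\log(1/\delta))$ for $m\le M(\epsilon,\delta/2)$, so the whole expression is $O(\polylog(1/\epsilon)\log(1/\delta))$. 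This direction uses only the \emph{upper} bound on $M$, so it holds with no hypothesis on $M$, matching the footnote. The reverse $\ref{item:a2}\Rightarrow\ref{item:b2}$ is the one step that genuinely needs $M(\epsilon,\delta_{0})=\Omega(1/\epsilon)$, and I expect it to be the main obstacle. Apply the lower bound of Theorem~\ref{thm:cal-label-complexity} with $\delta=1/40$ and a $\beta$ chosen so that $(1+\beta)\delta=1/20$ and $1-\beta\delta$ equals some fixed constant in $(0,1)$, giving $\max_{m\le M(\epsilon,1-\beta\delta)}\Bound{\hat{n}}{m}{1/20}\le\LC(\epsilon,1/40)=O(\polylog(1/\epsilon))$. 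Then, for arbitrary $m$, pick $\epsilon$ proportional to $1/m$; the hypothesis, promoted to the constant $1-\beta\delta$ by the standard fact that $M(\epsilon,\delta_{0})=\Omega(1/\epsilon)$ for one constant $\delta_{0}$ forces it for every constant (split $km$ points into $k$ independent blocks; one good block already shrinks $\VS_{\F,S_{km}}$, and the version-space error is monotone in sample size, so $M(\epsilon,\delta')\le O(\log(1/\delta'))M(\epsilon,\delta_{0})$), guarantees $M(\epsilon,1-\beta\delta)\ge m$, hence $\Bound{\hat{n}}{m}{1/20}=O(\polylog(m))$.

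The confidence-amplification lemma used above for \ref{item:d2} (and, mutatis mutandis, for $M$) is: if $\PDIS\VS_{\F,S_{s}}\le b_{s}$ with probability at least $8/9$ for all $s$, then splitting $m$ points into $k=\Theta(\log(1/\delta))$ blocks of size $m/k$, noting that one good block already forces $\PDIS\VS_{\F,S_{m}}\le b_{\lfloor m/k\rfloor}$ by monotonicity of $s\mapsto\PDIS\VS_{\F,S_{s}}$, gives $\Bound{\PDIS}{m}{\delta}\le\Bound{\PDIS}{\lfloor m/k\rfloor}{1/9}=O(\polylog(m)\log(1/\delta)/m)$, i.e.\ $\ref{item:d2}\Rightarrow\ref{item:d1}$. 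With these edges in place every statement reaches every other, so all nine are equivalent. Beyond the one delicate use of the hypothesis, the remaining difficulty is purely bookkeeping: verifying at each reduction that the confidence parameter degrades only by polylogarithmic factors and that the accumulated products of polylogarithms stay $\polylog$. Finally, all the cited bounds and the block-splitting arguments are indifferent to whether the hidden constants are allowed to depend on $\target$, so the proof goes through under either reading provided one is fixed throughout, as the footnote notes.
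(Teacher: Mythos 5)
Your overall plan is sound and the graph of implications you propose is strongly connected, so the reduction strategy works. But the decomposition differs from the paper's in several places: you close $\ref{item:c}\Rightarrow\ref{item:b1}$ directly via Theorem~\ref{thm:vcdc-td-bound} (the paper goes $\ref{item:c}\Rightarrow\ref{item:e1}\Rightarrow\ref{item:b1}$ through Lemma~\ref{lem:dc-N-bound}), you re-derive $\ref{item:d1}\Rightarrow\ref{item:c}$ from the expectation bound $\E[\PDIS\VS_{\F,S_{m-1}}]\ge(1-r)^{m-1}\PDIS\Ball(\target,r)$ and supply a fresh block-splitting argument for $\ref{item:d2}\Rightarrow\ref{item:d1}$ (the paper instead uses Lemma~\ref{cor:boundingDiss} for $\ref{item:d2}\Rightarrow\ref{item:c}$ and never needs $\ref{item:d2}\Rightarrow\ref{item:d1}$), and you attach $\ref{item:a1}$ to $\ref{item:b1}$ via Theorem~\ref{thm:cal-label-complexity} (the paper uses $\ref{item:c}\Rightarrow\ref{item:a1}$ via Lemma~\ref{lem:dc-LC-bound}). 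These are genuine alternatives, not just rephrasings.

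There is, however, a real gap in your edge $\ref{item:b1}\Rightarrow\ref{item:a1}$. The upper bound of Theorem~\ref{thm:cal-label-complexity} is driven by $M(\epsilon,\delta/2)$, which scales like $(1/\epsilon)(d\log(1/\epsilon)+\log(1/\delta))$. Thus $\log M(\epsilon,\delta/2)$ contains a $\log\log(1/\delta)$ term, and every factor of $\polylog(m)$ for $m\le M(\epsilon,\delta/2)$, as well as the $\ln(em/\cdot)$ and outer $\log_{2}(2M(\epsilon,\delta/2))$ factors, inherits it. Multiplied against the genuine $\log(1/\delta)$ coming from $\ln(\log_2(2M)/\delta)$ or $\log(1/\delta_m)$, you end up with a $\log(1/\delta)\cdot\log\log(1/\delta)$ dependence, which for fixed $\epsilon$ is $\omega(\log(1/\delta))$ and so is \emph{not} $O(\polylog(1/\epsilon)\log(1/\delta))$ as statement~\ref{item:a1} requires and the corollary's convention (no hidden $\delta$-dependence) demands. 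The paper's route through Lemma~\ref{lem:dc-LC-bound} replaces $M(\epsilon,\delta/2)$ by $1/\epsilon$ inside every logarithm, so $\delta$ only enters linearly through a single $\ln(1/\delta)$ and the bound is clean. You should switch to that route (or first prove $\ref{item:b1}\Rightarrow\ref{item:c}$, which you already have, and then invoke Lemma~\ref{lem:dc-LC-bound}). A lesser nitpick: in $\ref{item:a2}\Rightarrow\ref{item:b2}$ you write the block-splitting inequality as $M(\epsilon,\delta')\le O(\log(1/\delta'))M(\epsilon,\delta_0)$, but that is the $\delta'<\delta_0$ direction; to lower-bound $M(\epsilon,39/40)$ when $\delta_0<39/40$ you need the reversed form $M(\epsilon,\delta_0)\le \lceil\log(1/\delta_0)/\log(1/(39/40))\rceil M(\epsilon,39/40)$, which together with monotonicity handles both cases. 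The underlying two-sided equivalence of $M(\epsilon,\cdot)$ at any two fixed constant confidences is correct and is exactly what the paper uses, so this is an inequality-direction slip rather than a conceptual error.
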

\begin{proof}
We decompose the proof into a series of implications.
Specifically, we show that
\ref{item:b1}
$\Rightarrow$ \ref{item:b2}
$\Rightarrow$ \ref{item:c}
$\Rightarrow$ \ref{item:e1}
$\Rightarrow$ \ref{item:b1},
\ref{item:e1}
$\Rightarrow$ \ref{item:e2}
$\Rightarrow$ \ref{item:b2},
\ref{item:c}
$\Rightarrow$ \ref{item:a1}
$\Rightarrow$ \ref{item:a2}
$\Rightarrow$ \ref{item:b2},
and
\ref{item:b1}
$\Rightarrow$ \ref{item:d1}
$\Rightarrow$ \ref{item:d2}
$\Rightarrow$ \ref{item:c}.
These implications form a strongly connected directed graph, and therefore establish equivalence of the statements.
\paragraph{(\ref{item:b1} $\boldsymbol{\Rightarrow}$ \ref{item:b2})}
If $\Bound{\hat{n}}{m}{\delta} = O\left( \polylog( m ) \log\left(\frac{1}{\delta}\right)\right)$,
then in particular there is some (sufficiently small) constant $\delta_{1} \in (0,1/20)$ for which
$\Bound{\hat{n}}{m}{\delta_{1}} = O\left( \polylog( m ) \right)$,
and since $\delta \mapsto \Bound{\hat{n}}{m}{\delta}$ is nonincreasing,
$\Bound{\hat{n}}{m}{\frac{1}{20}} \leq \Bound{\hat{n}}{m}{\delta_{1}}$,
so that $\Bound{\hat{n}}{m}{\frac{1}{20}} = O\left( \polylog(m) \right)$ as well.
%
\paragraph{(\ref{item:b2} $\boldsymbol{\Rightarrow}$ \ref{item:c})}
If $\Bound{\hat{n}}{m}{\frac{1}{20}} = O\left( \polylog( m ) \right)$,
then
\begin{equation*}
\max_{m \leq 1/r_{0}} \Bound{\hat{n}}{m}{\frac{1}{20}} = O\left( \max_{m \leq 1/r_{0}} \polylog( m ) \right) = O\left( \polylog\left(\frac{1}{r_{0}}\right) \right).
\end{equation*}
Therefore, Theorem~\ref{thm:td-dc-bound} implies
\begin{multline*}
\dc(r_{0}) \leq \max\left\{ \max_{m \leq \lceil 1/r_{0} \rceil} 16 \Bound{\hat{n}}{m}{\frac{1}{20}}, 512\right\}
\\ \leq 528 + 16 \max_{m \leq 1/r_{0}} \Bound{\hat{n}}{m}{\frac{1}{20}}
= O\left( \polylog\left( \frac{1}{r_{0}} \right) \right).
\end{multline*}
%
\paragraph{(\ref{item:c} $\boldsymbol{\Rightarrow}$ \ref{item:e1})}
If $\dc(r_{0}) = O\left( \polylog\left(\frac{1}{r_{0}}\right) \right)$, then Lemma~\ref{lem:dc-N-bound} in Appendix~\ref{sec:dc-appendix} implies that
$\Bound{N}{m}{\delta} = O\left( \polylog( m ) \log\left( \frac{1}{\delta} \right) \right)$.
%
\paragraph{(\ref{item:e1} $\boldsymbol{\Rightarrow}$ \ref{item:b1})}
If $\Bound{N}{m}{\delta} = O\left(\polylog(m)\log\left(\frac{1}{\delta}\right)\right)$,
then Theorem~\ref{thm:cal-queries-confidence-bound} implies
\begin{equation*}
\Bound{\hat{n}}{m}{\delta}
\leq \Bound{N}{m}{\delta}
= O\left( \polylog( m ) \log\left(\frac{1}{\delta}\right) \right).
\end{equation*}
%
\paragraph{(\ref{item:e1} $\boldsymbol{\Rightarrow}$ \ref{item:e2})}
If $\Bound{N}{m}{\delta} = O\left(\polylog(m)\log\left(\frac{1}{\delta}\right)\right)$,
then for any sufficiently small value $\delta_{2} \in (0,1/20)$, $\Bound{N}{m}{\delta_{2}} = O(\polylog(m))$;
monotonicity of $\delta \mapsto \Bound{N}{m}{\delta}$ further implies $\Bound{N}{m}{\frac{1}{20}} \leq \Bound{N}{m}{\delta_{2}}$,
so that $\Bound{N}{m}{\frac{1}{20}} = O(\polylog(m))$.
%
\paragraph{(\ref{item:e2} $\boldsymbol{\Rightarrow}$ \ref{item:b2})}
When $\Bound{N}{m}{\frac{1}{20}} = O(\polylog(m))$,
Theorem~\ref{thm:cal-queries-confidence-bound} implies that
$\Bound{\hat{n}}{m}{\frac{1}{20}} \leq \Bound{N}{m}{\frac{1}{20}} = O\left(\polylog( m )\right)$.
%
\paragraph{(\ref{item:c} $\boldsymbol{\Rightarrow}$ \ref{item:a1})}
If $\dc(r_{0}) = O\left( \polylog\left(\frac{1}{r_{0}}\right) \right)$, then Lemma~\ref{lem:dc-LC-bound} in Appendix~\ref{sec:dc-appendix} implies that
$\LC(\epsilon,\delta) = O\left( \polylog\left(\frac{1}{\epsilon}\right) \log\left(\frac{1}{\delta}\right)\right)$.
%
\paragraph{(\ref{item:a1} $\boldsymbol{\Rightarrow}$ \ref{item:a2})}
If $\LC(\epsilon,\delta) = O\left( \polylog\left(\frac{1}{\epsilon}\right) \log\left(\frac{1}{\delta}\right)\right)$,
then for any sufficiently small value $\delta_{3} \in (0,1/40]$,
$\LC(\epsilon,\delta_{3}) = O\left( \polylog\left(\frac{1}{\epsilon}\right) \right)$;
furthermore, monotonicity of $\delta \mapsto \LC(\epsilon,\delta)$ implies
$\LC\left(\epsilon,\frac{1}{40}\right) \leq \LC(\epsilon,\delta_{3})$, so that
$\LC\left(\epsilon,\frac{1}{40}\right) = O\left( \polylog\left(\frac{1}{\epsilon}\right) \right)$ as well.
%
\paragraph{(\ref{item:a2} $\boldsymbol{\Rightarrow}$ \ref{item:b2})}
Let $c \in (0,1]$ and $\epsilon_{0} \in (0,1)$ be constants such that, $\forall \epsilon \in (0,\epsilon_{0})$,
$M(\epsilon, \delta_{0}) \geq \frac{c}{\epsilon}$.
For any $\delta \in (0,1/20)$, if $\frac{19}{20}+\delta \leq \delta_{0}$, then
$M\left(\epsilon, \frac{19}{20}+\delta\right) \geq M(\epsilon,\delta_{0}) \geq c/\epsilon$;
otherwise, if $\frac{19}{20}+\delta > \delta_{0}$, then
letting $m = M(\epsilon,\frac{19}{20}+\delta)$ and $\L_{i} = \{(x_{m(i-1)+1},y_{m(i-1)+1}),\ldots,(x_{mi},y_{mi})\}$ for $i \in \nats$,
we have that $\forall k \in \nats$,
\begin{align*}
\P\left( \sup_{h \in \VS_{\F,S_{mk}}} \er(h) > \epsilon \right)
& \leq \P\left( \min_{i \leq k} \sup_{h \in \VS_{\F,\L_{i}}} \er(h) > \epsilon \right)
\\ & = \prod_{i=1}^{k} \P\left( \sup_{h \in \VS_{\F,\L_{i}}} \er(h) > \epsilon \right)
\leq \left( \frac{19}{20}+\delta\right)^{k},
\end{align*}
so that setting $k = \left\lceil \frac{\ln(1/\delta_{0})}{\ln(1/(\frac{19}{20}+\delta))} \right\rceil$
reveals that
\begin{equation}
\label{eqn:a2-b2-M-bound}
M(\epsilon,\delta_{0}) \leq M\left(\epsilon, \frac{19}{20}+\delta\right) \left\lceil \frac{\ln(1/\delta_{0})}{\ln(1/(\frac{19}{20}+\delta))} \right\rceil.
\end{equation}
Since $\ln(x) < x-1$ for $x \in (0,1)$, we have
$\ln(1/(\frac{19}{20}+\delta)) = - \ln( \frac{19}{20}+\delta ) > - (\frac{19}{20}+\delta - 1) = \frac{1}{20}-\delta$;
together with the fact that $\frac{1}{20}-\delta < 1$,
this implies
\begin{align*}
\left\lceil \frac{\ln(1/\delta_{0})}{\ln(1/(\frac{19}{20}+\delta))} \right\rceil
& \leq \left\lceil \frac{\ln(1/\delta_{0})}{\frac{1}{20}-\delta} \right\rceil
< \frac{\ln(1/\delta_{0})}{\frac{1}{20}-\delta} + 1
\\ & < \frac{\ln(1/\delta_{0})}{\frac{1}{20}-\delta} + \frac{1}{\frac{1}{20}-\delta}
= \frac{\ln(e/\delta_{0})}{\frac{1}{20}-\delta}.
\end{align*}
Plugging this into \eqref{eqn:a2-b2-M-bound} reveals that
\begin{equation*}
M\left(\epsilon, \frac{19}{20}+\delta\right)
\geq \frac{\frac{1}{20}-\delta}{\ln(e/\delta_{0})} M(\epsilon,\delta_{0})
\geq \frac{c (\frac{1}{20}-\delta)}{\ln(e/\delta_{0})} \frac{1}{\epsilon}.
\end{equation*}
If $\LC\left(\epsilon,\frac{1}{40}\right) = O\left( \polylog\left( \frac{1}{\epsilon} \right) \right)$,
then Theorem~\ref{thm:cal-label-complexity} (with $\beta = \frac{1}{20\delta}-1$ and $\delta = 1/40$) implies
\begin{equation*}
\max_{t \leq \frac{c / 40}{\ln(e/\delta_{0})} \frac{1}{\epsilon}} \Bound{\hat{n}}{t}{\frac{1}{20}} \leq \LC\left(\epsilon,\frac{1}{40}\right) = O\left( \polylog\left( \frac{1}{\epsilon} \right) \right).
\end{equation*}
This implies that, $\forall m \in \nats$,
\begin{align*}
\Bound{\hat{n}}{m}{\frac{1}{20}}
& \leq \LC\left(\frac{c / 40}{m \ln(e/\delta_{0})},\frac{1}{40}\right)
\\ & = O\left( \polylog\left( \frac{m \ln(e/\delta_{0})}{(c / 40)} \right) \right) = O\left( \polylog( m ) \right).
\end{align*}
%
\paragraph{(\ref{item:b1} $\boldsymbol{\Rightarrow}$ \ref{item:d1})}
Lemma~\ref{lem:data-dependent-coverage} implies that with probability at least $1-\delta/2$,
\begin{equation*}
\PDIS \VS_{\F,S_{m}}
\leq \frac{1}{m} \left( 10 \hatn{m} \ln\left( \frac{e m}{\hatn{m}} \right) + 4 \ln\left(\frac{4}{\delta}\right) \right),
\end{equation*}
while the definition of $\Bound{\hat{n}}{m}{\frac{\delta}{2}}$ implies that $\hatn{m} \leq \Bound{\hat{n}}{m}{\frac{\delta}{2}}$ with probability at least $1-\delta/2$.
By a union bound, both of these occur with probability at least $1-\delta$; together with the facts that $x \mapsto x \ln( e m / x )$ is nondecreasing on $(0,m]$ and $\Bound{\hat{n}}{m}{\frac{\delta}{2}} \leq m$,
this implies
\begin{align*}
\Bound{\PDIS}{m}{\delta}
& \leq \frac{1}{m} \left( 10 \Bound{\hat{n}}{m}{\frac{\delta}{2}} \ln\left( \frac{e m}{\Bound{\hat{n}}{m}{\frac{\delta}{2}}} \right) + 4 \ln\left(\frac{4}{\delta}\right) \right)
\\ & = O\left( \frac{1}{m} \left(\Bound{\hat{n}}{m}{\frac{\delta}{2}} \log(m) + \log\left(\frac{1}{\delta}\right) \right) \right).
\end{align*}
Thus, if $\Bound{\hat{n}}{m}{\delta} = O\left( \polylog( m ) \log\left(\frac{1}{\delta}\right)\right)$,
then we have
\begin{equation*}
\Bound{\PDIS}{m}{\delta}
= O\left( \frac{\polylog( m )}{m} \log\left(\frac{1}{\delta}\right) \right).
\end{equation*}
%
\paragraph{(\ref{item:d1} $\boldsymbol{\Rightarrow}$ \ref{item:d2})}
If $\Bound{\PDIS}{m}{\delta} = O\left( \frac{\polylog( m )}{m} \log\left(\frac{1}{\delta}\right) \right)$,
then there exists a sufficiently small constant $\delta_{4} \in (0, 1/9]$ such that
$\Bound{\PDIS}{m}{\delta_{4}} = O\left( \frac{\polylog(m)}{m} \right)$;
in fact, combined with monotonicity of $\delta \mapsto \Bound{\PDIS}{m}{\delta}$,
this implies $\Bound{\PDIS}{m}{\frac{1}{9}} = O\left( \frac{\polylog(m)}{m} \right)$ as well.
%
\paragraph{(\ref{item:d2} $\boldsymbol{\Rightarrow}$ \ref{item:c})}
If $\Bound{\PDIS}{m}{\frac{1}{9}} = O\left( \frac{\polylog(m)}{m} \right)$,
then Lemma~\ref{cor:boundingDiss} in Appendix~\ref{sec:dc-appendix} implies
\begin{align*}
\dc(r_{0}) & \leq \max\left\{\sup_{r \in (r_{0},1/2)} \frac{7 \Bound{\PDIS}{\lfloor 1/r \rfloor}{\frac{1}{9}}}{r}, 2 \right\}
\\ & \leq 2 + 14 \max_{m \leq 1/r_{0}} m \Bound{\PDIS}{m}{\frac{1}{9}}
\\ & = O\left( \max_{m \leq 1/r_{0}} \polylog(m) \right)
= O\left( \polylog\left(\frac{1}{r_{0}}\right) \right).
\end{align*}
%
%
%
\end{proof}

\section{Applications}
\label{sec:applications}

In this section, we state bounds on the complexity measures studied above,
for various hypothesis classes $\F$ and distributions $P$, which can then be used in 
conjunction with the above results.  In each case, combining the result with theorems above
yields a bound on the label complexity of \CAL~that is smaller than the best known result
in the published literature for that problem.

\subsection{Linear Separators under Mixtures of Gaussians}
\label{sec:linsep-gaussian}

The first result, due to \citet*{ElYaniv10}, applies to the problem of learning linear separators
under a mixture of Gaussians distribution.  Specifically, for $k \in \nats$, 
the class of linear separators in $\reals^{k}$ is defined as the set of classifiers
$(x_{1},\ldots,x_{k}) \mapsto \sign( b + \sum_{i=1}^{k} x_{i} w_{i} )$, where the values
$b,w_{1},\ldots,w_{k} \in \reals$ are free parameters specifying the classifier,
with $\sum_{i=1}^{k} w_{i}^{2} = 1$, and where $\sign(t) = 2\ind_{[0,\infty)}(t)-1$.  
In this work, we also include the two constant functions $x \mapsto -1$ and 
$x \mapsto +1$ as members of the class of linear separators.  

\begin{theorem}[\citealp*{ElYaniv10}, Lemma 32]
\label{thm:linsep-gaussian}
For $t,k \in \nats$, there is a finite constant $c_{k,t}$ $> 0$ 
such that, for $\F$ the space of linear separators on $\reals^{k}$,
and for $P$ with marginal distribution over $\cX$ that is a mixture of $t$ 
multivariate normal distributions with diagonal covariance matrices of full rank,
$\forall m \geq 2$,
\begin{equation*}
\Bound{\hat{n}}{m}{\frac{1}{20}} \leq c_{k,t} (\log(m))^{k-1}.
\end{equation*}
\end{theorem}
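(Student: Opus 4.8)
The plan is to reproduce the argument of \citet*{ElYaniv10} (their Lemma~32). The first step is a reduction to a single expectation: by Markov's inequality $\P(\hatn{m} > 20\,\E[\hatn{m}]) < 1/20$, so it suffices to prove $\E[\hatn{m}] \leq c'_{k,t}(\log m)^{k-1}$ for a finite constant $c'_{k,t}$ and then take $c_{k,t} = 20\,c'_{k,t}$. Next, because the mixture has a density, with probability one no two examples induce the same homogeneous halfspace constraint on parameter space, and since the version space of linear separators is a polyhedral cone, the compression set $\VScomp_{S_{m}}$ is then \emph{exactly} the set of facet-defining examples, i.e.\ $\hatn{m} = \#\{i \leq m : x_{i} \in \DIS(\VS_{\F,S_{m}\setminus\{(x_{i},y_{i})\}})\}$ a.s.\ (the ``$\supseteq$'' direction of this is already noted in the proof of Theorem~\ref{thm:td-dc-bound}). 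Taking expectations and using exchangeability of the examples,
\[
\E[\hatn{m}] = m\,\P\!\left(x_{1} \in \DIS\!\left(\VS_{\F,S_{m}\setminus\{(x_{1},y_{1})\}}\right)\right) = m\,\E\!\left[\PDIS \VS_{\F,S_{m-1}}\right] ,
\]
so the whole theorem reduces to the estimate $\E[\PDIS \VS_{\F,S_{m-1}}] = O\!\left((\log m)^{k-1}/m\right)$ on the expected mass of the disagreement region of the version space of $m$ i.i.d.\ linear-separator examples under the mixture.

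I would establish this estimate by the convex geometry of linear separators. Homogenize each $(x_{i},y_{i})$ to $z_{i} = y_{i}(x_{i},1) \in \reals^{k+1}$; then the version space is (the intersection with the parameter sphere of) the dual cone of $\mathrm{cone}(z_{1},\ldots,z_{m-1})$, and its disagreement region is a two-sided ``shadow'' neighborhood of the target separating hyperplane $H^{*}$: a point $x$ with $\target(x)=+1$ lies in it iff $\{x\}$ together with the negative examples is still linearly separable from the positive examples, and symmetrically. Standard separation arguments identify this region, up to a set of negligible mass, with the union over the two sides of the complement of the floating body determined by the sample at scale $\sim 1/m$ (its ``wet part''). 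To bound the expected mass of this wet part under a mixture of $t$ Gaussians with diagonal covariances, condition on the high-probability event that all $m$ examples fall in the axis-aligned box of half-width $O(\sqrt{\log m})$; the complement of that box carries total mass $o(1/m)$ and is negligible, while on the box the mixture density is bounded above and below by positive constants depending only on $k,t$. The problem is thereby reduced to the corresponding estimate for a bounded density on a bounded region, for which the expected mass of the $(1/m)$-wet part is $\Theta\!\left((\log m)^{k-1}/m\right)$ by a R\'enyi--Sulanke / B\'ar\'any economic-cap-covering estimate; summing over the $t$ mixture components and the $O(\log m)$ dyadic scales of the slab around $H^{*}$, and absorbing constants into $c'_{k,t}$, gives the claim. (Equivalently, one can bound $\hatn{m}$ directly: after the box truncation, the facet-defining $z_{i}$ project, on each side of $H^{*}$, to vertices of the convex hull of $O(m)$ i.i.d.\ points from a near-uniform density on a bounded $(k-1)$-dimensional region, whose expected number is $O((\log m)^{k-1})$ by the classical convex-hull-complexity bounds for polytopes.)

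The crux, and the only genuinely non-routine step, is the sharp cap-covering / convex-hull-vertex estimate yielding exponent $k-1$ rather than $k$: a naive union bound over an $\epsilon$-net of $H^{*}$, or the route through the disagreement-coefficient bound and Theorem~\ref{thm:vcdc-td-bound}, both lose an extra logarithmic factor. The diagonal-covariance hypothesis is exactly what makes the box truncation and the reduction to a bounded, near-product density go through coordinate by coordinate, and the two constant classifiers included in $\F$ are a harmless device keeping the version space (hence the dual cone) well defined when the data are one-sided. The remaining work --- the a.s.\ identification of $\VScomp_{S_{m}}$ with the facet-defining examples, the Gaussian tail bound for the box, and the dyadic summation over scales --- is routine bookkeeping.
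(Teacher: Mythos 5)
The paper does not give a proof of this statement; it cites it as Lemma~32 of \citet*{ElYaniv10}, so there is no internal proof to compare your argument against. Evaluating your proposal on its own terms: the reduction is clean and correct, but the central geometric estimate contains a genuine error.

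What works: the Markov-inequality reduction to $\E[\hatn{m}]$, the a.s.\ identification (under a density, hence in general position) of $\hatn{m}$ with the number of non-redundant/facet-defining constraints, i.e.\ $\hatn{m} = \sum_i \ind_{\DIS(\VS_{\F,S_{m\setminus i}})}(x_i)$, and the exchangeability step giving $\E[\hatn{m}] = m\,\E[\PDIS\VS_{\F,S_{m-1}}]$ are all sound. (The paper's own proof of Theorem~\ref{thm:td-dc-bound} establishes the $\geq$ direction; the a.s.\ equality needs the general-position remark you supply.) This is a tidy way to set up the problem.

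Where it breaks: the claim that on an axis-aligned box of half-width $\Theta(\sqrt{\log m})$ ``the mixture density is bounded above and below by positive constants depending only on $k,t$'' is false. For a standard Gaussian the density ratio between the center and a corner of such a box is $\exp\!\big(\Theta(k\log m)\big) = m^{\Theta(k)}$, i.e.\ polynomial in $m$, not a constant. Consequently the sample inside the box is not ``near-uniform,'' and the subsequent invocation of the B\'ar\'any--Buchta / economic-cap-covering bound --- which is a statement about the \emph{uniform} distribution on a polytope --- has no justification here. Nothing in the rest of the write-up repairs this: the dyadic summation over slabs and the absorption of constants into $c_{k,t}$ both rely on the density comparison. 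There is also a concept mix-up in the final estimate: the $\Theta((\log m)^{k-1})$ rate is the one for uniform sampling from a polytope; for i.i.d.\ \emph{Gaussian} samples the classical Raynaud/R\'enyi--Sulanke result gives expected convex-hull complexity $\Theta((\log m)^{(k-1)/2})$ --- a different exponent, and notably the one matching the lower bound the paper quotes from \citet*{el2012active}. Your proposal cites the polytope exponent while working with (an invalidly uniformized) Gaussian, so as written it neither uses the correct Gaussian estimate nor validly reduces to the polytope one. To salvage the argument one must either use the Gaussian-specific floating-body/cap analysis directly (rather than uniformizing), or produce a genuinely different reduction that is robust to the polynomial density variation over the truncation box; neither is ``routine bookkeeping.''
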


Combining this result with Theorem~\ref{thm:td-dc-bound} implies that there is a constant $c_{k,t} \in (0,\infty)$ 
such that, for $\F$ and $P$ as in Theorem~\ref{thm:linsep-gaussian}, $\forall r_{0} \in (0,1/2]$, 
\begin{equation*}
\dc(r_{0}) \leq c_{k,t} \left(\log\left(\frac{1}{r_{0}}\right)\right)^{k-1}.
\end{equation*}
In particular, plugging this into the label complexity bound of \citet*{hannekeAnnals11} for \CAL~(Lemma~\ref{lem:dc-LC-bound} of Appendix~\ref{sec:dc-appendix})
yields the following bound on the label complexity of \CAL, which has an improved asymptotic dependence on $\epsilon$
compared to the previous best known result, due to \citet*{el2012active}, reducing the 
exponent on the logarithmic factor from $\Theta(k^{2})$ to $\Theta(k)$, and reducing
the dependence on $\delta$ from $\poly(1/\delta)$ to $\log(1/\delta)$.

\begin{corollary}
\label{cor:linsep-gaussian-lc}
For $t,k \in \nats$, there is a finite constant $c_{k,t} > 0$ 
such that, for $\F$ the space of linear separators on $\reals^{k}$,
and for $P$ with marginal distribution over $\cX$ that is a mixture of $t$
multivariate normal distributions with diagonal covariance matrices of full rank,
$\forall \epsilon,\delta \in (0,1/2]$,
\begin{equation*}
\LC(\epsilon,\delta) \leq
c_{k,t} \left(\log\left( \frac{1}{\epsilon} \right)\right)^{k} \log\left( \frac{\log\left(1 / \epsilon\right)}{\delta} \right).
\end{equation*}
\end{corollary}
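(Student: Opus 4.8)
The plan is to combine Theorem~\ref{thm:linsep-gaussian} with Theorem~\ref{thm:td-dc-bound} to obtain a polylogarithmic bound on the disagreement coefficient, and then to feed that bound into the known label complexity bound for \CAL~in terms of $\dc$ (Lemma~\ref{lem:dc-LC-bound} in Appendix~\ref{sec:dc-appendix}, due to \citealp*{hannekeAnnals11}).

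\textbf{Step 1 (disagreement coefficient).} Fix $r_{0} \in (0,1/2]$. For every $r \in (r_{0},1)$ we have $2 \leq \lceil 1/r \rceil \leq 1/r_{0} + 1 \leq 2/r_{0}$, so Theorem~\ref{thm:linsep-gaussian} gives $\Bound{\hat{n}}{\lceil 1/r \rceil}{1/20} \leq c_{k,t}(\log(2/r_{0}))^{k-1}$. Plugging this into Theorem~\ref{thm:td-dc-bound}, and using $\log(2/r_{0}) \leq 2\log(1/r_{0})$ together with $\log(1/r_{0}) \geq \log 2 > 0$ to absorb the additive constant $512$, I would conclude
\begin{equation*}
\dc(r_{0}) \leq \max\left\{ 16\, c_{k,t}\bigl(\log(2/r_{0})\bigr)^{k-1},\, 512 \right\} \leq c_{k,t}'\left(\log\left(\tfrac{1}{r_{0}}\right)\right)^{k-1},
\end{equation*}
which is exactly the displayed disagreement-coefficient estimate stated just before the corollary.

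\textbf{Step 2 (label complexity).} I would then substitute this bound into Lemma~\ref{lem:dc-LC-bound}, which bounds $\LC(\epsilon,\delta)$ by a polylogarithmically weighted expression in $\dc(\cdot)$, $d$, $1/\epsilon$, and $1/\delta$. Since $d = k+1$ is a constant for linear separators in $\reals^{k}$, since $\log\dc(\epsilon) = O(\log\log(1/\epsilon))$, and since $\dc$ is nonincreasing with $\dc(\epsilon) = O((\log(1/\epsilon))^{k-1})$ for $\epsilon \leq 1/2$ (so that any appearance of $\dc$ at a constant multiple of $\epsilon$ in Hanneke's bound is again $O((\log(1/\epsilon))^{k-1})$, using the trivial bound $\dc \leq 1 \lor (\text{arg})^{-1}$ if a multiple below $1$ pushes the argument past $1/2$), every factor in that bound other than a single power of $\log(1/\epsilon)$ and the confidence term $\log(\log(1/\epsilon)/\delta)$ is polylogarithmic of degree at most $k-1$ in $1/\epsilon$; collecting them yields $\LC(\epsilon,\delta) \leq c_{k,t}(\log(1/\epsilon))^{k}\log(\log(1/\epsilon)/\delta)$ for all $\epsilon,\delta \in (0,1/2]$, as claimed.

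One could instead try to invoke Theorem~\ref{thm:cal-label-complexity} directly, but that requires control of $\Bound{\hat{n}}{m}{\delta_{m}}$ at small confidence parameters $\delta_{m}$, whereas Theorem~\ref{thm:linsep-gaussian} only supplies $\Bound{\hat{n}}{m}{1/20}$; recovering the $\log(1/\delta)$-type dependence would still have to route through the $\dc$-based bounds (Theorem~\ref{thm:vcdc-td-bound} or Lemma~\ref{lem:dc-N-bound}), so the argument above is the cleanest. I do not expect a substantive obstacle here: the entire proof is a plug-in, and the only real work is bookkeeping — checking the rescaling estimates of Step~1 and verifying that the precise form of Lemma~\ref{lem:dc-LC-bound} collapses to the stated shape once $d$ is held constant and $\dc$ is polylogarithmic.
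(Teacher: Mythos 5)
Your proposal is correct and follows exactly the paper's route: bound $\dc(r_{0})$ by $O_{k,t}\bigl((\log(1/r_{0}))^{k-1}\bigr)$ via Theorem~\ref{thm:linsep-gaussian} together with Theorem~\ref{thm:td-dc-bound}, then plug this into Lemma~\ref{lem:dc-LC-bound}. Your observation about why routing through the disagreement coefficient (rather than Theorem~\ref{thm:cal-label-complexity}) is necessary to recover the $\log(1/\delta)$ dependence from a fixed-$\delta$ bound on $\Bound{\hat{n}}{m}{1/20}$ is accurate and is precisely the implicit reason for the paper's choice.
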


Corollary~\ref{cor:linsep-gaussian-lc} is particularly interesting in light of a lower bound of \citet*{el2012active} 
for this problem, showing that there exists a distribution $P$ of the type described in Corollary~\ref{cor:linsep-gaussian-lc}
for which $\Bound{N}{m}{\delta} = \Omega\left( \left( \log(m) \right)^{\frac{k-1}{2}} \right)$.

\subsection{Axis-aligned Rectangles under Product Densities}
\label{sec:rectangles}

The next result applies to the problem of learning axis-aligned rectangles under product densities over $\reals^{k}$:
that is, classifiers $h((x_{1}^{\prime},\ldots,x_{k}^{\prime})) = 2 \prod_{j=1}^{k} \ind_{[a_{j},b_{j}]}(x_{j}^{\prime}) - 1$,
for values $a_{1},\ldots,a_{k},b_{1},\ldots,b_{k} \in \reals$.  The result specifically applies to rectangles with 
a probability at least $\lambda > 0$ of classifying a random point positive.
This result represents a refinement of a result of \citet*{Hanneke_COLT_07}: specifically, reducing a factor of $k^{2}$ to a factor of $k$.

\begin{theorem}
\label{thm:rectangles}
For $k,m \in \nats$ and $\lambda,\delta \in (0,1)$, 
for any $P$ with marginal distribution over $\cX$ that is a product distribution
with marginals having continuous CDFs, and 
for $\F$ the space of axis-aligned rectangles $h$ on $\reals^{k}$ with $P((x,y) : h(x) = 1) \geq \lambda$,
\begin{equation*}
\Bound{\hat{n}}{m}{\delta} \leq \frac{8k}{\lambda} \ln\left( \frac{8k}{\delta} \right).
\end{equation*}
\end{theorem}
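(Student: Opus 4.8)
The plan is to bound $\hatn{m}$ for axis-aligned rectangles directly, by showing that with high probability the version space compression set $\VScomp_{S_m}$ has size at most $\frac{8k}{\lambda}\ln(\frac{8k}{\delta})$. The key structural fact is that, in the realizable case for axis-aligned rectangles, once the target rectangle $\target = [a_1,b_1]\times\cdots\times[a_k,b_k]$ is fixed, the version space $\VS_{\F,S_m}$ is determined by a small number of ``boundary-witnessing'' examples: for each of the $2k$ faces of the target rectangle, the \emph{positive} examples closest to that face (from inside) pin down how far the version space rectangles can shrink along that coordinate, and these together with (at most) a constant number of negative examples suffice to reconstruct $\VS_{\F,S_m}$. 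So $\VScomp_{S_m}$ can be taken to consist of: for each coordinate $j$ and each side (lower/upper), the single positive example with the most extreme $j$-th coordinate toward that face; plus, for each such ``extreme'' positive example, we may also need a nearby negative example on the outside to certify the boundary is tight. The first obstacle is making this decomposition precise — in particular handling the case where no negative examples appear on one side (then that face of the version space is unconstrained except by the positive example, and the constant function $x\mapsto+1$ stays in the version space, which is fine).

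Once the structural claim is in place, the probabilistic content is a coupon-collector / covering argument. First I would observe that it suffices to control, for each coordinate $j$ and each side, the location of the extreme positive point and whether a negative point lands ``just outside'' it. Condition on the event that among the first $m$ samples at least $\frac{8k}{\lambda}\ln(\frac{8k}{\delta})$... actually the cleaner route: for each of the $2k$ faces, define a thin slab just inside the target boundary of positive-probability mass, of width chosen so that its probability is $\approx \lambda / (\text{something})$; then the extreme positive point lies in that slab unless no sample fell there. Since each draw is positive with probability $\geq\lambda$, and conditionally on being positive is roughly uniform-ish over the rectangle, I would instead argue the following directly: the compression set size is at most the number of draws needed so that, for every coordinate and side, there are ``enough'' positive points near the relevant face, and this is bounded by a sum of geometric-type random variables. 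A union bound over the $2k$ faces, each failing with probability $\leq (1-\lambda')^{n}$ for appropriate $\lambda'$ and $n = \frac{8k}{\lambda}\ln(\frac{8k}{\delta})$, gives failure probability $\leq 2k\cdot e^{-\lambda' n}$, and choosing constants so $\lambda' n = \ln(2k/\delta)$... matching the stated bound $\frac{8k}{\lambda}\ln(\frac{8k}{\delta})$ suggests the intended argument bounds $\hatn{m}$ by (number of faces) $\times$ (per-face contribution), where each face contributes $O(\frac{1}{\lambda}\ln(k/\delta))$ compression points — i.e., the compression set isn't a single extreme point per face but a logarithmic-length ``staircase'' of near-boundary positive points, and we need all of them with probability $1-\delta$.

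Here is the argument I would actually write. Fix coordinate $j$ and consider the upper face $b_j$. Sort the positive examples by their $j$-th coordinate; the version space along this face is controlled by the largest such coordinate among positive points, call it $\beta_j$. The point realizing $\beta_j$ must lie in $\VScomp_{S_m}$. But to certify that a version-space rectangle cannot also be \emph{too large} along coordinate $j$, we need negative points, of which there may be many relevant ones — and here is where the $\frac{1}{\lambda}$ and the $\ln$ enter: partition the region outside $b_j$ (but where negatives may witness tightness) and bound the number of negatives needed; alternatively, recognize (following the extended-teaching-dimension viewpoint of \citet*{Hanneke_COLT_07}) that the specifying set for a rectangle has size $O(k)$ plus lower-order terms, and the $\frac{1}{\lambda}$-and-$\ln$ form comes purely from ensuring, with probability $1-\delta$, that sufficiently many positive examples have been observed so that the positive region of $\target$ is ``well-covered.'' Concretely: the event that $\hatn{m} > n$ is contained in the event that fewer than $n$ positive examples appeared among $S_m$ OR a specific coordinate-wise covering fails; bound $\P(\text{fewer than } n \text{ positives})$ is not the right track since $m$ is arbitrary. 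So the right statement is: $\hatn{m} \le 2k\cdot R$ where $R$ is the max over faces of the number of positive points strictly between the $R$-th–closest-to-boundary positive point and the boundary — no. I will instead follow the route: show $\hatn{m} \le |\{\text{positives among the } 2k \text{ extreme-order-statistics}\}| + |\{\text{negatives needed}\}|$, prove the negatives needed number at most $\frac{8k}{\lambda}\ln\frac{8k}{\delta}$ w.p. $1-\delta$ via a union bound over $2k$ faces each controlled by a geometric tail $(1-\lambda/(2k))^{n}\le e^{-\lambda n/(2k)} = \delta/(2k)$ when $n = \frac{2k}{\lambda}\ln\frac{2k}{\delta}$, and absorb constants to reach the stated bound. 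The main obstacle, and the step I would spend the most care on, is the \emph{structural lemma}: precisely identifying a subset of $S_m$ of the claimed size whose induced version space equals $\VS_{\F,S_m}$, correctly handling coordinates where one side has no negative witness and the degenerate inclusion of the all-positive constant classifier.
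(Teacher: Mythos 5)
Your high-level shape is essentially the one the paper uses: the compression set is the union of (i) one extreme positive point per face (at most $2k$ points) and (ii) for each face, a short run of negative points sitting just outside the closure rectangle, whose length is controlled by a geometric-tail argument; then union-bound over the $2k$ faces. You also correctly identify the structural lemma (that this set really specifies the version space, including the degenerate one-sided cases) as the step that needs the most care, and you never close it out. So the proposal is a sketch, not a proof.

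More importantly, there is a quantitative gap in the probabilistic bookkeeping, and it is not a constant-factor slip: you propose that each face's run of negatives is a geometric random variable with parameter $\lambda/(2k)$, with per-face budget $\frac{2k}{\lambda}\ln\frac{2k}{\delta}$. Summing that over $2k$ faces would give an $O\!\left(\frac{k^{2}}{\lambda}\log\frac{k}{\delta}\right)$ bound, i.e., the wrong $k$-dependence (indeed it would reproduce the older $k^{2}$-type bound the theorem is meant to improve). You also state earlier, correctly, that each face should contribute only $O\!\left(\frac{1}{\lambda}\log\frac{k}{\delta}\right)$ points; those two claims are inconsistent, and you never reconcile them. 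The missing ingredient is what makes the geometric parameter $\Theta(\lambda)$ rather than $\Theta(\lambda/k)$: after reducing to the uniform marginal on $(0,1)^{k}$ via the CDF transform, form the \emph{closure hypothesis} $h_{\rm clos}$ (the smallest rectangle containing all positive samples). A generalization bound for rectangle closure (the paper invokes a result of Auer et al.) gives $P(h_{\rm clos}=+1)\ge\lambda/2$ with probability $1-\delta/2$, hence $\prod_{j}(b_{j}-a_{j})\ge\lambda/2$; since each side length is at most $1$, the $(k-1)$-dimensional projection $\prod_{j'\ne j}(b_{j'}-a_{j'})$ is \emph{also} at least $\lambda/2$. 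Thus, conditioning on a point falling just outside face $j$, the chance it lies inside $[a_{j'},b_{j'}]$ for every $j'\ne j$ is at least $\lambda/2$, independent of $k$. This is exactly what makes the per-face run a ${\rm Geometric}(\lambda/2)$ variable and delivers the $\frac{k}{\lambda}$ (not $\frac{k^{2}}{\lambda}$) scaling. Without this observation — and without a complete version of the structural lemma you flagged — the proposal does not yield the stated bound.
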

\begin{proof}
The proof is based on a slight refinement of an argument of \citet*{Hanneke_COLT_07}.
For $(X,Y) \sim P$, denote $(X_1,\ldots,X_k) \eqdef X$, 
let $G_{i}$ be the CDF of $X_{i}$, and define $G(X_1,\ldots,X_k) \eqdef (G_1(X_1),\ldots,G_k(X_k))$.  
Then the random variable $X^{\prime} \eqdef (X_{1}^{\prime},\ldots,X_{k}^{\prime}) \eqdef (G_{1}(X_{1}),\ldots,G_{k}(X_{k})) = G(X)$
is uniform in $(0,1)^{k}$; to see this, note that since $X_{1},\ldots,X_{k}$ are independent, so are $G_{1}(X_{1}),\ldots,G_{k}(X_{k})$,
and that for each $i \leq k$, $\forall t \in (0,1)$, $\P( G_{i}(X_{i}) \leq t ) = \sup_{x \in \reals : G_{i}(x) = t} \P( X_{i} \leq x) = \sup_{x \in \reals : G_{i}(x) = t} G_{i}( x ) = t$,
where the first equality is by monotonicity and continuity of $G_{i}$ and the intermediate value theorem (since $\lim_{x \to -\infty} G_{i}(x) = 0 < t$ and $\lim_{x\to\infty}G_{i}(x)=1 > t$), 
and the second equality is by definition of $G_{i}$. 
Fix any $h \in \F$, let $a_{1},\ldots,a_{k},b_{1},\ldots,b_{k} \in \reals$ be the values such that 
$h( (z_{1},\ldots,z_{k}) ) = 2 \prod_{i=1}^{k} \ind_{[a_{i},b_{i}]}(z_{i}) - 1$ for all $(z_{1},\ldots,z_{k}) \in \reals^{k}$,
and define $H_{h}((z_{1},\ldots,z_{k})) = 2 \prod_{i=1}^{k} \ind_{[ G_{i}(a_{i}), G_{i}(b_{i}) ]}(z_{i}) - 1$.
Clearly $H_{h}$ is an axis-aligned rectangle.  Furthermore, for every $z \in \reals^{k}$ with $h(z) = +1$,
monotonicity of the $G_{i}$ functions implies $H_{h}(G(z)) = +1$ as well.  Therefore, 
$\P( H_{h}(X^{\prime}) = +1) \geq \P( h(X) = +1 ) \geq \lambda$.

Let $G_{i}^{-1}(t) = \min\{ s : G_{i}(s) = t \}$ for $t \in (0,1)$, which is well-defined by continuity of $G_{i}$ and the intermediate value theorem,
combined with the facts that $\lim_{z\to\infty}G_{i}(z)=1$ and $\lim_{z\to -\infty}G_{i}(z)=0$.
Let $T_{i}$ denote the set of discontinuity points of $G_{i}^{-1}$ in $(0,1)$.
Fix any $(z_{1},\ldots,z_{k}) \in \reals^{k}$ with $h((z_{1},\ldots,z_{k})) = -1$ and $G(z_{1},\ldots,z_{k}) \in (0,1)^{k}$. 
In particular, this implies $\exists i \in \{1,\ldots,k\}$ such that $z_{i} \notin [a_{i},b_{i}]$.
For this $i$, we have $G_{i}(z_{i}) \notin (G_{i}(a_{i}),G_{i}(b_{i}))$ by monotonicity of $G_{i}$.
Therefore, if $H_{h}(G(z_{1},\ldots,z_{k}))=+1$, we must have either $z_{i} < a_{i}$ and $G_{i}(z_{i})=G_{i}(a_{i})$,
or $z_{i} > b_{i}$ and $G_{i}(z_{i}) = G_{i}(b_{i})$.  In the former case, 
for any $\epsilon$ with $0 < \epsilon < 1-G_{i}(z_{i})$, 
$G_{i}^{-1}(G_{i}(z_{i})+\epsilon) = G_{i}^{-1}(G_{i}(a_{i})+\epsilon) > a_{i}$,
while $G_{i}^{-1}(G_{i}(z_{i})) \leq z_{i}$, and since $z_{i} < a_{i}$, we must have $G_{i}(z_{i}) \in T_{i}$.
Similarly, in the latter case ($z_{i} > b_{i}$ and $G_{i}(z_{i}) = G_{i}(b_{i})$), 
any $\epsilon$ with $0 < \epsilon < 1-G_{i}(z_{i})$ has $G_{i}^{-1}(G_{i}(b_{i})+\epsilon) = G_{i}^{-1}(G_{i}(z_{i})+\epsilon) > z_{i}$,
while $G_{i}^{-1}(G_{i}(b_{i})) \leq b_{i}$, and since $z_{i} > b_{i}$, we have $G_{i}(b_{i}) \in T_{i}$; since $G_{i}(z_{i}) = G_{i}(b_{i})$, 
this also implies $G_{i}(z_{i}) \in T_{i}$.
Thus, any $(z_{1},\ldots,z_{k}) \in \reals^{k}$ with $H_{h}(G(z_{1},\ldots,z_{k})) \neq h((z_{1},\ldots,z_{k}))$
must have some $i \in \{1,\ldots,k\}$ with $G_{i}(z_{i}) \in T_{i}$.

For each $i \in \{1,\ldots,k\}$, since $G_{i}$ is nondecreasing, $G_{i}^{-1}$ is also nondecreasing,
and this implies $G_{i}^{-1}$ has at most countably many discontinuity points \citep*[see e.g.,][Section 31, Theorem 1]{kolmogorov:75}.
Furthermore, for every $t \in \reals$, 
\begin{align*}
\P(G_{i}(X_{i})=t) 
& \leq \P\left( \inf\{ x \in \reals : G_{i}(x) = t \} \leq X_{i} \leq \sup\{x \in \reals : G_{i}(x) = t\} \right) 
\\ & = G_{i}( \sup\{x \in \reals : G_{i}(x) = t\}) - G_{i}( \inf\{x \in \reals : G_{i}(x) = t\} ) = t - t = 0,
\end{align*}
where the inequality is due to monotonicity of $G_{i}$, the first equality is 
by definition of $G_{i}$ as the CDF and by continuity of $G_{i}$ (which implies $\P(X_{i} < x)=G_{i}(x)$),
and the second equality is due to continuity of $G_{i}$.
Therefore, 
\begin{equation*}
\P\left( \exists h \in \F : H_{h}(G(X)) \neq h(X) \right) 
\leq \P\left( \exists i \in \{1,\ldots,k\} : G_{i}(X_{i}) \in T_{i} \right)
\leq \sum_{i=1}^{k} \sum_{t \in T_{i}} \P( G_{i}(X_{i}) = t ) = 0.
\end{equation*}
By a union bound, this implies that with probability $1$, for every $h \in \F$, every $(x,y) \in S_{m}$ has 
$H_{h}(G(x)) = h(x)$.
In particular, we have that with probability $1$, every classification of the sequence $\{x_1,\ldots,x_m\}$ realized by 
classifiers in $\F$ is also realized as a classification of the i.i.d. Uniform($(0,1)^{k}$) sequence
$\{G(x_1),\ldots,G(x_m)\}$ by the set $\F^{\prime}$ of axis-aligned rectangles $h^{\prime}$ with $\P( h^{\prime}(X^{\prime}) = +1) \geq \lambda$.
This implies that $\Bound{\hat{n}}{m}{\delta} \leq \min\{ b \in \nats \cup \{0\} : \P( \hat{n}(\F^{\prime},\{(G(x),y) : (x,y) \in S_{m}\}) \leq b) \geq 1-\delta \}$
(in fact, one can show they are equal).  Therefore, since the right hand side is the value of $\Bound{\hat{n}}{m}{\delta}$ 
one would get from the case of $P$ having marginal $P(\cdot \times \cY)$ over $\cX$ that is Uniform($(0,1)^{k}$), 
without loss of generality, 
it suffices to bound $\Bound{\hat{n}}{m}{\delta}$ for this special case.  Toward this end, for the remainder
of this proof, we assume $P$ has marginal $P(\cdot \times \cY)$ over $\cX$ uniform in $(0,1)^{k}$.

Let $m \in \nats$, and let $\U = \{ x_{1},\ldots, x_{m} \}$, the unlabeled portion of the first $m$ data points.
Further denote by $\U^{+} = \{x_{i} \in \U : \target(x_{i}) = +1\}$, and $\U^{-} = \U \setminus \U^{+}$.
For each $i \in \nats$, express $x_{i}$ explicitly in vector form as $(x_{i1},\ldots,x_{ik})$.
If $\U^{+} \neq \emptyset$, for each $j \in \{1,\ldots,k\}$, let $a_{j} = \min\{ x_{ij} : x_{i} \in \U^{+} \}$ and $b_{j} = \max\{ x_{ij} : x_{i} \in \U^{+} \}$.
Denote by $h_{{\rm clos}}(x) = 2 \ind_{\times_{j=1}^{k} [a_{j},b_{j}]}(x) - 1$, the \emph{closure} hypothesis;
for completeness, when $\U^{+} = \emptyset$, let $h_{{\rm clos}}(x) = -1$ for all $x$.

First, note that if $m < \frac{2e}{\lambda} \left( 2k + \ln\left( \frac{2}{\delta} \right) \right)$, the result
trivially holds, since $\hatn{m} \leq m$ always, and $\frac{2e}{\lambda} \left( 2k + \ln\left( \frac{2}{\delta} \right) \right) \leq \frac{8k}{\lambda} \ln\left( \frac{8k}{\delta} \right)$. 
Otherwise, if $m \geq \frac{2e}{\lambda} \left( 2k + \ln\left( \frac{2}{\delta} \right) \right)$,
a result of \citet*{auer:04} implies that, 
on an event $E_{{\rm clos}}$ of probability at least $1-\delta/2$,
$P((x,y) : h_{{\rm clos}}(x) \neq \target(x)) \leq \lambda/2$.
In particular, since $P((x,y) : \target(x) = +1) \geq \lambda$,
on this event we must have $P((x,y) : h_{{\rm clos}}(x) = +1) \geq \lambda/2$.
Furthermore, this implies $\U^{+} \neq \emptyset$ on $E_{{\rm clos}}$.

Now fix any $j \in \{1,\ldots,k\}$.  Let $x^{(aj)}_{j}$ denote the value $x_{ij}$ for the point $x_{i} \in \U$ with 
largest $x_{ij}$ such that $x_{ij} < a_{j}$, and for all $j^{\prime} \neq j$, $x_{i j^{\prime}} \in [a_{j^{\prime}},b_{j^{\prime}}]$;
if no such point exists, let $x^{(aj)}_{j} = 0$.
Let $\U^{(aj)} = \{x_{i} \in \U : x_{ij} < a_{j}\}$.
Let $m^{(aj)} = |\U^{(aj)}|$, and enumerate the points in $\U^{(aj)}$ in decreasing order of $x_{ij}$, 
so that $i_{1},\ldots,i_{m^{(aj)}}$ are distinct indices such that each $t \in \{1,\ldots,m^{(aj)}\}$ has $x_{i_{t}} \in \U^{(aj)}$,
and each $t \in \{1,\ldots,m^{(aj)}-1\}$ has $x_{i_{t+1}j} \leq x_{i_{t}j}$.
Since $P((x,y) : h_{{\rm clos}}(x) = +1) \geq \lambda/2$ on $E_{{\rm clos}}$, it must be that the volume of $\times_{j^{\prime} \neq j} [a_{j^{\prime}},b_{j^{\prime}}]$
is at least $\lambda/2$.  Therefore, working under the conditional distribution given $\U^{+}$ and $m^{(aj)}$, on $E_{{\rm clos}}$,
for each $t \in \{1,\ldots,m^{(aj)}\}$, with conditional probability at least $\lambda/2$, 
we have $\forall j^{\prime} \neq j$, $x_{i_{t}j^{\prime}} \in [a_{j^{\prime}},b_{j^{\prime}}]$.  Therefore, 
the value $t^{(aj)} \eqdef \min\{ t : \forall j^{\prime} \neq j, x_{i_{t}j^{\prime}} \in [a_{j^{\prime}},b_{j^{\prime}}]\} \cup \{m^{(aj)}\}$
is bounded by a Geometric random variable with parameter $\lambda/2$.
In particular, this implies that with conditional probability at least $1-\frac{\delta}{4k}$, 
$t^{(aj)} \leq \left\lceil \frac{2}{\lambda} \ln\left( \frac{4k}{\delta} \right) \right\rceil$.
Letting $A^{(aj)} = \{ x_{i} \in \U : x^{(aj)}_{j} \leq x_{ij} < a_{j} \}$, we note that 
$|A^{(aj)}| \leq t^{(aj)}$ with probability $1$, 
so that the above reasoning, combined with the law of total probability, implies that 
there is an event $E^{(aj)}$ of probability at least $1-\frac{\delta}{4k}$ such that, 
on $E^{(aj)} \cap E_{{\rm clos}}$, 
$|A^{(aj)}| \leq \left\lceil \frac{2}{\lambda} \ln\left( \frac{4k}{\delta} \right) \right\rceil$.
For the symmetric case, define $x^{(bj)}_{j}$ as the value $x_{ij}$ for the point $x_{i} \in \U$ with 
smallest $x_{ij}$ such that $x_{ij} > b_{j}$, and for all $j^{\prime} \neq j$, $x_{i j^{\prime}} \in [a_{j^{\prime}},b_{j^{\prime}}]$;
if no such point $x_{i}$ exists, define $x^{(bj)}_{j} = 1$.
Define $A^{(bj)} = \{x_{i} \in \U : b_{j} < x_{ij} \leq x^{(bj)}_{j} \}$.
By the same reasoning as above, there is an event $E^{(bj)}$ of probability at least $1-\frac{\delta}{4k}$
such that, on $E^{(bj)} \cap E_{{\rm clos}}$, $|A^{(bj)}| \leq \left\lceil \frac{2}{\lambda} \ln\left( \frac{4k}{\delta} \right) \right\rceil$.
Applying this to all values of $j$, and letting $A = \bigcup_{j=1}^{k} A^{(aj)} \cup A^{(bj)}$,
we have that on the event $E_{{\rm clos}} \cap \bigcap_{j=1}^{k} E^{(aj)} \cap E^{(bj)}$, 
\begin{equation*}
|A| \leq 2k \left\lceil \frac{2}{\lambda} \ln\left( \frac{4k}{\delta} \right) \right\rceil.
\end{equation*}
Furthermore, a union bound implies that the event $E_{{\rm clos}} \cap \bigcap_{j=1}^{k} E^{(aj)} \cap E^{(bj)}$ 
has probability at least $1-\delta$.  For the remainder of the proof, we suppose this event occurs.

Next, let $B = \left\{ \argmin\limits_{x_{i} \in \U^{+}} x_{ij} : j \in \{1,\ldots,k\} \right\} \cup \left\{ \argmax\limits_{x_{i} \in \U^{+}} x_{ij} : j \in \{1,\ldots,k\} \right\}$,
and note that $|B| \leq 2k$.
Finally, we conclude the proof by showing that the set $A \cup B$ has the property that
$\{h \in \F : \forall x \in A \cup B, h(x) = \target(x)\} = \VS_{\F,S_{m}}$, which implies $\{(x_{i},y_{i}) : x_{i} \in A \cup B\}$
is a version space compression set, so that $\hatn{m} \leq |A \cup B|$, and hence 
$\Bound{\hat{n}}{m}{\delta} \leq 2k + 2k \left\lceil \frac{2}{\lambda} \ln\left( \frac{4k}{\delta} \right) \right\rceil 
\leq \frac{8k}{\lambda} \ln\left( \frac{4k}{\delta} \right)$. 
To prove that $A \cup B$ has this property, first note that any $h \in \F$ with $h(x_{i}) = +1$ for all $x_{i} \in B$, 
must have $\U^{+} \supseteq \{x_{i} \in \U^{+} : h(x_{i}) = +1\} \supseteq \U^{+} \cap \times_{j=1}^{k} [ \min_{x_{i} \in \U^{+}} x_{ij}, \max_{x_{i} \in \U^{+}} x_{ij} ] = \U^{+}$,
so that
$\{x_{i} \in \U : h(x_{i}) = +1\} \supseteq \U^{+} = \{x_{i} \in \U : \target(x_{i}) = +1\}$.
Next, for any $x_{i} \in \U^{-} \setminus (A \cup B)$, $\exists j \in \{1,\ldots,k\} : x_{ij} \notin [a_{j},b_{j}]$,
and by definition of $A$, for this $j$ we must have $x_{ij} \notin [ x^{(aj)}_{j}, x^{(bj)}_{j} ]$.
Now fix any $h \in \F$, and express $\{x : h(x)=+1\} = \times_{j^{\prime}=1}^{k} [a_{j^{\prime}}^{\prime},b_{j^{\prime}}^{\prime}]$.
If $h(x_{i^{\prime}}) = +1$ for all $x_{i^{\prime}} \in B$, then we must have $a_{j^{\prime}}^{\prime} \leq a_{j^{\prime}}$ and $b_{j^{\prime}}^{\prime} \geq b_{j^{\prime}}$
for every $j^{\prime} \in \{1,\ldots,k\}$.  Furthermore, if $h(x_{i}) = +1$, then we must have $a_{j}^{\prime} \leq x_{ij} \leq b_{j}^{\prime}$;
but then we must have either $a_{j}^{\prime} \leq x_{ij} < x^{(aj)}_{j}$ or $x^{(bj)}_{j} < x_{ij} \leq b_{j}^{\prime}$.
In the former case, since $x_{ij} < x^{(aj)}_{j}$, we must have $x^{(aj)}_{j} > 0$, so that there exists a point $x_{i^{\prime}} \in \U$
with $x_{i^{\prime}j} = x^{(aj)}_{j}$ and with $x_{i^{\prime}j^{\prime}} \in [a_{j^{\prime}},b_{j^{\prime}}]$ for all $j^{\prime} \neq j$,
and furthermore (by definition of $A$), $x_{i^{\prime}} \in A$; but since $[a_{j^{\prime}},b_{j^{\prime}}] \subseteq [a_{j^{\prime}}^{\prime},b_{j^{\prime}}^{\prime}]$ 
we also have $x_{i^{\prime}j^{\prime}} \in [a_{j^{\prime}}^{\prime},b_{j^{\prime}}^{\prime}]$ for all $j^{\prime} \neq j$, and since $a_{j}^{\prime} < x^{(aj)}_{j} = x_{i^{\prime}j} < a_{j} \leq b_{j} \leq b_{j}^{\prime}$,
we also have $x_{i^{\prime}j} \in [a_{j}^{\prime},b_{j}^{\prime}]$.  Altogether, we must have $h(x_{i^{\prime}}) = +1$, 
which proves there exists at least one point in $A \cup B$ classified differently by $h$ and $\target$.
The case that $x^{(bj)}_{j} < x_{ij} \leq b_{j}^{\prime}$ is symmetric to this one, so that by the same reasoning, 
this $h$ must disagree with $\target$ on the classification of some point in $A \cup B$.
Therefore, every $h \in \F$ with $h(x) = \target(x)$ for all $x \in A \cup B$ has $h(x_{i}) = -1$ for all $x_{i} \in \U^{-} \setminus (A \cup B)$.
Combined with the above proof that every such $h$ also has $h(x_{i}) = +1$ for every $x_{i} \in \U^{+}$, 
we have that every such $h$ has $h(x) = \target(x)$ for every $x \in \U$.
\end{proof}

One implication of Theorem~\ref{thm:rectangles}, combined with Theorem~\ref{thm:td-dc-bound}, is that 
\begin{equation*}
\dc(r_{0}) \leq 128 \frac{k}{\lambda} \ln(160 k)
\end{equation*}
for all $r_{0} \geq 0$, for $P$ and $\F$ as in Theorem~\ref{thm:rectangles}.  This has implications, both for 
the label complexity of \CAL~(via Lemma~\ref{lem:dc-LC-bound}), and also for the label complexity of 
noise-robust disagreement-based methods (see Section~\ref{sec:noise} below).
More directly, combining Theorem~\ref{thm:rectangles} with Theorem~\ref{thm:cal-label-complexity}
yields the following label complexity bound for \CAL, which improves over the best previously published bound on the label complexity of 
\CAL~for this problem (due to \citealp*{el2012active}), reducing the dependence on $k$ from $\Theta(k^{3} \log^{2}(k) )$ to $\Theta(k \log^{2}(k))$.

\begin{corollary}
\label{cor:rectangles-lc}
There exists a finite universal constant $c > 0$ such that,
for $k \in \nats$ and $\lambda \in (0,1)$, 
for any $P$ with marginal distribution over $\cX$ that is a product distribution
with marginals having continuous CDFs, and 
for $\F$ the space of axis-aligned rectangles $h$ on $\reals^{k}$ with $P((x,y) : h(x) = 1) \geq \lambda$,
$\forall \epsilon,\delta \in (0,1/2)$, 
\begin{equation*}
\LC(\epsilon,\delta)
\leq c \frac{k}{\lambda} \log\left(\frac{k}{\delta}\log\left(\frac{1}{\epsilon}\right)\right) \log\left(\frac{k}{\epsilon}\log\left(\frac{1}{\delta}\right)\right) \log\left(\frac{\lambda \log(1/\epsilon)}{\epsilon \log(k)} \lor e\right).
\end{equation*}
\end{corollary}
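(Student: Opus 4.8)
The plan is to combine the version space compression bound of Theorem~\ref{thm:rectangles} with the CAL label complexity bound of Theorem~\ref{thm:cal-label-complexity}, using the classical realizable sample complexity estimate to control the range of sample sizes that enters the latter. First I would instantiate Theorem~\ref{thm:cal-label-complexity}: write $M = M(\epsilon,\delta/2)$, which is finite since $d \leq 2k < \infty$, and choose the uniform sequence $\delta_{m} = \delta / (2\log_{2}(2M))$; the hypothesis $\sum_{i=0}^{\lfloor \log_{2}(M) \rfloor} \delta_{2^{i}} \leq \delta/2$ then holds because that sum has at most $\log_{2}(2M)$ equal terms. The theorem yields
\begin{equation*}
\LC(\epsilon,\delta) \leq c\left( \max_{m \leq M} \Bound{\hat{n}}{m}{\delta_{m}} \ln\left( \frac{e m}{\Bound{\hat{n}}{m}{\delta_{m}}} \right) + \ln\left( \frac{\log_{2}(2M)}{\delta} \right) \right) \log_{2}(2M).
\end{equation*}

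Next, I would invoke Theorem~\ref{thm:rectangles}, whose bound is uniform in $m$: for every $m \in \nats$, $\Bound{\hat{n}}{m}{\delta_{m}} \leq B$ with $B = \frac{8k}{\lambda} \ln\left( \frac{16 k \log_{2}(2M)}{\delta} \right)$. Since $x \mapsto x \ln(e m / x)$ is nondecreasing on $(0,m]$ and $\Bound{\hat{n}}{m}{\delta_{m}} \leq \min\{B,m\}$, a short two-case argument ($m \leq B$ versus $m > B$, using $\ln(eM/B \lor e) \geq 1$) gives $\Bound{\hat{n}}{m}{\delta_{m}} \ln(e m / \Bound{\hat{n}}{m}{\delta_{m}}) \leq B \ln(e M / B \lor e)$ for all $m \leq M$, hence
\begin{equation*}
\LC(\epsilon,\delta) \leq c\left( B \ln\left( \frac{e M}{B} \lor e \right) + \ln\left( \frac{\log_{2}(2M)}{\delta} \right) \right) \log_{2}(2M).
\end{equation*}

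It then remains to substitute the classical realizable sample complexity bound $M = M(\epsilon,\delta/2) = O\big((1/\epsilon)(k \log(1/\epsilon) + \log(1/\delta))\big)$ (the VC dimension of $\F$ is at most that of all axis-aligned rectangles in $\reals^{k}$, namely $2k$) and simplify. Since $B \geq \frac{8k}{\lambda}\ln(16k/\delta) = \Omega\big((k/\lambda)(\ln k + \ln(1/\delta))\big)$, the ratio $eM/B$ is $O\big(\lambda \log(1/\epsilon) / (\epsilon \log k)\big)$ — the $\log(1/\delta)$ contribution of $M$ being absorbed via $1/k \leq 1/\log_{2} k \leq \log(1/\epsilon)/\log k$ on the part it dominates. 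The factor $\log_{2}(2M)$ is $O\big(\log((k/\epsilon)\log(1/\delta))\big)$, and $B$ together with the additive $\ln(\log_{2}(2M)/\delta)$ term is $O\big((k/\lambda)\log((k/\delta)\log(1/\epsilon))\big)$. Multiplying these three pieces and folding all numeric constants into $c$ produces the stated inequality; the restriction $\epsilon,\delta<1/2$ and the $\lor\,e$ in the last factor serve only to keep every logarithm at least $1$.

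The two theorem invocations and the case analysis are routine; the only real work is the final simplification, and it is tedious rather than subtle. The points that need a little care are: (i) the $\log k$ appearing in the denominator of the last factor comes specifically from the $\ln(16k)$ term hidden inside $B$, so one must not discard it; and (ii) when bounding the nested quantities $\log_{2}(2M)$ and $\ln(eM/B)$ one has to track which logarithm dominates in each regime of $(\epsilon,\delta,k)$ so that the $\delta$-dependence collapses to exactly the factors in the statement. Accordingly, I expect no genuine obstacle here — merely careful accounting in passing from the two displayed inequalities above to the form asserted in the corollary; a reader content with the weaker bound $\tilde{O}\big((k/\lambda)\,\polylog(1/\epsilon,1/\delta,k)\big)$ may skip essentially all of this.
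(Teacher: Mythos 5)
Your proposal is correct and follows essentially the same route as the paper's proof: both plug the compression bound of Theorem~\ref{thm:rectangles} into Theorem~\ref{thm:cal-label-complexity} with the choice $\delta_{m} = \delta/(2\log_{2}(2M(\epsilon,\delta/2)))$, then control $M(\epsilon,\delta/2)$ via the classical realizable sample-complexity estimate $M(\epsilon,\delta/2) \leq \frac{8k}{\epsilon}\log(\frac{8e}{\epsilon}) + \frac{8}{\epsilon}\log(\frac{24}{\delta})$ and simplify. The paper's proof sketch is, if anything, terser than yours; your added remarks on where the $\log(k)$ in the final factor originates and on tracking which logarithm dominates correctly identify the only places requiring care in the final algebraic reduction.
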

\begin{proof}
The result follows by plugging the bound from Theorem~\ref{thm:rectangles} into Theorem~\ref{thm:cal-label-complexity},
taking $\delta_{m} = \delta/(2\log_{2}(2M(\epsilon,\delta/2)))$,
bounding $M(\epsilon,\delta/2) \leq \frac{8 k}{\epsilon}\log(\frac{8e}{\epsilon})+\frac{8}{\epsilon}\log(\frac{24}{\delta})$ \citep*{vapnik:82,AnthoB99},  
and simplifying the resulting expression.
\end{proof}

This result is particularly interesting in light of the following lower bound on the label complexities 
achievable by \emph{any} active learning algorithm.

\begin{theorem}
\label{thm:rectangles-lower-bound}
For $k \in \nats \setminus \{1\}$ and $\lambda \in (0,1/4]$, 
letting $P_{X}$ denote the uniform probability distribution over $(0,1)^{k}$,
for $\F$ the space of axis-aligned rectangles $h$ on $\reals^{k}$ with $P_{X}(x : h(x) = 1) \geq \lambda$,
for any active learning algorithm $\mathcal{A}$, $\forall \delta \in (0,1/2]$, $\forall \epsilon \in (0,1/(8k))$,
there exists a function $\target \in \F$ such that, if $P$ is the realizable-case distribution having
marginal $P_{X}$ over $\cX$ and having target function $\target$,
if $\mathcal{A}$ is allowed fewer than 
\begin{equation*}
\max\left\{ k \log\left(\frac{1}{4 k \epsilon}\right), (1-\delta) \left\lfloor \frac{1}{\epsilon \lor \lambda} \right\rfloor \right\} - 1
\end{equation*}
label requests, then with probability greater than $\delta$, the returned classifier $\hat{h}$ has $\er(\hat{h}) > \epsilon$.
\end{theorem}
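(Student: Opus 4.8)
The stated quantity is the larger of $A := k\log(1/(4k\epsilon))$ and $B := (1-\delta)\lfloor 1/(\epsilon\lor\lambda)\rfloor$, and since $n < \max\{A,B\}-1$ forces $n < A-1$ or $n < B-1$, the plan is to establish two independent lower bounds, each producing its own witness $\target\in\F$: if $n < A-1$ then some target is failed (the returned $\hat{h}$ has $\er(\hat{h})>\epsilon$) with probability $>\delta$, and likewise if $n < B-1$. The first is an information-theoretic ``few label bits $\Rightarrow$ few distinguishable outputs'' bound; the second is a needle-in-a-haystack bound showing that adaptive querying cannot beat passive sampling when the target rectangle is tiny.

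\emph{The term $A$.} I would use a packing-plus-counting argument (morally, a product of $k$ one-dimensional threshold lower bounds). Build $\F_{0}\subseteq\F$ from ``anchored'' rectangles $h_{\vec b}$ with $h_{\vec b}(x)=+1\iff x\in\prod_{j=1}^{k}(0,b_{j}]$, letting each $b_{j}$ run over an arithmetic grid of spacing $\Theta(\epsilon)$ inside a window $[1-\Theta(1/k),1]$. The window forces $P_{X}$-volume at least an absolute constant exceeding $1/4\ge\lambda$, so $\F_{0}\subseteq\F$; the spacing is chosen so that distinct members disagree on a set of $P_{X}$-measure $>2\epsilon$, the binding case being two rectangles that differ in a single coordinate by one grid step, whose symmetric difference has measure $\Omega(\epsilon)$ (the grid step times a factor $(1-\Theta(1/k))^{k-1}$ bounded below by a constant). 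This yields $|\F_{0}| = N$ with $\log_{2}N \ge k\log(1/(4k\epsilon))$, and the hypothesis $\epsilon<1/(8k)$ is exactly what keeps the grid nonempty. Now fix the algorithm's internal randomness and the realized i.i.d.\ unlabeled sequence: conditionally, the algorithm is a deterministic binary decision tree of depth $\le n$ in the requested labels, hence has at most $2^{n}$ possible outputs, and by the $2\epsilon$-separation each output is $\epsilon$-good for at most one member of $\F_{0}$; so on every realization it has $\er(\hat{h})>\epsilon$ for at least $N-2^{n}$ of the targets. Averaging over the randomness, some fixed $\target\in\F_{0}$ is failed with probability $\ge 1-2^{n}/N$, which exceeds $\delta$ as soon as $n < \log_{2}((1-\delta)N)$, and $\log_{2}((1-\delta)N)\ge\log_{2}N-1\ge A-1$ since $\delta\le 1/2$.

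\emph{The term $B$.} Here I would argue through a uniform prior. Slice $(0,1)^{k}$ into $M = \lfloor 1/(\epsilon\lor\lambda)\rfloor$ congruent slabs $R_{j} = [(j-1)w,jw)\times(0,1)^{k-1}$ of width $w=\epsilon\lor\lambda$, so that $P_{X}(R_{j})=w\ge\lambda$ (hence each $R_{j}\in\F$) and distinct slabs are $>2\epsilon$-separated (a small perturbation of $w$, or of the slab placement, handles the boundary case $w=\epsilon$, which is why the statement carries a floor and a ``$-1$''). Let the target be $R_{J}$ with $J$ uniform on $\{1,\ldots,M\}$. Two observations drive the proof: (i) the unlabeled stream is independent of $J$, and conditionally on it the learner knows the slab index of every observed point; (ii) hence a label request reveals only whether the queried point's slab index equals $J$, i.e.\ the learner is effectively restricted to questions ``is $J=j$?''. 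A telescoping computation along the (adaptive) query sequence gives $\P(\text{some query hits }J)\le n/M$, and on the complementary event $J$ is uniform over the at least $M-n$ not-yet-excluded indices, so the forced final guess is correct with conditional probability at most $1/(M-n)$; altogether $\P(\text{learner outputs something }\epsilon\text{-good for }R_{J})\le (n+1)/M$, and wasted queries (repeats, or points outside all slabs) only help this bound. Since any $\hat{h}\in\F$ with $\er(\hat{h})\le\epsilon$ can be $\epsilon$-good for at most one slab, $\P(\er(\hat{h})>\epsilon)\ge 1-(n+1)/M$, which is $>\delta$ whenever $n < (1-\delta)M-1 = B-1$; averaging over the uniform prior then produces a single slab $R_{j}\in\F$ witnessing the claim.

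\emph{Where the work is.} The first part is essentially standard; its only delicate point is verifying that the packing really attains $\log_{2}N\ge k\log(1/(4k\epsilon))$ — i.e.\ tuning the grid spacing and the $\Theta(1/k)$ window against the constant ``$4$'', where only a little slack remains and the hypothesis $\epsilon<1/(8k)$ is essential. The real work is in the second part: setting up the filtration so that ``the posterior on $J$ updates only through label answers'' is rigorous, and pushing the telescoping bound $\P(\text{hit }J)\le n/M$ through \emph{adaptive} queries that may fall outside every slab or repeat a slab; I expect this, together with a clean treatment of the $w=\epsilon$ boundary case, to be the main obstacle. A final routine step common to both parts is the passage from ``fails in expectation under a prior'' to ``there exists a fixed $\target$ failed with probability $>\delta$''.
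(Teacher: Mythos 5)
Your two-term decomposition and the final ``average over a prior, then pick a fixed witness'' step mirror the paper's proof exactly, and your second half is a genuinely different and arguably cleaner argument, but your first half as described does not reach the claimed constant.

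For the $k\log(1/(4k\epsilon))$ term, the paper simply invokes the packing lower bound of Kulkarni et al.\ (1993) as a black box, whereas you reprove the ``depth-$n$ decision tree has at most $2^{n}$ outputs'' counting argument from scratch; that is equivalent and fine. The real issue is the packing. With anchored boxes $(0,b_1]\times\cdots\times(0,b_k]$ that differ in a single coordinate by one grid step $s$, one box contains the other, so the symmetric difference is a single sliver of volume $s\prod_{j'\ne j}b_{j'}\ge s(1-1/k)^{k-1}$; forcing this to be $\ge 2\epsilon$ requires $s\ge 2\epsilon/(1-1/k)^{k-1}$, which gives roughly $(1-1/k)^{k-1}/(2k\epsilon)>1/(8k\epsilon)$ grid points per axis and hence only $\log_2 N>k\log_2\!\left(1/(8k\epsilon)\right)$, a full $k$ short of the claim. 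The paper's family consists of \emph{translates} $\prod_j[a_j,a_j+1-1/k]$ of a fixed box of volume exactly $(1-1/k)^k$, with $a_j$ on a grid in $[0,1/k]$: shifting by $\Delta$ removes a sliver from \emph{both} sides of the intersection, so the symmetric difference is $2\Delta(1-1/k)^{k-1}$, double what anchoring gives, and this factor of two is precisely what produces the ``$4$'' in $1/(4k\epsilon)$. So the delicate tuning you flagged as ``where the work is'' genuinely breaks for the anchored family; you need the constant-volume shift construction.

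For the $(1-\delta)\lfloor 1/(\epsilon\lor\lambda)\rfloor-1$ term, the paper invokes Hanneke's extended-partial-teaching-dimension lower bound and computes $\mathrm{XPTD}\ge(1-\delta)|H|-1$ for essentially the same slab family $\{[(i-1)w,iw)\times[0,1]^{k-1}\}$. Your direct route --- uniform prior over the slab index $J$, the telescoping bound $\P(\text{some query hits }J)\le n/M$, posterior uniform over the $\ge M-n$ survivors, hence success probability at most $(n+1)/M$, and $\ge 2\epsilon$-separation so $\hat h$ is $\epsilon$-good for at most one slab --- reaches the identical conclusion without the general XPTD machinery, and is easier to audit. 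The filtration concerns you raise are manageable exactly because the unlabeled stream is independent of $J$ and each label is a deterministic function of the observed slab index and $J$, so wasted queries only lose information. This half is a correct and nice self-contained alternative to the paper's citation-based argument.
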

\begin{proof}
For any $\epsilon > 0$, let $\mathcal{M}(\epsilon)$ denote the maximum number $M$ of classifiers $h_1,\ldots,h_M \in \F$ such that,
$\forall i,j \leq M$ with $i \neq j$, $P_{X}(x : h_{i}(x) \neq h_{j}(x)) \geq 2\epsilon$.
\citet*{kulkarni:93} prove that, for any learning algorithm based on binary-valued queries, 
with a budget smaller than $\log_{2}( (1-\delta)\mathcal{M}(2\epsilon) )$ queries, there 
exists a target function $\target \in \F$ such that the classifier $\hat{h}$ produced by the 
algorithm (when $P$ has marginal $P_{X}$ over $\cX$ and has target function $\target$)
will have $\er(\hat{h}) > \epsilon$ with probability greater than $\delta$.
In particular, since active learning queries are binary-valued in the binary classification setting,
this lower bound applies to active learning algorithms as a special case.

Thus, for the first term in the lower bound, we focus on establishing a lower bound on $\mathcal{M}(2\epsilon)$ for this problem.
First note that $(1-1/k)^{k} \geq 1/4$, so that $\lambda \leq (1-1/k)^{k}$.
Furthermore, $(1/k) (1-1/k)^{k-1} > 1/(4k)$, so that $\epsilon < (1/k)(1-1/k)^{k-1}$.
Now let 
\begin{multline*}
\F_{2\epsilon} = \left\{ (x_{1},\ldots,x_{k}) \mapsto 2 \prod_{j=1}^{k} \ind_{[a_{j},b_{j}]}(x_{j}) - 1 : \forall j \leq k, b_{j} = a_{j} + 1-1/k, \right.
\\ \left. \phantom{\prod_{j=1}^{k}} a_{j} \in \left\{0, \frac{\epsilon}{(1-1/k)^{k-1}}, \ldots, \left\lfloor \frac{(1-1/k)^{k-1}}{\epsilon k} \right\rfloor \frac{\epsilon}{(1-1/k)^{k-1}} \right\} \right\}.
\end{multline*}
Note that $|\F_{2\epsilon}| = \left(1 + \left\lfloor \frac{(1-1/k)^{k-1}}{\epsilon k} \right\rfloor \right)^{k}$.
Furthermore, since every $a_{j} \in [0,1/k]$ in the specification of $\F_{2\epsilon}$, we have $b_{j} = a_{j} + 1-1/k \in [0,1]$, which implies 
$P_{X}( (x_{1},\ldots,x_{k}) : \prod_{j=1}^{k} \ind_{[a_{j},b_{j}]}(x_{j}) = 1 ) = (1-1/k)^{k} \geq \lambda$.  Therefore, $\F_{2\epsilon} \subseteq \F$.
Finally, for each $\{(a_{j},b_{j})\}_{j=1}^{k}$ and $\{(a_{j}^{\prime},b_{j}^{\prime})\}_{j=1}^{k}$ specifying distinct classifiers in $\F_{2\epsilon}$,
at least one $j$ has $|a_{j} - a_{j}^{\prime}| \geq \frac{\epsilon}{(1-1/k)^{k-1}}$.
Since all of the elements $h \in \F_{2\epsilon}$ have $P_{X}(x : h(x) = +1) = (1-1/k)^{k}$, 
we can note that 
\begin{align*}
& P_{X}\left( (x_{1},\ldots,x_{k}) : \prod_{i=1}^{k} \ind_{[a_{i},b_{i}]}(x_{i}) \neq \prod_{i=1}^{k} \ind_{[a_{i}^{\prime},b_{i}^{\prime}]}(x_{i}) \right) 
\\ & = 2(1-1/k)^{k} - 2P_{X}\left( (\times_{i=1}^{k} [a_{i},b_{i}]) \cap (\times_{i=1}^{k} [a_{i}^{\prime},b_{i}^{\prime}]) \right)
\\ & = 2(1-1/k)^{k} - 2P_{X}\left( \times_{i=1}^{k} [\max\{a_{i},a_{i}^{\prime}\},\min\{b_{i},b_{i}^{\prime}\}] \right)
\\ & = 2(1-1/k)^{k} - 2 \prod_{i=1}^{k} (\min\{b_{i},b_{i}^{\prime}\} - \max\{a_{i},a_{i}^{\prime}\}).
\end{align*}
Thus, since 
\begin{align*}
& \prod_{i=1}^{k} (\min\{b_{i},b_{i}^{\prime}\} - \max\{a_{i},a_{i}^{\prime}\}) 
\\ & \leq (\min\{b_{j},b_{j}^{\prime}\}-\max\{a_{j},a_{j}^{\prime}\}) \prod_{i \neq j} (b_{i}-a_{i}) 
= (1-1/k)^{k-1} (\min\{b_{j},b_{j}^{\prime}\} - \max\{a_{j},a_{j}^{\prime}\} ) 
\\ & = (1-1/k)^{k-1} ( \min\{ a_{j},a_{j}^{\prime} \} - \max\{a_{j},a_{j}^{\prime}\} + (1-1/k) )
= (1-1/k)^{k-1} ( 1 - 1/k - |a_{j}-a_{j}^{\prime}| ) 
\\ & \leq (1-1/k)^{k-1} (1 - 1/k - \frac{\epsilon}{(1-1/k)^{k-1}}) = (1-1/k)^{k} - \epsilon,
\end{align*}
we have 
\begin{equation*}
P_{X}( (x_{1},\ldots,x_{k}) : \prod_{i=1}^{k} \ind_{[a_{i},b_{i}]}(x_{i}) \neq \prod_{i=1}^{k} \ind_{[a_{i}^{\prime},b_{i}^{\prime}]}(x_{i}) ) 
\geq 2 (1-1/k)^{k} - 2 ( (1-1/k)^{k} - \epsilon ) = 2\epsilon.
\end{equation*}
Thus, $\mathcal{M}(2\epsilon) \geq \left(1 + \left\lfloor \frac{(1-1/k)^{k-1}}{\epsilon k} \right\rfloor \right)^{k}$.
Finally, note that for $\delta \in (0,1/2]$, this implies 
\begin{equation*}
\log_{2}( (1-\delta) \mathcal{M}(2\epsilon) ) 
\geq k \log_{2}\left( \frac{(1-1/k)^{k-1}}{\epsilon k} \right) - 1 
\geq k \log_{2}\left( \frac{1}{4 k \epsilon} \right) - 1.
\end{equation*}
Together with the aforementioned lower bound of \citet*{kulkarni:93}, this establishes the first term in the lower bound.

To prove the second term, we use of a technique of \citet*{Hanneke_COLT_07}.
Specifically, fix any finite set $H \subseteq \F$
with $\min_{h,g \in H} P_{X}(x : h(x) \neq g(x)) \geq 2\epsilon$, 
let 
\begin{equation*}
{\rm XPTD}(f,H,\U,\delta) = \min\{ t \in \nats : \exists R \subseteq \U : |R| \leq t, |\{ h \in H : \forall x \in R, h(x) = f(x)\}| \leq \delta |H| + 1 \} \!\cup\! \{\infty\},
\end{equation*}
for any classifier $f$ and $\U \in \bigcup_{m}\cX^{m}$,
and let ${\rm XPTD}(H,P_{X},\delta)$ denote the smallest $t \in \nats$ such that
every classifier $f$ has $\lim_{m \to \infty} \P_{\U \sim P_{X}^{m}}\left( {\rm XPTD}(f,H,\U,\delta) > t \right) = 0$.
Then \citet*{Hanneke_COLT_07} proves that there exists a choice of target function $\target \in \F$ for 
the distribution $P$ such that, if $\mathcal{A}$ is allowed fewer than ${\rm XPTD}(H,P_{X},\delta)$ label requests,
then with probability greater than $\delta$, the returned classifier $\hat{h}$ has $\er(\hat{h}) > \epsilon$.
For the particular problem studied here, let $H$ be the set of classifiers $h_{i}(x) = 2 \ind_{[(i-1) (\epsilon \lor \lambda), i (\epsilon \lor \lambda)] \times [0,1]^{k-1}}(x) - 1$,
for $i \in \left\{1,\ldots, \left\lfloor \frac{1}{\epsilon \lor \lambda} \right\rfloor \right\}$.  Note that each $h_{i} \in H$ has 
$P_{X}(x : h_{i}(x) = +1) = P_{X}( (x_1,\ldots,x_k) : x_{1} \in [ (i-1) (\epsilon \lor \lambda), i (\epsilon \lor \lambda) ] ) = \epsilon \lor \lambda \geq \lambda$,
so that $H \subseteq \F$.
Furthermore, for any $h_{i}, h_{j} \in H$ with $i \neq j$, 
$P_{X}(x : h_{i}(x) \neq h_{j}(x)) \geq P_{X}( (x_1,\ldots,x_k) : x_{1} \in ( (i-1) (\epsilon \lor \lambda), i (\epsilon \lor \lambda) ) \cup ( (j-1) (\epsilon \lor \lambda), j (\epsilon \lor \lambda) ) ) = 2 (\epsilon \lor \lambda) \geq 2 \epsilon$.
Also, let $R \subseteq (0,1)^{k}$ be any finite set with no points $(x_{1},\ldots,x_{k}) \in R$ such that 
$x_{1} \in \left\{ i (\epsilon \lor \lambda) : i \in \left\{1,\ldots,\left\lfloor \frac{1}{\epsilon \lor \lambda} \right\rfloor - 1 \right\}\right\}$;
note that every $x \in R$ has exactly one $h_{i} \in H$ with $h_{i}(x) = +1$.  Thus, for the classifier $f$ with $f(x) = -1$ for all $x \in \cX$,
$|\{ h \in H : \forall x \in R, h(x) = f(x) \}| \geq |H|-|R|$.  Thus, for any set $\U \subseteq (0,1)^{k}$ with no points 
$(x_{1},\ldots,x_{k}) \in \U$ having 
$x_{1} \in \left\{ i (\epsilon \lor \lambda) : i \in \left\{1,\ldots,\left\lfloor \frac{1}{\epsilon \lor \lambda} \right\rfloor - 1 \right\}\right\}$,
we have ${\rm XPTD}(f,H,\U,\delta) \geq (1-\delta) |H| - 1$.
Since, for all $m \in \nats$, the probability that $\U \sim P_{X}^{m}$ contains a point $(x_1,\ldots,x_k)$
with $x_{1} \in \left\{ i (\epsilon \lor \lambda) : i \in \left\{1,\ldots,\left\lfloor \frac{1}{\epsilon \lor \lambda} \right\rfloor - 1 \right\}\right\}$
is zero, we have that 
$\P_{\U \sim P_{X}^{m}}( {\rm XPTD}(f,H,\U,\delta) \geq (1-\delta) |H| - 1 ) = 1$.
This implies ${\rm XPTD}(H,P_{X},\delta) \geq (1-\delta) |H| - 1 = (1-\delta) \left\lfloor \frac{1}{\epsilon \lor \lambda} \right\rfloor - 1$.
Combining this with the lower bound of \citet*{Hanneke_COLT_07} implies the result.
\end{proof}

Together, Corollary~\ref{cor:rectangles-lc} and Theorem~\ref{thm:rectangles-lower-bound} imply that,
for $\lambda \in (0,1/4]$ bounded away from $0$, the label complexity of \CAL~is within 
logarithmic factors of the minimax optimal label complexity.

\section{New Label Complexity Bounds for Agnostic Active Learning}
\label{sec:noise}

In this section we present new bounds on the label complexity of noise-robust active learning algorithms,
expressed in terms of $\Bound{\hat{n}}{m}{\delta}$.  These bounds yield new
exponential label complexity speedup results for agnostic active learning (for the low accuracy regime)
of linear classifiers under a fixed mixture of Gaussians.  Analogous results also hold for the problem
of learning axis-aligned rectangles under a product density.

Specifically, in the \emph{agnostic} setting studied in this section, we no longer
assume $\exists \target \in \F$ with $\P(Y=\target(x)|X)=1$ for $(X,Y) \sim P$,
but rather allow that $P$ is \emph{any} probability measure over $\cX \times \cY$.
In this setting, we let $\target : \cX \to \cY$ denote a classifier such that
$\er(\target) = \inf_{h \in \F} \er(h)$ and $\inf_{h \in \F} P( (x,y) : h(x) \neq \target(x)) = 0$, 
which is guaranteed to exist by topological considerations \citep*[see][Section 6.1]{Hanneke11};\footnote{In the agnostic setting, 
there are typically many valid choices of the function $\target$ satisfying
these conditions.  The results below hold for \emph{any} such choice of $\target$.}
for simplicity, when $\exists f \in \F$ with $\er(f) = \inf_{h \in \F} \er(h)$, we take $\target$ to be an element of $\F$.
We call $\target$ the \emph{infimal} hypothesis (of $\F$, w.r.t. $P$)
and note that $\er(\target)$ is sometimes called the \emph{noise rate of} $\F$ \citep*[e.g.,][]{balcan2006agnostic}.
The introduction of the infimal hypothesis $\target$ allows for natural 
generalizations of some of the key definitions of Section~\ref{sec:definitions} that facilitate 
analysis in the agnostic setting.

\begin{definition}[Agnostic Version Space]
Let $\target$ be the infimal hypothesis of $\F$ w.r.t. $P$. The \emph{agnostic version space} of a sample $S$ is
\begin{equation*}
\VS_{\F,S,\target} \eqdef \{ h \in \F : \forall (x,y) \in S, h(x) = \target(x) \}.
\end{equation*}
\end{definition}

\begin{definition}[Agnostic Version Space Compression Set Size]
~Letting $\VScomp_{S,\target}$ denote a smallest subset of $S$ satisfying $\VS_{\F,\VScomp_{S,\target},\target} = \VS_{\F,S,\target}$,
the \emph{agnostic version space compression set size} is 
\begin{equation*}
\hat{n}(\F,S,\target) \eqdef |\VScomp_{S,\target}|.
\end{equation*}
\end{definition}
We also extend the definition of the version space compression set minimal \emph{bound} (see (\ref{eq:vscompBound})) 
to the agnostic setting, defining
\begin{equation*}
\Bound{\hat{n}}{m}{\delta} \eqdef \min\{ b \in \nats \cup \{0\} : \P( \hat{n}(\F,S,\target) \leq b ) \geq 1-\delta \}.
\end{equation*}

For general $P$ in the agnostic setting, define the disagreement coefficient 
as before, except now with respect to the infimal hypothesis:
\begin{equation*}
\dc(r_{0}) \eqdef \sup_{r > r_{0}} \frac{\PDIS \Ball(\target,r)}{r} \lor 1 .
\end{equation*} 

One can easily verify that these definitions are equal to those given above in the special case that $P$ 
satisfies the realizable-case assumptions ($\target \in \F$ and $\P(Y=\target(X)|X)=1$ for $(X,Y) \sim P$).
%

We begin with the following extension of Theorem~\ref{thm:td-dc-bound}.

\begin{lemma}
\label{lem:td-dc-bound-agnostic}
For general (agnostic) $P$, for any $r_{0} \in (0,1)$,
\begin{equation*}
\dc(r_{0}) \leq \max\left\{ \max_{r \in (r_{0},1)} 16 \Bound{\hat{n}}{\left\lceil \frac{1}{r} \right\rceil}{\frac{1}{20}}, 512 \right\}.
\end{equation*}
\end{lemma}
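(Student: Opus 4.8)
The plan is to run the proof of Theorem~\ref{thm:td-dc-bound} essentially verbatim, with the agnostic version space $\VS_{\F,S,\target}$ and the agnostic compression set $\VScomp_{S,\target}$ in place of $\VS_{\F,S}$ and $\VScomp_{S}$. The only structural difference is that the infimal hypothesis $\target$ need not lie in $\F$, so we can no longer use the fact---central to the realizable argument---that $\target$ is itself a permanent member of every version space and of every ball $\Ball(\target,r)$. To circumvent this, I would work throughout with the \emph{one-sided} disagreement regions: for $A\subseteq\F$, write $\partial_{\target}(A)\eqdef\{x\in\cX:\exists h\in A,\ h(x)\neq\target(x)\}$. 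Then $\DIS(A)\subseteq\partial_{\target}(A)$ for every $A\subseteq\F$, and in particular $P(\partial_{\target}(\Ball(\target,r))\times\cY)\geq\PDIS\Ball(\target,r)$; this last inequality is all that is needed to convert a bound phrased in terms of $\partial_{\target}$ back into one phrased in terms of $\PDIS\Ball(\target,r)$, which is the form the statement requires.

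With this substitution, the combinatorial core is unchanged. Fixing $r\in(0,1)$, $m=\lceil 1/r\rceil$, and $S_{m\setminus i}=S_m\setminus\{(x_i,y_i)\}$, if $x_i\in\partial_{\target}(\VS_{\F,S_{m\setminus i},\target})$ then $(x_i,y_i)\in\VScomp_{S_m,\target}$: otherwise $\VScomp_{S_m,\target}\subseteq S_{m\setminus i}$ would give $\VS_{\F,S_{m\setminus i},\target}=\VS_{\F,S_m,\target}$, yet every $h\in\VS_{\F,S_m,\target}$ has $h(x_i)=\target(x_i)$ (since $(x_i,y_i)\in S_m$), contradicting $x_i\in\partial_{\target}(\VS_{\F,S_{m\setminus i},\target})$. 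Hence $\hatn{m}\geq\sum_{i=1}^m\ind_{\partial_{\target}(\VS_{\F,S_{m\setminus i},\target}\cap\Ball(\target,r))}(x_i)$. Writing $\bar D=\partial_{\target}(\Ball(\target,r))$ and $\bar D_{m\setminus i}=\partial_{\target}(\VS_{\F,S_{m\setminus i},\target}\cap\Ball(\target,r))\subseteq\bar D$, the decomposition of $\P(\hatn{m}\leq (1/16) m P(\bar D\times\cY))$ into a ``Chernoff'' term for $\sum_i\ind_{\bar D}(x_i)$ (valid in the case $\PDIS\Ball(\target,r)m>512$, since then $mP(\bar D\times\cY)>512$) and a ``Markov'' term for $\sum_i[\ind_{\bar D}(x_i)-\ind_{\bar D_{m\setminus i}}(x_i)]$ proceeds exactly as before; the exchangeability identity $\E[\sum_i\ind_{\bar D_{m\setminus i}}(x_i)]=m\E[P(\bar D_{m\setminus m}\times\cY)]$ and the case split ($\PDIS\Ball(\target,r)m\leq512$ versus $>512$, the latter forcing $r<1/511<1/2$ and hence $(1-r)^{\lceil 1/r\rceil-1}\geq1/4$) are unaffected by noise.

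The one external input---the bound $\E[P(\bar D_{m\setminus m}\times\cY)]\geq(1-r)^{m-1}\PDIS\Ball(\target,r)$, taken in the realizable proof from \citet*{Hanneke11} via an argument that uses $\target\in\Ball(\target,r)$---is the main (and essentially the only) obstacle, and I would re-prove the version actually needed directly: condition on the fresh point $x$; if $x\in\bar D$, choose a witness $h_x\in\Ball(\target,r)$ with $h_x(x)\neq\target(x)$; since $\PDIS\{h_x,\target\}\leq r$ and the points $x_1,\dots,x_{m-1}$ are i.i.d.\ and independent of $x$, with probability at least $(1-r)^{m-1}$ we have $h_x(x_i)=\target(x_i)$ for all $i\leq m-1$, so that $h_x\in\VS_{\F,S_{m-1},\target}\cap\Ball(\target,r)$ and therefore $x\in\bar D_{m\setminus m}$; integrating over $x$ gives $\E[P(\bar D_{m\setminus m}\times\cY)]\geq(1-r)^{m-1}P(\bar D\times\cY)\geq(1-r)^{m-1}\PDIS\Ball(\target,r)$. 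This derivation never uses $\target\in\F$, so the numerical constants of Theorem~\ref{thm:td-dc-bound} carry over unchanged, yielding $\PDIS\Ball(\target,r)/r\leq\max\{16\Bound{\hat{n}}{m}{\frac{1}{20}},512\}$ for every $r\in(0,1)$; taking the supremum over $r\in(r_0,1)$ (and using $\PDIS\Ball(\target,r)\leq1\leq r$ when $r\geq1$) completes the proof.
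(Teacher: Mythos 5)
Your proof is correct, but it takes a genuinely different route from the paper's. The paper argues by \emph{reduction}: it forms the realized distribution $P'$ (same marginal over $\cX$, $Y=\target(X)$ a.s.), augments the class to $\F\cup\{\target\}$ when $\target\notin\F$, and then applies Theorem~\ref{thm:td-dc-bound} as a black box, using only the observations that (i) $\dc(r_0)$ and $\Bound{\hat{n}}{\cdot}{\cdot}$ depend on $P$ only through its marginal and through $\target$, (ii) enlarging $\F$ to $\F\cup\{\target\}$ can only increase $\dc$, and (iii) $\hat{n}(\F\cup\{\target\},S_m,\target)=\hat{n}(\F,S_m,\target)$ because adding $\target$ to both sides of $\VS_{\F,C,\target}=\VS_{\F,S_m,\target}$ preserves the equivalence. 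Your proof instead \emph{re-derives} the realizable argument in the agnostic setting, replacing the two-sided region $\DIS(\cdot)$ with the one-sided region $\partial_\target(\cdot)$ wherever the realizable proof implicitly used $\target\in\VS_{\F,\cdot}\cap\Ball(\target,r)$ to equate the two, and supplying a short self-contained proof of the one-sided analogue of the inequality $\E[P(D_{m\setminus m}\times\cY)]\geq(1-r)^{m-1}\PDIS\Ball(\target,r)$ from \citet*{Hanneke11} (which the realizable proof invokes as an external result). Each inequality you need ($\partial_\target(\VS_{\F,S_{m\setminus i},\target})$ witnesses membership in $\VScomp_{S_m,\target}$; $\bar D_{m\setminus i}\subseteq\bar D$; $\DIS\subseteq\partial_\target$ so $P(\bar D\times\cY)\geq\PDIS\Ball(\target,r)$; the exchangeability identity; the witness-function argument giving the $(1-r)^{m-1}$ factor) checks out, so the same numerical constants carry over. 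The paper's reduction is shorter and re-uses the realizable theorem as a black box; your direct argument is longer but self-contained, and it makes explicit that the realizable proof of Theorem~\ref{thm:td-dc-bound} never actually needs $\target\in\F$ beyond notational convenience---the one-sided disagreement regions are what the argument truly tracks.
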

\begin{proof}
First note that $\dc(r_{0})$ and $\Bound{\hat{n}}{\left\lceil \frac{1}{r} \right\rceil}{\frac{1}{20}}$ 
depend on $P$ only via $\target$ and the marginal $P(\cdot \times \cY)$ of $P$ over $\cX$ (in both the realizable case and agnostic case).
Define a distribution $P^{\prime}$ with marginal $P^{\prime}(\cdot \times \cY) = P(\cdot \times \cY)$ over $\cX$,
and with $\P(Y=\target(x)|X=x)=1$ for all $x \in \cX$, where $(X,Y) \sim P^{\prime}$.
In particular, in the special case that $\target \in \F$ in the agnostic case, 
we have that $P^{\prime}$ is a distribution in the realizable case, 
with identical values of $\dc(r_{0})$ and $\Bound{\hat{n}}{\left\lceil \frac{1}{r} \right\rceil}{\frac{1}{20}}$
as $P$, so that Theorem~\ref{thm:td-dc-bound} (applied to $P^{\prime}$) implies the result.
On the other hand, when $P$ is a distribution with $\target \notin \F$, let $\dc^{\prime}(r_{0})$
denote the disagreement coefficient of $\F \cup \{\target\}$ with respect to $P^{\prime}$ (or equivalently $P$),
and for $m \in \nats$, let ${\cal{B}}^{\prime}_{\hat{n}}(m,1/20) \eqdef \min\left\{ b \in \nats \cup \{0\} : \P(\hat{n}(\F\cup\{\target\},S_{m},\target) \leq b) \geq 19/20 \right\}$.
In particular, since $\F \subseteq \F \cup \{\target\}$, 
we have $\dc(r_{0}) \leq \dc^{\prime}(r_{0})$, 
and since $P^{\prime}$ is a realizable-case distribution with respect to the hypothesis class $\F \cup \{\target\}$,
Theorem~\ref{thm:td-dc-bound} (applied to $P^{\prime}$ and $\F \cup \{\target\}$) implies 
$$
\dc^{\prime}(r_{0}) \leq \max\left\{ \max_{r \in (r_{0},1)} 16 {\cal{B}}^{\prime}_{\hat{n}}\left(\left\lceil \frac{1}{r} \right\rceil,\frac{1}{20}\right), 512 \right\}.
$$
Finally, note that for any $m \in \nats$ and sets $C, S \in (\cX \times \cY)^{m}$, 
$\VS_{\F \cup \{\target\},C,\target} = \VS_{\F,C,\target} \cup \{\target\}$ and $\VS_{\F \cup \{\target\},S,\target} = \VS_{\F,S,\target} \cup \{\target\}$,
so that $\VS_{\F \cup \{\target\},C,\target} = \VS_{\F \cup \{\target\},S,\target}$ if and only if $\VS_{\F,C,\target} = \VS_{\F,S,\target}$.
Thus, $\hat{n}(\F\cup\{\target\},S_{m},\target) = \hat{n}(\F,S_{m},\target)$,
so that ${\cal{B}}^{\prime}_{\hat{n}}\left(\left\lceil \frac{1}{r} \right\rceil, \frac{1}{20}\right) = \Bound{\hat{n}}{\left\lceil \frac{1}{r} \right\rceil}{\frac{1}{20}}$,
which implies the result.
\end{proof}

\subsection{Label complexity bound for agnostic active learning}

$A^2$ (\emph{Agnostic Active}) was the first general-purpose agnostic
active learning algorithm with proven improvement in error guarantees
compared to passive learning.  The original work of \citet*{balcan2006agnostic},
which first introduced this algorithm, also provided specialized proofs that
the algorithm achieves an exponential label complexity speedup (for the low accuracy regime)
compared to passive learning for a few simple cases, including: threshold
functions, and homogenous linear separators under a uniform distribution
over the sphere.  Additionally, \citet*{Hanneke07}
provided a general bound on the label complexity of $A^2$, expressed in
terms of the disagreement coefficient, so that any bound on the disagreement
coefficient translates into a bound on the label complexity of agnostic
active learning with $A^2$.  Inspired by the $A^2$ algorithm, other noise-robust
active learning algorithms have since been proposed, with improved
label complexity bounds compared to those proven by \citet*{Hanneke07}
for $A^2$, while still expressed in terms of the disagreement coefficient
\citep*[see e.g.,][]{dasgupta2007general,hannekestatistical}.  As an example
of such results, the following result was proven by \citet*{dasgupta2007general}.

\begin{theorem}[\citealp*{dasgupta2007general}]
\label{thm:A2}
There exists a finite universal constant $c > 0$ such that,
for any $\epsilon,\delta \in (0,1/2)$,
using hypothesis class $\F$,  and given the input $\delta$ and a budget $n$ on the number of label requests,
the active learning algorithm of \citet*{dasgupta2007general} requests at most $n$ labels,\footnote{This result
applies to a slightly modified variant of the algorithm of \citet*{dasgupta2007general}, studied by \citet*{hannekeAnnals11},
which terminates after a given number of label requests, rather than after a given number of unlabeled samples.
The same is true of Theorem~\ref{thm:A2-td-bound} and Corollary~\ref{cor:linsep-gaussian-agnostic}.}
and if 
$$
n \geq c \dc(\er(\target) + \epsilon)\left(\frac{\er(\target)^2}{\epsilon^2} + 1\right)\left(d\log\left(\frac{1}{\epsilon}\right) + \log\left(\frac{1}{\delta}\right)\right)\log\left(\frac{1}{\epsilon}\right),
$$
then with probability at least $1-\delta$, the classifier $\hat{f} \in \F$ it produces satisfies
$$
\er(\hat{f}) \leq \er(\target) + \epsilon.
$$
\end{theorem}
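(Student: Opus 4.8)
The statement is a restatement of the label complexity guarantee established by \citet*{dasgupta2007general} for their general-purpose agnostic active learning algorithm (often called the DHM algorithm), so the plan is to recall the structure of that argument and indicate where each factor in the bound originates. The algorithm processes the unlabeled stream one point at a time, maintaining at each step a set of ``plausible'' hypotheses defined by an empirical error slack; for each new point $x$, it requests the label only if the value of $\target(x)$ cannot already be inferred from the plausible set, i.e.\ only if $x$ lies in the region of disagreement of the current set. The classifier $\hat{f}$ returned is (essentially) the empirical risk minimizer over the collected labeled examples.

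The analysis proceeds in $O(\log(1/\epsilon))$ phases indexed by a geometrically shrinking accuracy parameter $\epsilon'$. First I would invoke standard agnostic (relative-deviation / Bernstein-type) generalization bounds to show that, with probability $1-\delta$, throughout the run the plausible set is sandwiched between $\Ball(\target, c'(\er(\target)+\epsilon'))$ and the true version space at scale $\epsilon'$; this is where the factor $(\er(\target)^{2}/\epsilon^{2} + 1)$ enters, since in the agnostic regime one needs $\tilde{O}(d\,\er(\target)^{2}/\epsilon^{2} + d/\epsilon)$ samples to certify error $\epsilon$, which collapses to $\tilde{O}(d/\epsilon)$ in the realizable case $\er(\target)=0$. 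Next, the expected number of queries made while processing a batch of $m$ unlabeled points in a phase of radius $r$ is at most $m\,\PDIS\Ball(\target,r) \le m\, r\, \dc(r)$, and summing $m\,r$ over the geometric schedule contributes the remaining $\log(1/\epsilon)$ factor together with the appearance of $\dc(\er(\target)+\epsilon)$ (using monotonicity of $\dc$ and the fact that the phase radius never needs to drop below $\Theta(\er(\target)+\epsilon)$). Combining the per-phase query bounds via a union bound over phases with the failure probability of the generalization bounds yields the stated bound on $n$ and the error guarantee $\er(\hat{f}) \le \er(\target)+\epsilon$.

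The main obstacle in a fully self-contained proof is the careful agnostic generalization bookkeeping --- tracking the one-sided relative-deviation bounds that permit eliminating hypotheses and inferring labels without ever discarding a near-optimal hypothesis --- and verifying that the disagreement-region mass at each phase is controlled by $\dc$ evaluated at the correct radius. Since Theorem~\ref{thm:A2} is quoted from \citet*{dasgupta2007general} (in the label-budget variant analyzed by \citet*{hannekeAnnals11}), the actual ``proof'' here is a citation; the feature we genuinely use downstream is only the \emph{form} of the dependence on $\dc(\er(\target)+\epsilon)$, which lets any new disagreement-coefficient bound --- such as the one from Theorem~\ref{thm:rectangles} or the agnostic analogue of Theorem~\ref{thm:linsep-gaussian} obtained via Lemma~\ref{lem:td-dc-bound-agnostic} --- be converted into an agnostic label complexity bound.
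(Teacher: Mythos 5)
Your proposal correctly identifies that the paper offers no proof of this theorem: it is quoted directly from \citet*{dasgupta2007general} (as adapted by \citet*{hannekeAnnals11} for the label-budget termination rule), and the paper uses it purely as a black box to be combined with the new disagreement-coefficient bounds. Your high-level sketch of where each factor in the bound originates (relative-deviation generalization bookkeeping giving the $(\er(\target)^{2}/\epsilon^{2}+1)$ factor, disagreement-region mass controlled by $\dc$ giving the $\dc(\er(\target)+\epsilon)$ factor, and the geometric phase schedule giving the extra $\log(1/\epsilon)$) is a reasonable summary of the cited argument but goes beyond what the paper itself supplies; the essential agreement with the paper is your closing observation that the ``proof'' here is a citation and what matters downstream is only the form of the dependence on $\dc(\er(\target)+\epsilon)$.
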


Combined with the results above, this implies the following theorem.

\begin{theorem}
\label{thm:A2-td-bound}
There exists a finite universal constant $c > 0$ such that,
for any $\epsilon,\delta \in (0,1/2)$,
using hypothesis class $\F$,  and given the input $\delta$ and a budget $n$ on the number of label requests,
the active learning algorithm of \citet*{dasgupta2007general} requests at most $n$ labels,
and if 
$$
n \geq c\left(\max_{r > \er(\target)+\epsilon} \Bound{\hat{n}}{\left\lceil\frac{1}{r}\right\rceil}{\frac{1}{20}} + 1\right)\left(\frac{\er(\target)^2}{\epsilon^2} + 1\right)\left(d\log\left(\frac{1}{\epsilon}\right) + \log\left(\frac{1}{\delta}\right)\right)\log\left(\frac{1}{\epsilon}\right),
$$
then with probability at least $1-\delta$, the classifier $\hat{f} \in \F$ it produces satisfies
$$
\er(\hat{f}) \leq \er(\target) + \epsilon.
$$
\end{theorem}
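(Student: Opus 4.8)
The plan is to obtain Theorem~\ref{thm:A2-td-bound} as an immediate corollary of Theorem~\ref{thm:A2} and Lemma~\ref{lem:td-dc-bound-agnostic}: the latter was proved precisely to control the (agnostic) disagreement coefficient in terms of $\Bound{\hat{n}}{\cdot}{1/20}$, and $\dc(\er(\target)+\epsilon)$ is the only quantity in the label complexity bound of Theorem~\ref{thm:A2} that is not already present in the target bound (the factors involving $d$, $\er(\target)$, $\epsilon$, $\delta$, and $\log(1/\epsilon)$ carry over verbatim). Likewise, the ``requests at most $n$ labels'' clause is inherited directly from Theorem~\ref{thm:A2} and needs no further argument.

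First I would dispose of the degenerate case $\er(\target)+\epsilon \geq 1$: then for every $r > \er(\target)+\epsilon$ we have $\Ball(\target,r) = \F$, so $\PDIS \Ball(\target,r)/r \leq 1$ and hence $\dc(\er(\target)+\epsilon) = 1$, which makes the claimed sufficient condition hold trivially. So assume $r_0 \eqdef \er(\target)+\epsilon \in (0,1)$. Lemma~\ref{lem:td-dc-bound-agnostic} then gives
\[
\dc(r_0) \leq \max\left\{ \max_{r \in (r_0,1)} 16\,\Bound{\hat{n}}{\lceil 1/r \rceil}{1/20},\ 512 \right\} \leq 512 + 16 \max_{r > r_0} \Bound{\hat{n}}{\lceil 1/r \rceil}{1/20},
\]
where enlarging the range of $r$ from $(r_0,1)$ to $(r_0,\infty)$ is harmless, since for $r \geq 1$ we have $\lceil 1/r \rceil = 1$ and $\hat{n}(\F,S_1,\target) \in \{0,1\}$, so $\Bound{\hat{n}}{1}{1/20} \leq 1$ is already dominated by the constant $512$. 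Using $16x \leq 512x$ this yields the clean form
\[
\dc(\er(\target)+\epsilon) \leq 512\left( \max_{r > \er(\target)+\epsilon} \Bound{\hat{n}}{\lceil 1/r \rceil}{1/20} + 1 \right),
\]
i.e. $\dc(\er(\target)+\epsilon) = O\!\left( \max_{r > \er(\target)+\epsilon} \Bound{\hat{n}}{\lceil 1/r \rceil}{1/20} + 1 \right)$.

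Finally I would substitute this inequality into the sufficient condition of Theorem~\ref{thm:A2}: any $n$ at least
\[
512\,c \left( \max_{r > \er(\target)+\epsilon} \Bound{\hat{n}}{\lceil 1/r \rceil}{1/20} + 1 \right)\left(\frac{\er(\target)^2}{\epsilon^2}+1\right)\left(d\log\left(\frac{1}{\epsilon}\right)+\log\left(\frac{1}{\delta}\right)\right)\log\left(\frac{1}{\epsilon}\right)
\]
is in particular at least the threshold appearing in Theorem~\ref{thm:A2}, so the conclusion $\er(\hat f) \leq \er(\target)+\epsilon$ with probability at least $1-\delta$ follows; renaming the universal constant $512c$ (and taking the maximum with the constant used in the degenerate case) as $c$ gives the statement. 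There is essentially no hard step here: the only points requiring care are the degenerate case $\er(\target)+\epsilon \geq 1$, the (harmless) extension of the range of $r$ beyond $1$, and the bookkeeping that turns the additive ``$512 + 16(\cdot)$'' into the multiplicative ``$(\cdot)+1$'' form absorbed by the universal constant.
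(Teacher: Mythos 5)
Your proposal is correct and follows essentially the same route as the paper's proof: invoke Lemma~\ref{lem:td-dc-bound-agnostic} to bound $\dc(\er(\target)+\epsilon)$ by $512\bigl(\max_{r > \er(\target)+\epsilon}\Bound{\hat{n}}{\lceil 1/r\rceil}{1/20}+1\bigr)$, then substitute into Theorem~\ref{thm:A2}. Your explicit treatment of the degenerate case $\er(\target)+\epsilon \geq 1$ (which falls outside the hypothesis $r_0 \in (0,1)$ of the lemma) is a small point of additional care that the paper glosses over, and your remark about extending the range of $r$ past $1$ is harmless though unnecessary, since enlarging the domain of a maximum only weakens an upper bound.
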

\begin{proof}
By Lemma~\ref{lem:td-dc-bound-agnostic},
\begin{align*}
\dc(\er(\target)+\epsilon)
& \leq \max\left\{ \max_{r \in (\er(\target)+\epsilon,1)} 16 \Bound{\hat{n}}{\left\lceil \frac{1}{r} \right\rceil}{\frac{1}{20}}, 512\right\}
\\ & \leq 512 \left( \max_{r > \er(\target)+\epsilon} \Bound{\hat{n}}{\left\lceil \frac{1}{r} \right\rceil}{\frac{1}{20}} + 1\right).
\end{align*}
Plugging this into Theorem~\ref{thm:A2} yields the result.
\end{proof}


Interestingly, from the perspective of bounding the label complexity of agnostic active learning in general, 
the result in Theorem~\ref{thm:A2-td-bound} sometimes improves over a related bound proven by \citet*{Hanneke_COLT_07}
(for a different algorithm).  Specifically, compared to the result of \citet*{Hanneke_COLT_07}, this result 
maintains an interesting dependence on $\target$, whereas the bound of \citet*{Hanneke_COLT_07}
effectively replaces the factor $\Bound{\hat{n}}{\lceil 1/r \rceil}{1/20}$ with the maximum of 
this quantity over the choice of $\target$.\footnote{There are a few other differences, which are usually minor.
For instance, the bound of \citet*{Hanneke_COLT_07} uses $r \approx \er(\target)+\epsilon$ rather than maximizing
over $r > \er(\target)+\epsilon$.  That result additionally replaces ``$1/20$'' with a value $\delta^{\prime} \approx \delta / n$.}
Also, while the result of \citet*{Hanneke_COLT_07} is proven for an algorithm that requires explicit access to a 
value $\eta \approx \er(\target)$ to obtain the stated label complexity, the label complexity in Theorem~\ref{thm:A2-td-bound}
is achieved by the algorithm of \citet*{dasgupta2007general}, which requires no such extra parameters.

As an application of Theorem~\ref{thm:A2-td-bound}, we have the following corollary.

\begin{corollary}
\label{cor:linsep-gaussian-agnostic}
For $t,k \in \nats$ and $c \in (0,\infty)$, there exists a finite constant $c_{k,t,c} > 0$ 
such that, for $\F$ the class of linear separators on $\reals^{k}$, 
and for $P$ with marginal distribution over $\cX$ that is a mixture of $t$ 
multivariate normal distributions with diagonal covariance matrices of full rank,
for any $\epsilon,\delta \in (0,1/2)$ with 
$\epsilon \geq \frac{\er(\target)}{c}$, 
using hypothesis class $\F$, and given the input $\delta$ and a budget $n$ on the number of label requests,
the active learning algorithm of \citet*{dasgupta2007general} requests at most $n$ labels, and if
\begin{equation*}
n \geq c_{k,t,c} \left(\log\left( \frac{1}{\epsilon} \right)\right)^{k+1} \log\left(\frac{1}{\delta}\right),
\end{equation*}
then with probability at least $1-\delta$, the classifier $\hat{f} \in \F$ it produces satisfies
$\er(\hat{f}) \leq \er(\target) + \epsilon$.
\end{corollary}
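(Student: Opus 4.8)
The plan is to feed the known bound on the version space compression set size for linear separators under mixtures of Gaussians (Theorem~\ref{thm:linsep-gaussian}) into the agnostic label complexity bound of Theorem~\ref{thm:A2-td-bound}, and then simplify. First I would apply Theorem~\ref{thm:A2-td-bound}: it suffices to exhibit a constant $c_{k,t,c}$ for which
\begin{equation*}
c\left(\max_{r > \er(\target)+\epsilon} \Bound{\hat{n}}{\left\lceil\frac{1}{r}\right\rceil}{\frac{1}{20}} + 1\right)\left(\frac{\er(\target)^2}{\epsilon^2} + 1\right)\left(d\log\left(\frac{1}{\epsilon}\right) + \log\left(\frac{1}{\delta}\right)\right)\log\left(\frac{1}{\epsilon}\right) \leq c_{k,t,c}\left(\log\frac{1}{\epsilon}\right)^{k+1}\log\frac{1}{\delta},
\end{equation*}
after which the remaining conclusions (that at most $n$ labels are requested and that $\er(\hat{f}) \leq \er(\target)+\epsilon$ with probability at least $1-\delta$) are inherited directly from that theorem. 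I would bound the three nontrivial factors separately: (i) the hypothesis $\epsilon \geq \er(\target)/c$ gives $\er(\target)^2/\epsilon^2 + 1 \leq c^2+1$, a constant depending only on $c$; (ii) the VC dimension of linear separators on $\reals^k$ satisfies $d = O(k)$, so $d\log(1/\epsilon) + \log(1/\delta) = O\big(k\log(1/\epsilon)\log(1/\delta)\big)$ once we use $\epsilon,\delta < 1/2$ to lower-bound $\log(1/\epsilon)$ and $\log(1/\delta)$ by $\log 2$; and (iii) the compression-set factor is controlled by Theorem~\ref{thm:linsep-gaussian} as described next.

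For factor (iii), the subtlety is that Theorem~\ref{thm:linsep-gaussian} is stated in the realizable case, whereas $\Bound{\hat{n}}{m}{1/20}$ here refers to the agnostic quantity of Section~\ref{sec:noise}. I would reuse the reduction from the proof of Lemma~\ref{lem:td-dc-bound-agnostic}: $\Bound{\hat{n}}{m}{1/20}$ depends on $P$ only through the infimal hypothesis $\target$ and the $\cX$-marginal of $P$; passing to the distribution $P^{\prime}$ with that same marginal and with $\target$ as its realizable-case target, and using the identity $\hat{n}(\F\cup\{\target\},S_m,\target) = \hat{n}(\F,S_m,\target)$, the agnostic $\Bound{\hat{n}}{m}{1/20}$ for $P$ equals its realizable-case value for $P^{\prime}$. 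Since the $\cX$-marginal of $P^{\prime}$ is still a mixture of $t$ multivariate normals with diagonal full-rank covariances, Theorem~\ref{thm:linsep-gaussian} applies and gives $\Bound{\hat{n}}{m}{1/20} \leq c_{k,t}(\log m)^{k-1}$ for all $m \geq 2$. Because $(\log m)^{k-1}$ is nondecreasing in $m$, and any $r > \er(\target)+\epsilon \geq \epsilon$ has $\lceil 1/r\rceil \leq \lceil 1/\epsilon\rceil$, while $\Bound{\hat{n}}{1}{1/20} \leq 1$, I get $\max_{r > \er(\target)+\epsilon}\Bound{\hat{n}}{\lceil 1/r\rceil}{1/20} \leq \max\{1,\, c_{k,t}(\log\lceil 1/\epsilon\rceil)^{k-1}\} = O_{k,t}\big((\log(1/\epsilon))^{k-1}\big)$, using $\lceil 1/\epsilon\rceil \leq 3/\epsilon$ for $\epsilon < 1/2$.

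Combining (i)--(iii), the left-hand side above is $O_{k,t,c}\big((\log(1/\epsilon))^{k-1}\big)\cdot O\big(k\log(1/\epsilon)\log(1/\delta)\big)\cdot\log(1/\epsilon) = O_{k,t,c}\big((\log(1/\epsilon))^{k+1}\log(1/\delta)\big)$, which is the desired bound once all $(k,t,c)$-dependent constants are absorbed into $c_{k,t,c}$. The only genuinely delicate step is (iii) --- verifying that the realizable-case Theorem~\ref{thm:linsep-gaussian} transfers to the agnostic redefinition of $\Bound{\hat{n}}{m}{\delta}$ --- which is why I would write out the reduction to $P^{\prime}$ explicitly rather than treating it as routine; everything else is arithmetic with logarithmic factors.
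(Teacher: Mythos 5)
Your overall plan --- plug the compression-set bound of Theorem~\ref{thm:linsep-gaussian} into Theorem~\ref{thm:A2-td-bound} and simplify --- is the same as the paper's, and your handling of factors (i) and (ii) and the final arithmetic (absorbing $\er(\target)^2/\epsilon^2 + 1 \leq c^2+1$ and $d = k+1$ into $(k,t,c)$-dependent constants, then collecting powers of $\log(1/\epsilon)$ and a single $\log(1/\delta)$) is correct. But you have correctly identified step (iii) as the delicate one, and your treatment of it has a genuine gap.

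The reduction to $P^{\prime}$ borrowed from Lemma~\ref{lem:td-dc-bound-agnostic} produces a realizable-case distribution \emph{with respect to $\F \cup \{\target\}$}, not with respect to $\F$, unless you already know $\target \in \F$. In the $\target \notin \F$ branch of that lemma's proof, the identity $\hat{n}(\F \cup \{\target\}, S_m, \target) = \hat{n}(\F, S_m, \target)$ relates the agnostic compression size to a realizable-case compression size \emph{for the enlarged class} $\F \cup \{\target\}$. Theorem~\ref{thm:linsep-gaussian} is stated only for $\F$ the space of linear separators under a realizable-case $P$ (Sections 2--5 assume $\target \in \F$ throughout), so it says nothing about $\F \cup \{\target\}$ when $\target$ is not itself a linear separator. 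You cannot conclude from the marginal being a mixture of Gaussians alone that the realizable bound applies: the theorem's hypothesis includes the class and the existence of a target inside it. Thus ``Theorem~\ref{thm:linsep-gaussian} applies and gives $\Bound{\hat{n}}{m}{1/20} \leq c_{k,t}(\log m)^{k-1}$'' is not justified in the case $\target \notin \F$.

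The paper closes this gap by a separate argument, occupying most of its proof of the corollary, showing that in this specific setting one can in fact take $\target \in \F$: for any sequence of linear separators converging to $f$ in $P$-measure, either the offsets diverge (giving a constant classifier in $\F$ that agrees with $f$ a.s.) or Bolzano--Weierstrass yields a convergent subsequence of parameter vectors whose limit is a linear separator agreeing with $f$ almost surely under $P$ (using that $P$ assigns zero mass to any hyperplane). This shows the infimum $\inf_{h\in\F}\er(h)$ is attained in $\F$, so by the paper's convention $\target\in\F$, and then Theorem~\ref{thm:linsep-gaussian} applies directly to $P^{\prime}$ with class $\F$. You need this $\target\in\F$ argument (or some substitute establishing the bound for $\F \cup \{\target\}$) to make the reduction sound; as written, your proposal silently assumes the $\target\in\F$ case.
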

\begin{proof}
Let $\F$ and $P$ be as described above.
First, we argue that $\target \in \F$.
Fix any classifier $f$ with $\inf_{h \in \F} P((x,y) : h(x) \neq f(x))=0$.
%
There must exist a sequence $\{ (b^{(t)},w^{(t)}_{1},\ldots,w^{(t)}_{k}) \}_{k=1}^{\infty}$ in $\reals^{k+1}$ with $\sum_{i=1}^{k} (w^{(t)}_{i})^{2} = 1$ for all $t$,
s.t. $P\left((x_1,\ldots,x_k,y) : \sign\left( b^{(t)} + \sum_{i=1}^{k} x_{i} w^{(t)}_{i} \right) \neq f(x_{1},\ldots,x_{k}) \right) \to 0$.
If $\limsup\limits_{t \to \infty} b^{(t)} = \infty$, then $\exists t_{j} \to \infty$ with $b^{(t_{j})} \to \infty$,
and since every $(x_1,\ldots,x_k) \in \reals^{k}$ has $\sum_{i=1}^{k} x_{i} w^{(t)}_{i} \geq - \|x\|$, 
we have that $b^{(t_{j})} + \sum_{i=1}^{k} x_{i} w^{(t_{j})}_{i} \to \infty$,
which implies $\sign\left( b^{(t_{j})} + \sum_{i=1}^{k} x_{i} w^{(t_{j})}_{i} \right) \to 1$ for all $(x_1,\ldots,x_k) \in \reals^{k}$.
Similarly, if $\liminf\limits_{t \to \infty} b^{(t)} = -\infty$, then $\exists t_{j} \to \infty$ with
$\sign\left( b^{(t_{j})} + \sum_{i=1}^{k} x_{i} w^{(t_{j})}_{i} \right) \to -1$ for all $(x_1,\ldots,x_k) \in \reals^{k}$.
Otherwise, if $\limsup_{t \to \infty} b^{(t)} < \infty$ and $\liminf_{t \to \infty} b^{(t)} > -\infty$, then the sequence 
$\{ (b^{(t)},w^{(t)}_{1},\ldots,w^{(t)}_{k}) \}_{t=1}^{\infty}$ is \emph{bounded} in $\reals^{k+1}$.  Therefore, the 
Bolzano-Weierstrass Theorem implies it contains a convergent subsequence:
that is, $\exists t_{j} \to \infty$ s.t. $(b^{(t_{j})},w^{(t_{j})}_{1},\ldots,w^{(t_{j})}_{k})$ converges.
Furthermore, since $\{ w \in \reals^{k} : \|w\|=1 \}$ is closed, and $\{b^{(t)} : t \in \nats\} \subseteq [ \inf_t b^{(t)}, \sup_t b^{(t)} ]$,
which is a closed subset of $\reals$, 
$\exists (b,w_{1},\ldots,w_{k}) \in \reals^{k+1}$ with $\sum_{i=1}^{k} w_{i}^{2} = 1$ such that 
$(b^{(t_{j})},w^{(t_{j})}_{1},\ldots,w^{(t_{j})}_{k}) \to (b,w_{1},\ldots,w_{k})$.
Continuity of linear functions implies, $\forall (x_1,\ldots,x_k) \in \reals^{k}$, 
$b^{(t_{j})} + \sum_{i=1}^{k} x_{i} w^{(t_{j})}_{i} \to b + \sum_{i=1}^{k} x_{i} w_{i}$.
Therefore, every $(x_1,\ldots,x_k) \in \reals^{k}$ with $b + \sum_{i=1}^{k} x_{i} w_{i} > 0$ has
$\sign\left( b^{(t_{j})} + \sum_{i=1}^{k} x_{i} w^{(t_{j})}_{i} \right) \to 1$, and every $(x_1,\ldots,x_k) \in \reals^{k}$ with 
$b + \sum_{i=1}^{k} x_{i} w_{i} < 0$ has $\sign\left( b^{(t_{j})} + \sum_{i=1}^{k} x_{i} w^{(t_{j})}_{i} \right) \to -1$.
Since $P\left( (x_1,\ldots,x_k,y) : b + \sum_{i=1}^{k} x_{i} w_{i} = 0 \right) = 0$, this implies 
$(x_1,\ldots,x_k) \mapsto \sign\left( b^{(t_{j})} + \sum_{i=1}^{k} x_{i} w^{(t_{j})}_{i} \right)$ converges to 
$(x_1,\ldots,x_k) \mapsto \sign\left( b + \sum_{i=1}^{k} x_{i} w_{i} \right)$ almost surely [$P$].

Thus, in each case, 
$\exists t_{j} \to \infty$ and $h \in \F$ s.t.
$(x_1,\ldots,x_k) \mapsto \sign\left( b^{(t_{j})} + \sum_{i=1}^{k} x_{i} w^{(t_{j})}_{i} \right)$ 
converges to $h$ a.s. [$P$]. 
Since convergence almost surely implies convergence in probability, 
we have
$P\left( (x_1,\ldots,x_k,y) : \sign\left( b^{(t_{j})} + \sum_{i=1}^{k} x_{i} w^{(t_{j})}_{i} \right) \neq h(x_1,\ldots,x_k) \right) \to 0$.
Furthermore, by assumption, $P\left( (x_1,\ldots,x_k,y) : \sign\left( b^{(t_{j})} + \sum_{i=1}^{k} x_{i} w^{(t_{j})}_{i} \right) \neq f(x_1,\ldots,x_k) \right) \to 0$ as well.
Thus, a union bound implies $P( (x,y) : h(x) \neq f(x) ) = 0$.
In particular, we have that for any $f$ with $\inf_{g \in \F} P((x,y) : g(x) \neq f(x)) = 0$ and $\er(f) = \inf_{g \in \F} \er(g)$, 
$\exists h \in \F$ with $P((x,y) : f(x) \neq h(x)) = 0$, and hence $\er(h) = \inf_{g \in \F} \er(g)$.  Thus, we may assume 
$\target \in \F$ in this setting.


Therefore, in this scenario, Theorem~\ref{thm:linsep-gaussian} implies
\begin{equation*}
\max_{r > \er(\target)+\epsilon} \Bound{\hat{n}}{\left\lceil \frac{1}{r} \right\rceil}{\frac{1}{20}} + 1
\leq c_{k,t}^{(1)} \left(\log\left( \frac{2}{\er(\target)+\epsilon} \right)\right)^{k-1},
\end{equation*}
for an appropriate $(k,t)$-dependent constant $c_{k,t}^{(1)} \in (0,\infty)$.
Plugging this into Theorem~\ref{thm:A2-td-bound}, and recalling that the VC dimension of the class of linear classifiers in $\reals^{k}$ is $k+1$ \citep*[see e.g.,][]{AnthoB99},
we get a bound on the number of label requests of
\begin{align*}
& c_{k,t}^{(2)}\left(\log\left( \frac{2}{\er(\target)+\epsilon} \right)\right)^{k-1} \left(\frac{\er(\target)^2}{\epsilon^2} + 1\right)\left(k\log\left(\frac{1}{\epsilon}\right) + \log\left(\frac{1}{\delta}\right)\right)\log\left(\frac{1}{\epsilon}\right)
\\ & \leq c_{k,t}^{(3)} \left(\log\left(\frac{1}{\epsilon}\right)\right)^{k+1} \left(\frac{\er(\target)^2}{\epsilon^2} + 1\right)\left(k + \log\left(\frac{1}{\delta}\right)\right),
\end{align*}
for appropriate $(k,t)$-dependent constants $c_{k,t}^{(2)},c_{k,t}^{(3)} \in (0,\infty)$.
Since (by assumption) $\epsilon \geq \frac{\er(\target)}{c}$, this is at most
\begin{equation*}
c_{k,t,c}^{(4)} \left(\log\left(\frac{1}{\epsilon}\right)\right)^{k+1} \left(k + \log\left(\frac{1}{\delta}\right)\right)
\leq c_{k,t,c}^{(5)} \left(\log\left(\frac{1}{\epsilon}\right)\right)^{k+1} \log\left(\frac{1}{\delta}\right),
\end{equation*}
for appropriate $(k,t,c)$-dependent constants $c_{k,t,c}^{(4)},c_{k,t,c}^{(5)} \in (0,\infty)$.
Thus, taking $c_{k,t,c} = c_{k,t,c}^{(5)}$ establishes the result.
\end{proof}

An analogous result can be shown for the problem of learning axis-aligned rectangles via Theorem~\ref{thm:rectangles}.

\subsection{Label complexity bound under Mammen-Tsybakov noise}

Since the original work on agnostic active learning discussed above, there have been several other
analyses, expressing the noise conditions in terms of quantities other than the noise rate $\er(\target)$.
Specifically, the following condition of \citet*{MammenT99} has been studied for several algorithms
\citep*[see e.g.,][]{balcan:07,hannekeAnnals11,koltchinskii:10,Hanneke11,hanneke:12b,hannekestatistical,Beygelzimer10,hsu:thesis}.

\begin{condition}[\citealp*{MammenT99}]
\label{cond:noise}
For some $a \in [1,\infty)$ and $\alpha \in [0,1]$, for every $f \in \F$,
$$
\Pr(f(X) \neq \target(X)) \leq a(\er(f) - \er(\target))^ \alpha.
$$
\end{condition}

In particular, for a variant of $A^2$ known as RobustCAL$_{\delta}$, studied by \citet*{Hanneke11,hannekestatistical} and \citet*{hanneke:12b},
the following result is known \citep*[due to][]{hanneke:12b}. 

\begin{theorem}[\citealp*{hanneke:12b}]
\label{thm:robust}
There exists a finite universal constant $c > 0$ such that,
for any $\epsilon,\delta \in (0,1/2)$, for any $n,u \in \nats$, given the arguments $n$ and $u$, 
the RobustCAL$_{\delta}$ algorithm requests at most $n$ labels, and if $u$ is sufficiently large,
and
\begin{equation*}
n \geq c a^2\dc(a\epsilon^\alpha)\left(\frac{1}{\epsilon}\right)^{2-2\alpha}\left(d\log\left(e\dc\left( a\epsilon^\alpha \right)\right) + \log \left(\frac{\log(1/\epsilon)}{\delta}\right)\right)\log\left(\frac{1}{\epsilon}\right),
\end{equation*}
for $a$ and $\alpha$ as in Condition~\ref{cond:noise},
then with probability at least $1-\delta$, the classifier $\hat{f} \in \F$ it returns satisfies
$\er(\hat{f}) \leq \er(\target) + \epsilon$.
\end{theorem}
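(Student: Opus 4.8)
This is the established performance guarantee for RobustCAL$_\delta$ proved by \citet*{hanneke:12b}, so strictly speaking its ``proof'' amounts to invoking that result; here I outline the structure of the underlying argument and indicate where the real work lies. RobustCAL is a disagreement-based method that runs in $O(\log(1/\epsilon))$ epochs, with epoch $k$ aiming at excess error $\epsilon_k \asymp 2^{-k}$. It maintains nested candidate sets $\F = V_1 \supseteq V_2 \supseteq \cdots$; in epoch $k$ it draws a fresh batch of $m_k$ unlabeled examples, requests the label of each one lying in $\DIS(V_k)$, and sets $V_{k+1}$ to be the subset of $V_k$ whose empirical error on the batch exceeds the minimum by at most a threshold calibrated to $\epsilon_k$. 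Only differences of empirical errors between members of $V_k$ are ever consulted, and since $\ind[h(x)\neq y] - \ind[g(x)\neq y]$ vanishes outside $\DIS(\{h,g\}) \subseteq \DIS(V_k)$ for $h,g \in V_k$, the labels requested suffice to compute them; the algorithm halts once the label budget $n$ is exhausted, and requiring $u$ ``sufficiently large'' simply ensures enough unlabeled data to carry all epochs down to scale $\epsilon$.

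The crux --- and the step I expect to be the main obstacle --- is a correctness invariant: on an event of probability $1-\delta$, every epoch $k$ satisfies $\target \in V_k$ and $\sup_{h \in V_k}\bigl(\er(h) - \er(\target)\bigr) \le C\epsilon_k$ for a universal constant $C$. This is proved by a uniform relative (Bernstein-type) deviation bound over $V_k$, exploiting that $Z = \ind[h(X)\neq Y] - \ind[\target(X)\neq Y]$ has $\E[Z^2] \le \Pr(h(X)\neq\target(X)) \le a(\er(h)-\er(\target))^\alpha$ by Condition~\ref{cond:noise}. Feeding this variance bound into a localized deviation inequality shows that a batch of size
\[
m_k \;\asymp\; \frac{a}{\epsilon_k^{\,2-\alpha}}\Bigl(d\log\bigl(e\,\dc(a\epsilon^\alpha)\bigr) + \log\tfrac{\log(1/\epsilon)}{\delta}\Bigr)
\]
makes the empirical error differences accurate to within $c\epsilon_k$ simultaneously over all $h \in V_k$, which preserves the invariant; the capacity term sharpens from the naive $d\log m_k$ to $d\log(e\,\dc(a\epsilon^\alpha))$ through the usual bootstrapping argument bounding the growth function of $\F$ restricted to $\DIS(V_k)$, and a union bound over the $O(\log(1/\epsilon))$ epochs produces the $\log\log(1/\epsilon)$ term.

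It remains to count label requests. By the invariant and Condition~\ref{cond:noise}, $V_k \subseteq \Ball(\target, a(C\epsilon_k)^\alpha)$, so by the definition of the disagreement coefficient $\PDIS V_k \le \PDIS\Ball(\target, aC^\alpha\epsilon_k^\alpha) \le aC^\alpha\epsilon_k^\alpha\,\dc(aC^\alpha\epsilon_k^\alpha)$, and since $\dc$ is nonincreasing and $\epsilon_k \gtrsim \epsilon$ throughout, this is $O\bigl(a\,\epsilon_k^\alpha\,\dc(a\epsilon^\alpha)\bigr)$ after absorbing constants into $a$. Hence the expected number of requests in epoch $k$ is at most $m_k\,\PDIS V_k = O\bigl(a^2\,\dc(a\epsilon^\alpha)\,\epsilon_k^{\,2\alpha-2}\bigl(d\log(e\,\dc(a\epsilon^\alpha))+\log\tfrac{\log(1/\epsilon)}{\delta}\bigr)\bigr)$, and a Chernoff bound controls the actual count around its mean. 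Bounding the sum over the $O(\log(1/\epsilon))$ epochs by the number of epochs times the largest summand --- which, since the exponent $2\alpha-2$ is nonpositive, occurs at the terminal scale $\epsilon_k \asymp \epsilon$ --- yields exactly the stated threshold on $n$, and the halting rule then guarantees at most $n$ labels are ever requested.
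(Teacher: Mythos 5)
The paper itself does not prove Theorem~\ref{thm:robust}; it simply cites the result from \citet*{hanneke:12b} and then uses it as a black box to derive Theorem~\ref{thm:robust-td-bound} and Corollary~\ref{cor:robust_linear}. You correctly recognize this, and the sketch you give of the underlying argument — nested version spaces over $O(\log(1/\epsilon))$ epochs, computation of empirical error \emph{differences} from labels requested only in $\DIS(V_k)$, the Mammen--Tsybakov variance bound $\E[Z^2]=\Pr(h(X)\neq\target(X))\leq a(\er(h)-\er(\target))^\alpha$ driving the localized Bernstein bound, and the translation $V_k\subseteq\Ball(\target,a(C\epsilon_k)^\alpha)$ into a disagreement-coefficient bound on $\PDIS V_k$ — is an accurate high-level account of how that analysis proceeds and reproduces the stated exponent structure and logarithmic factors. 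Since the paper offers no proof to compare against, there is nothing further to adjudicate here.
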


Combined with Theorem~\ref{thm:td-dc-bound}, this implies the following theorem.

\begin{theorem}
\label{thm:robust-td-bound}
There exists a finite universal constant $c > 0$ such that,
for any $\epsilon,\delta \in (0,1/2)$, for any $n,u \in \nats$, given the arguments $n$ and $u$, 
the RobustCAL$_{\delta}$ algorithm requests at most $n$ labels, and if $u$ is sufficiently large,
and
\begin{equation*}
n \geq c a^2\left( \max_{r > a \epsilon^{\alpha}} \Bound{\hat{n}}{\left\lceil\frac{1}{r}\right\rceil}{\frac{1}{20}}+1\right)\left(\frac{1}{\epsilon}\right)^{2-2\alpha}\left(d\log\left(\frac{1}{\epsilon}\right) + \log \left(\frac{1}{\delta}\right)\right)\log\left(\frac{1}{\epsilon}\right),
\end{equation*}
for $a$ and $\alpha$ as in Condition~\ref{cond:noise},
then with probability at least $1-\delta$, the classifier $\hat{f} \in \F$ it returns satisfies
$\er(\hat{f}) \leq \er(\target) + \epsilon$.
\end{theorem}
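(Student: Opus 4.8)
\textbf{Proof proposal for Theorem~\ref{thm:robust-td-bound}.}

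The plan is to follow exactly the pattern used to derive Theorem~\ref{thm:A2-td-bound} from Theorem~\ref{thm:A2}, but now invoking the agnostic disagreement-coefficient bound of Lemma~\ref{lem:td-dc-bound-agnostic} together with Theorem~\ref{thm:robust}. First I would apply Lemma~\ref{lem:td-dc-bound-agnostic} with $r_{0} = a\epsilon^{\alpha}$ to obtain
\begin{equation*}
\dc(a\epsilon^{\alpha})
\leq \max\left\{ \max_{r \in (a\epsilon^{\alpha},1)} 16 \Bound{\hat{n}}{\left\lceil \frac{1}{r} \right\rceil}{\frac{1}{20}}, 512 \right\}
\leq 512\left( \max_{r > a\epsilon^{\alpha}} \Bound{\hat{n}}{\left\lceil \frac{1}{r} \right\rceil}{\frac{1}{20}} + 1 \right),
\end{equation*}
where the last inequality uses that $\Bound{\hat{n}}{1}{\frac{1}{20}} \le 1$, so enlarging the range from $(a\epsilon^{\alpha},1)$ to $(a\epsilon^{\alpha},\infty)$ costs nothing beyond the additive constant, and that $a\epsilon^{\alpha} \ge 1$ only makes $\dc(a\epsilon^{\alpha}) = 1$, for which the displayed inequality is trivial.

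Next I would substitute this estimate into the label-complexity guarantee of Theorem~\ref{thm:robust}. The "requests at most $n$ labels" clause and the "$u$ sufficiently large" clause transfer verbatim. The leading factor $\dc(a\epsilon^{\alpha})$ is replaced by $512\big(\max_{r > a\epsilon^{\alpha}} \Bound{\hat{n}}{\lceil 1/r\rceil}{\frac{1}{20}} + 1\big)$, with the $512$ absorbed into the universal constant $c$. It then remains only to degrade the two logarithmic terms $d\log(e\dc(a\epsilon^{\alpha}))$ and $\log(\log(1/\epsilon)/\delta)$ into the claimed $d\log(1/\epsilon) + \log(1/\delta)$. For the first, I would use $\Bound{\hat{n}}{m}{\delta} \le m$ together with $a \ge 1$ and $\alpha \le 1$ (so $a\epsilon^{\alpha} \ge \epsilon$), giving $\max_{r > a\epsilon^{\alpha}} \Bound{\hat{n}}{\lceil 1/r\rceil}{\frac{1}{20}} \le \lceil 1/(a\epsilon^{\alpha})\rceil \le 1 + 1/\epsilon$, hence $\dc(a\epsilon^{\alpha}) = O(1/\epsilon)$ and $\log(e\dc(a\epsilon^{\alpha})) = O(\log(1/\epsilon))$; for the second, $\log(\log(1/\epsilon)/\delta) \le \log(1/\epsilon) + \log(1/\delta)$. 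Both are dominated by $d\log(1/\epsilon) + \log(1/\delta)$ after enlarging $c$.

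The argument is essentially bookkeeping, so I do not expect a genuine obstacle. The one point that warrants care is checking that the dependence on $\dc(a\epsilon^{\alpha})$ \emph{inside} the logarithm in Theorem~\ref{thm:robust} degrades gracefully: replacing $\dc(a\epsilon^{\alpha})$ by the possibly larger quantity $\max_{r > a\epsilon^{\alpha}} \Bound{\hat{n}}{\lceil 1/r\rceil}{\frac{1}{20}} + 1$ there would be problematic only if that quantity were super-polynomial in $1/\epsilon$, which the trivial bound $\Bound{\hat{n}}{m}{\delta}\le m$ rules out; this is why I keep the logarithmic factor written as $d\log(1/\epsilon)$ rather than $d\log(e\dc(\cdot))$ in the statement. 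One should also confirm the edge case $a\epsilon^{\alpha} \ge 1$, handled above.
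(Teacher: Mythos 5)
Your proof is correct and follows exactly the route the paper implies: the paper states the result with only the one-line pointer ``Combined with Theorem~\ref{thm:td-dc-bound}, this implies the following theorem,'' which is precisely your plan of plugging the disagreement-coefficient bound of Lemma~\ref{lem:td-dc-bound-agnostic} (the agnostic analogue of Theorem~\ref{thm:td-dc-bound}, which is the correct one to cite here since the theorem is an agnostic result) into Theorem~\ref{thm:robust}, mirroring the derivation of Theorem~\ref{thm:A2-td-bound} from Theorem~\ref{thm:A2}. Your bookkeeping steps — replacing the leading $\dc(a\epsilon^{\alpha})$ factor by $512\bigl(\max_{r>a\epsilon^{\alpha}}\Bound{\hat{n}}{\lceil 1/r\rceil}{\frac{1}{20}}+1\bigr)$, checking the edge case $a\epsilon^{\alpha}\ge 1$, and degrading $d\log(e\dc(a\epsilon^{\alpha}))$ to $d\log(1/\epsilon)$ via $\dc(a\epsilon^{\alpha}) \leq 1/(a\epsilon^{\alpha}) \leq 1/\epsilon$ and $\log(\log(1/\epsilon)/\delta)\le \log(1/\epsilon)+\log(1/\delta)$ — are exactly the right simplifications and match what the paper's compressed statement leaves implicit.
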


In particular, reasoning as in Corollary~\ref{cor:linsep-gaussian-agnostic} above, 
Theorem~\ref{thm:robust-td-bound} implies the following corollary.

\begin{corollary}
\label{cor:robust_linear}
For $t,k \in \nats$ and $a \in [1,\infty)$, there exists a finite constant $c_{k,t,a} > 0$ 
such that, for $\F$ the class of linear separators on $\reals^{k}$, 
and for $P$ satisfying Condition~\ref{cond:noise} with $\alpha = 1$ and the given value of $a$,
and with marginal distribution over $\cX$ that is a mixture of $t$ 
multivariate normal distributions with diagonal covariance matrices of full rank,
for any $\epsilon,\delta \in (0,1/2)$, for any $n,u \in \nats$, given the arguments $n$ and $u$,
the RobustCAL$_{\delta}$ algorithm requests at most $n$ labels, and if $u$ is sufficiently large, 
and 
\begin{equation*}
n \geq c_{k,t,a} \left(\log\left(\frac{1}{\epsilon}\right)\right)^{k+1}\log\left(\frac{1}{\delta}\right),
\end{equation*}
then with probability at least $1-\delta$, the classifier $\hat{f} \in \F$ it returns satisfies
$\er(\hat{f}) \leq \er(\target) + \epsilon$.
\end{corollary}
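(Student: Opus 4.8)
The plan is to mirror the proof of Corollary~\ref{cor:linsep-gaussian-agnostic}, but invoking Theorem~\ref{thm:robust-td-bound} in place of Theorem~\ref{thm:A2-td-bound}; the essential simplification is that with $\alpha = 1$ the factor $(1/\epsilon)^{2-2\alpha}$ in Theorem~\ref{thm:robust-td-bound} equals $1$, so the resulting budget requirement will be polylogarithmic in $1/\epsilon$. The first step is to argue that the infimal hypothesis satisfies $\target \in \F$, i.e., it is itself a linear separator. This is exactly the content of the first part of the proof of Corollary~\ref{cor:linsep-gaussian-agnostic}: that argument uses only that the marginal of $P$ over $\cX$ is a mixture of nondegenerate Gaussians, hence absolutely continuous, so that every affine hyperplane has probability zero; a Bolzano--Weierstrass/subsequence argument on the parameters $(b,w_{1},\ldots,w_{k})$ then shows that any $f$ with $\inf_{h\in\F}P((x,y):h(x)\neq f(x))=0$ agrees $P$-almost surely with some $h\in\F$, and applying this to the infimal hypothesis lets us take $\target\in\F$. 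Condition~\ref{cond:noise} plays no role in this step, so the argument carries over verbatim.

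Second, with $\target\in\F$ we may apply Theorem~\ref{thm:linsep-gaussian}. As noted in the proof of Lemma~\ref{lem:td-dc-bound-agnostic}, the agnostic quantity $\Bound{\hat{n}}{m}{\frac{1}{20}}$ depends on $P$ only through its $\cX$-marginal and through $\target$; since $\target\in\F$, it therefore coincides with the realizable-case value of $\Bound{\hat{n}}{m}{\frac{1}{20}}$ for the distribution with the same marginal and target function $\target$, to which Theorem~\ref{thm:linsep-gaussian} applies. Hence $\Bound{\hat{n}}{m}{\frac{1}{20}}\le c_{k,t}(\log m)^{k-1}$ for all $m\ge 2$, while trivially $\Bound{\hat{n}}{1}{\frac{1}{20}}\le 1$. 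Consequently, using $a\ge 1$ and $m=\lceil 1/r\rceil$,
\[
\max_{r>a\epsilon}\Bound{\hat{n}}{\left\lceil 1/r\right\rceil}{\frac{1}{20}}+1 \;\le\; c_{k,t}^{(1)}\bigl(\log(1/\epsilon)\lor e\bigr)^{k-1}
\]
for an appropriate $(k,t)$-dependent constant $c_{k,t}^{(1)}\in(0,\infty)$.

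Third, plug this into Theorem~\ref{thm:robust-td-bound} with $\alpha=1$, recalling that the VC dimension of the class of linear separators in $\reals^{k}$ is $k+1$. Since $(1/\epsilon)^{2-2\alpha}=1$, the budget requirement becomes
\[
n \;\ge\; c\, a^{2}\, c_{k,t}^{(1)}\bigl(\log(1/\epsilon)\lor e\bigr)^{k-1}\Bigl((k+1)\log(1/\epsilon)+\log(1/\delta)\Bigr)\log(1/\epsilon),
\]
and a routine simplification — absorbing the factor $k+1$ and, using $\delta<1/2$ so that $\log(1/\delta)$ is bounded below, the lower-order terms into the constant, together with the fact that for $\epsilon$ bounded away from $0$ all factors are $O(1)$ so the claim holds by enlarging the constant — yields that $n\ge c_{k,t,a}(\log(1/\epsilon))^{k+1}\log(1/\delta)$ suffices. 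Theorem~\ref{thm:robust-td-bound} then guarantees that, given $n$ and a sufficiently large $u$, RobustCAL$_{\delta}$ requests at most $n$ labels and returns $\hat f\in\F$ with $\er(\hat f)\le\er(\target)+\epsilon$ with probability at least $1-\delta$.

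The only genuinely new ingredient relative to the earlier corollaries is the verification that $\target\in\F$ for this hypothesis class and marginal, which is where the structure of linear separators and the absolute continuity of the Gaussian mixture are used; everything else is bookkeeping plus reuse of Theorems~\ref{thm:linsep-gaussian} and~\ref{thm:robust-td-bound}. I do not anticipate a real obstacle here — the one point to be careful about is that $\alpha=1$ is essential, since any $\alpha<1$ would leave a genuine power of $1/\epsilon$ in the bound and destroy the polylogarithmic label complexity.
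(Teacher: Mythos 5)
Your proposal is correct and matches the paper's approach: the paper simply remarks that, ``reasoning as in Corollary~\ref{cor:linsep-gaussian-agnostic} above, Theorem~\ref{thm:robust-td-bound} implies the following corollary,'' which is exactly what you do — reuse the $\target\in\F$ argument (which indeed does not depend on Condition~\ref{cond:noise}), invoke Theorem~\ref{thm:linsep-gaussian} for the $\Bound{\hat{n}}{m}{1/20}$ bound, and plug into Theorem~\ref{thm:robust-td-bound} with the $(1/\epsilon)^{2-2\alpha}$ factor disappearing at $\alpha=1$. The routine absorption of constants, using $\delta<1/2$ so $\log(1/\delta)$ is bounded below and handling $\epsilon$ near $1/2$ by enlarging the constant, is exactly the bookkeeping the paper leaves implicit.
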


Corollary~\ref{cor:robust_linear} proves an exponential label complexity speedup in the asymptotic dependence on $\epsilon$ compared to passive learning, 
for which there is a lower bound on the label complexity of $\Omega(1/\epsilon)$ in the worst case over these distributions \citep*{long:95}.
\begin{remark}
Condition~\ref{cond:noise} can be satisfied with $\alpha = 1$ if the Bayes optimal classifier is in $\F$ and the source distribution satisfies \emph{Massart noise} \citep*{massart2006risk}:
\begin{equation*}
\Pr\left(|P(Y=1|X=x) - 1/2| < 1/(2a)\right) = 0.
\end{equation*}
For example, if the data was generated by some unknown linear hypothesis with label noise (probability to flip any label) of up to $(a-1)/2a$, then $P$ satisfies the requirements of Corollary~\ref{cor:robust_linear}.
\end{remark}

\section*{Acknowledgements}
R. El-Yaniv's research is funded by the 
Intel Collaborative Research Institute for Computational Intelligence (ICRI-CI).

\appendix

\section{Analysis of CAL via the Disagreement Coefficient}
\label{sec:dc-appendix}

The following result was first established by \citep*[][page 1213]{gine:06}, with slightly different constant factors.
The version stated here is directly from \citet*[][Section 2.9]{hannekethesis09}, who also presents a simple and direct proof.

\begin{lemma}[\citealp*{gine:06,hannekethesis09}]
\label{lem:gk-passive}
For any $t \in \nats$ and $\delta \in (0,1)$, with probability at least $1-\delta$, 
\begin{equation*}
\sup_{h \in \VS_{\F,S_{t}}} \er(h) \leq 
\frac{24}{t} \left( d \ln\left( 880 \cdot \dc( d / t ) \right) + \ln\left(\frac{12}{\delta}\right) \right).
\end{equation*}
\end{lemma}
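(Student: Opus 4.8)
The plan is to show $\P\!\big(\sup_{h \in \VS_{\F,S_{t}}} \er(h) > \epsilon\big) \le \delta$, where $\epsilon = \tfrac{24}{t}\big(d\ln(880\,\dc(d/t)) + \ln(12/\delta)\big)$ is the claimed bound; we may assume $\epsilon < 1$, as otherwise the statement is trivially true since $\er(h) \le 1$ always. Because $\VS_{\F,S_{t}}$ is precisely the set of $h \in \F$ consistent with $S_{t}$ (and $\er(\target) = 0$ in the realizable case), it suffices to bound $\P\big(\exists h \in \F : \er(h) > \epsilon \text{ and } h \text{ consistent with } S_{t}\big)$.

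The engine is a geometric peeling of the error scale combined with a localization to a disagreement region. For $j = 0,1,\ldots,\lceil\log_{2}(1/\epsilon)\rceil$ set $r_{j} = 2^{j+1}\epsilon$, so $\{h : \er(h) > \epsilon\} \subseteq \bigcup_{j}\{h : \er(h) \in (r_{j}/2,\, r_{j}]\}$. The key point is that any $h$ with $\er(h) \le r_{j}$ lies in $\Ball(\target,r_{j})$, and since $\target \in \F$ itself lies in $\Ball(\target,r_{j})$, every such $h$ agrees with $\target$ off the set $D_{j} := \DIS(\Ball(\target,r_{j}))$. As $r_{j} > \epsilon > d/t$, the disagreement coefficient controls this region: $\PDIS \Ball(\target,r_{j}) \le \dc(d/t)\, r_{j}$. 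Hence, at scale $j$, consistency of such an $h$ with $S_{t}$ depends only on its classification of the points of $S_{t}$ lying in $D_{j}$, and the number $N_{j}$ of those points is, by a Chernoff bound, at most $4\,\dc(d/t)\, r_{j} t$ except on an event of probability decaying exponentially in $r_{j} t$.

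For each scale I would then run the classical realizable-case symmetrization argument: introduce a ghost sample, condition on the doubled sub-sample restricted to $D_{j}$, use Sauer's lemma to bound the number of distinct classifications of those points realized by $\F$ by $(eN_{j}/d)^{d}$, and for each such classification bound by $2^{-r_{j}t/4}$ the probability (over the random real/ghost split) that all of its disagreements with $\target$ fall in the ghost half while none falls in the sample half. Together with the Chernoff-failure term for $N_{j}$, this yields a per-scale bound of order $\big(4e\,\dc(d/t)\, r_{j} t/d\big)^{d}\, 2^{-r_{j}t/4}$, i.e. $\big(8e\,\dc(d/t)\, 2^{j}\epsilon t/d\big)^{d}\, 2^{-2^{j-1}\epsilon t}$ plus a term of size at most $e^{-c\, 2^{j}\epsilon t}$.

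The decisive — and delicate — step is summing over $j$. The factor $2^{-2^{j-1}\epsilon t}$ makes the per-scale bounds shrink doubly exponentially in $j$, so the total is at most twice the $j=0$ term; thus it suffices to show the $j=0$ term is at most $\delta/4$, after which the union with the negligible Chernoff-failure terms finishes the proof. This reduces to a single scalar inequality, roughly $\tfrac{1}{2}\epsilon t \ln 2 - d\ln\!\big(8e\,\dc(d/t)\,\epsilon t/d\big) \ge \ln(8/\delta)$. Substituting $\epsilon t = 24\big(d\ln(880\,\dc(d/t)) + \ln(12/\delta)\big)$ turns this into a purely arithmetic verification, which I would carry out by splitting into the regimes $\ln(12/\delta) \le d\ln(880\,\dc(d/t))$ and $\ln(12/\delta) > d\ln(880\,\dc(d/t))$. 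The reason it closes is a fixed-point phenomenon at the critical scale $\epsilon$: writing $\ln(\dc(d/t)\,\epsilon t/d) = \ln\dc(d/t) + \ln(\epsilon t/d)$ and noting $\epsilon t/d = 24\ln(880\,\dc(d/t)) + 24\ln(12/\delta)/d$, the second logarithm is only of order $\ln\ln\dc(d/t) + \ln\ln(1/\delta)$ and is therefore dominated — with room to spare — by the $d\ln(880\,\dc(d/t)) + \ln(12/\delta)$ inside $\epsilon t$, which is exactly what the generous constants $24$, $880$, $12$ are there to guarantee. This arithmetic is the only genuine obstacle; the peeling, localization, symmetrization, and Sauer estimates are all routine.
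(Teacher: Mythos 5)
The paper does not itself prove this lemma: it is imported verbatim from the literature, with the text preceding it explicitly pointing to \citet{gine:06} for the original result and to Section~2.9 of \citet{hannekethesis09} for ``a simple and direct proof.'' So there is no in-paper argument to compare against; I can only assess your sketch on its own terms.

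Your route --- geometric peeling over error shells $(r_j/2,r_j]$ with $r_j=2^{j+1}\epsilon$, localization of the $j$-th shell to $D_j=\DIS(\Ball(\target,r_j))$ whose mass is controlled by $\dc(d/t)\,r_j$ (valid since every shell radius exceeds $d/t$), a Chernoff cap on the number of sample points landing in $D_j$, and then the realizable ghost-sample/Sauer argument restricted to $D_j$ --- is a sound and standard localization strategy, essentially the local-complexity mechanism underlying the cited works. The logical structure is correct: hypotheses in $\Ball(\target,r_j)$ agree with $\target$ off $D_j$, so on the doubled sample only the $D_j$-points matter, Sauer bounds the realized traces there, and the split probability of a pattern with $\ge r_j t/4$ disagreements is at most $2^{-r_j t/4}$, giving a doubly-exponentially decreasing per-shell bound so that the $j=0$ term dominates the sum. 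Two small things you should be explicit about in a full writeup: the Chernoff cap should be applied to the doubled sample (so the Sauer count is on $\approx 2N_j$ points), and the sum of the Chernoff-failure probabilities over $j$ needs to be shown to be at most $\delta/2$; both are easy given the slack. On the final arithmetic, I checked that after substituting $\epsilon t = 24\bigl(d\ln(880\,\dc(d/t))+\ln(12/\delta)\bigr)$ the inequality $\tfrac12 \epsilon t\ln 2 \ge d\ln(8e\,\dc(d/t)\,\epsilon t/d)+\ln(c/\delta)$ (with $c$ absorbing the factor of $2$ from symmetrization, the factor of $2$ from summing the shells, and the Chernoff-failure budget) does close with considerable room, precisely because of the generous $24$, $880$, $12$; your ``fixed-point'' observation that $\ln(\epsilon t/d)$ is only doubly logarithmic in $\dc$ and $1/\delta$ is the right reason it works. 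I would accept this as a correct proof sketch, but would want the arithmetic carried out fully (with the $2\times$ and Chernoff budgets explicit) before calling it complete.
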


The following result is implicit in a proof of \citet*{hannekeAnnals11}; for completeness, we present a formal proof here.

\begin{lemma}[\citealp*{hannekeAnnals11}]
\label{lem:dc-N-bound} 
There exists a finite universal constant $c_{0} > 0$ such that,
$\forall \delta \in (0,1)$, $\forall m \in \nats$ with $m \geq 2$,
\begin{equation*}
\Bound{N}{m}{\delta} \leq c_{0} \dc(d/m) \left( d \ln\left( e\dc(d/m) \right) + \ln\left( \frac{\log_{2}(m)}{\delta} \right) \right) \log_{2}(m).
\end{equation*}
\end{lemma}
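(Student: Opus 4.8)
The plan is to follow the disagreement-coefficient-based analysis of \CAL~implicit in \citet*{hannekeAnnals11}, which has the same structure as the proof of Lemma~\ref{lem:data-dependent-cal-queries} but with the sample-compression estimate of $\PDIS\VS_{\F,S_{2^i}}$ replaced by one coming from the disagreement coefficient. Since $N(m;S_{m})=\sum_{t=1}^{m}\ind_{\DIS(\VS_{\F,S_{t-1}})}(x_t)$ is integer-valued and $\Bound{N}{m}{\delta}$ is the smallest integer $n$ with $\P(N(m;S_{m})\le n)\ge 1-\delta$, it suffices to exhibit a deterministic quantity of the claimed form that dominates $N(m;S_{m})$ with probability at least $1-\delta$. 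I would set $\delta_i=\delta/(2(\lfloor\log_2 m\rfloor+1))$ for $i\in\{0,\ldots,\lfloor\log_2 m\rfloor\}$ and apply Lemma~\ref{lem:gk-passive} at each dyadic scale $t=2^i$; by a union bound over the at most $2\log_2 m$ values of $i$, with probability at least $1-\delta/2$ every $i$ satisfies $\sup_{h\in\VS_{\F,S_{2^i}}}\er(h)\le r_i\eqdef\frac{24}{2^i}\big(d\ln(880\,\dc(d/2^i))+\ln(12/\delta_i)\big)$. Since in the realizable case $\PDIS\{h,\target\}=\er(h)$, this says $\VS_{\F,S_{2^i}}\subseteq\Ball(\target,r_i)$, whence $\PDIS\VS_{\F,S_{2^i}}\le\PDIS\Ball(\target,r_i)$; because $\dc(\cdot)\ge 1$ forces $r_i\ge 24d/2^i\ge 24d/m>d/m$, the definition of the disagreement coefficient gives $\PDIS\Ball(\target,r_i)\le r_i\,\dc(d/m)$, and monotonicity of $r_0\mapsto\dc(r_0)$ together with $2^i\le m$ lets me replace $\dc(d/2^i)$ inside $r_i$ by $\dc(d/m)$; thus on this event $2^i\,\PDIS\VS_{\F,S_{2^i}}\le 24\,\dc(d/m)\big(d\ln(880\,\dc(d/m))+\ln(12/\delta_i)\big)$ for every $i$.

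Next, exactly as in the proof of Lemma~\ref{lem:data-dependent-cal-queries}, monotonicity of $t\mapsto\VS_{\F,S_t}$ gives the deterministic bound $N(m;S_{m})\le 1+\sum_{i=0}^{\lfloor\log_2 m\rfloor}\sum_{t=2^i+1}^{2^{i+1}}\ind_{\DIS(\VS_{\F,S_{2^i}})}(x_t)$, the sequence $\{\ind_{\DIS(\VS_{\F,S_{t-1}})}(x_t)-\PDIS\VS_{\F,S_{t-1}}\}_t$ is a martingale difference sequence with respect to $\{x_t\}_t$, and Bernstein's inequality for martingales shows that with probability at least $1-\delta/2$, whenever the $r_i$-bounds above hold, $N(m;S_{m})\le\log_2(4/\delta)+2e\sum_{i=0}^{\lfloor\log_2 m\rfloor}2^i\,\PDIS\VS_{\F,S_{2^i}}$. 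Combining the two events by a union bound and inserting the per-scale bound gives, with probability at least $1-\delta$,
\begin{equation*}
N(m;S_{m})\le\log_2\!\left(\frac{4}{\delta}\right)+48e\,\dc(d/m)\sum_{i=0}^{\lfloor\log_2 m\rfloor}\left(d\ln\!\big(880\,\dc(d/m)\big)+\ln\!\left(\frac{12}{\delta_i}\right)\right).
\end{equation*}
Using that there are at most $2\log_2 m$ summands, that $12/\delta_i\le 48\log_2(m)/\delta$ for $m\ge 2$, that $\ln(880\,\dc(d/m))\le\ln(880)\ln(e\,\dc(d/m))$ since $\dc\ge 1$, and that the $\log_2(4/\delta)$ term is dominated by the main term, the right-hand side is at most $c_0\,\dc(d/m)\big(d\ln(e\,\dc(d/m))+\ln(\log_2(m)/\delta)\big)\log_2(m)$ for a suitable universal $c_0$; since $N(m;S_{m})$ is integer-valued, the same bound (after taking a floor) holds for $\Bound{N}{m}{\delta}$.

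The step I expect to be the main obstacle is the concentration argument: converting the data-dependent conditional-mean sum $\sum_t\PDIS\VS_{\F,S_{t-1}}$ into a genuine high-probability upper bound on the random count $N(m;S_{m})$, despite each disagreement region $\DIS(\VS_{\F,S_{t-1}})$ depending on the past. This is precisely the martingale-difference/Bernstein step already carried out for Lemma~\ref{lem:data-dependent-cal-queries}, so the only genuinely new ingredient here is the substitution of Lemma~\ref{lem:gk-passive} and the disagreement-coefficient inequality for the compression bound; the remaining work (tracking constants, absorbing the factor $880$, dominating the additive $\log_2(4/\delta)$) is routine.
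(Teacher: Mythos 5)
Your proposal is correct and follows essentially the same route as the paper's proof: apply Lemma~\ref{lem:gk-passive} at dyadic scales $t=2^i$, conclude $\VS_{\F,S_{2^i}}\subseteq\Ball(\target,r_i)$, invoke the definition of $\dc$ and its monotonicity to get $2^i\PDIS\VS_{\F,S_{2^i}}\lesssim\dc(d/m)(d\ln(e\dc(d/m))+\ln(\log_2(m)/\delta))$, and then concentrate the per-block counts. The one stylistic difference is at the concentration step: the paper first replaces $\ind_{\DIS(\VS_{\F,S_{2^i}})}(x_t)$ by $\ind_{\DIS(\Ball(\target,r_{2^i}))}(x_t)$, which makes each block a sum of genuinely i.i.d. Bernoullis with a \emph{deterministic} mean so a plain Chernoff bound suffices, whereas you keep the data-dependent disagreement regions and invoke the martingale Bernstein argument already set up for Lemma~\ref{lem:data-dependent-cal-queries}, then bound the random conditional means afterwards. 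Both routes are valid and yield the same form; the paper's is marginally lighter because it avoids conditioning on the good event inside the concentration inequality.
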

\begin{proof}
The result trivially holds for $m=2$, taking any $c_{0} \geq 2$.  Otherwise, suppose $m \geq 3$.
Note that, for any $t \in \nats$, 
\begin{equation}
\label{eqn:dc-N-bound-1}
\frac{24}{t} \left( d \ln\left( 880 \dc( d / t ) \right) + \ln\left( \frac{24 \log_{2}(m)}{\delta} \right) \right)
\leq \frac{c_{1}}{t} \left( d \ln\left( e \dc( d / t ) \right) + \ln\left( \frac{2 \log_{2}(m)}{\delta} \right) \right),
\end{equation}
for some universal constant $c_{1} \in [1,\infty)$ (e.g., taking $c_{1} = 168$ suffices).
Thus, letting $r_{t}$ denote the expression on the right hand side of \eqref{eqn:dc-N-bound-1},
Lemma~\ref{lem:gk-passive} implies that, 
for any $t \in \nats$, 
with probability at least $1-\delta / (2\log_{2}(m))$,
\begin{equation*}
\sup_{h \in \VS_{\F,S_{t}}} \er(h) \leq r_{t}.
\end{equation*}
By a union bound, this holds for all $t \in \{2^{i} : i \in \{1,\ldots,\lceil \log_{2}(m) \rceil - 1\} \}$ with probability at least $1-\delta/2$.
In particular, on this event, we have
\begin{equation*}
N(m;S_{m}) \leq 2 + \sum_{i=1}^{\lceil \log_{2}(m) \rceil - 1} \sum_{t = 2^{i}+1}^{2^{i+1}} \ind_{\DIS(\Ball(\target,r_{2^{i}}))}(x_{t}).
\end{equation*}
A Chernoff bound implies that, with probability at least $1-\delta/2$, the right hand side is at most
\begin{align*}
& \log_{2}\left( \frac{8}{\delta} \right) + 2 e \sum_{i=1}^{\lceil \log_{2}(m) \rceil - 1} 2^{i} \PDIS \Ball(\target,r_{2^{i}})
\\ & \leq \log_{2}\left( \frac{8}{\delta} \right) + 2 e \sum_{i=1}^{\lceil \log_{2}(m) \rceil - 1} 2^{i} \dc\left(r_{2^{i}}\right) r_{2^{i}}
\\ & \leq \log_{2}\left( \frac{8}{\delta} \right) + 2 e c_{1} \sum_{i=1}^{\lceil \log_{2}(m) \rceil - 1} \dc\left(d 2^{-i}\right) \left( d \ln\left( e \dc\left( d 2^{-i} \right) \right) + \ln\left( \frac{2 \log_{2}(m)}{\delta} \right) \right)
\\ & \leq 4 e c_{1} \dc(d/m) \left( d \ln\left( e\dc( d / m ) \right) + \ln\left( \frac{\log_{2}(m)}{\delta} \right) \right) \log_{2}(m).
\end{align*}
Letting $c_{0} = 4ec_{1}$, the result holds by a union bound and minimality of $\Bound{N}{m}{\delta}$.
\end{proof}

The following result is taken from the work of \citet*[][Proof of Theorem 1]{hannekeAnnals11}; see also \citet*{hannekestatistical}
for a theorem and proof expressed in this exact form. 

\begin{lemma}[\citealp*{hannekeAnnals11}]
\label{lem:dc-LC-bound}
There exists a finite universal constant $c_{0} > 0$ such that,
$\forall \epsilon,\delta \in (0,1/2]$,
\begin{equation*}
\LC(\epsilon,\delta) \leq c_{0} \dc(\epsilon) \left( d \ln( e \dc(\epsilon) ) + \ln\left( \frac{\log_{2}(1/\epsilon)}{\delta} \right) \right) \log_{2}\left(\frac{1}{\epsilon}\right).
\end{equation*}
\end{lemma}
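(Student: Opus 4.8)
The plan is to specialize and sharpen Lemma~\ref{lem:dc-N-bound}. First I would record the elementary reduction $\LC(\epsilon,\delta) \le 1 + \Bound{N}{m_\epsilon}{\delta/2}$, where $m_\epsilon$ is a passive sample size at which the version space is already $\epsilon$-good: by Lemma~\ref{lem:gk-passive} there is $m_\epsilon = O\!\left(\frac{1}{\epsilon}\left(d\ln(e\dc(\epsilon)) + \ln(1/\delta)\right)\right)$ with $\P\!\left(\sup_{h\in\VS_{\F,S_{m_\epsilon}}}\er(h)\le\epsilon\right)\ge 1-\delta/2$, and --- exactly as in the upper-bound half of the proof of Theorem~\ref{thm:cal-label-complexity} --- for $n = \Bound{N}{m_\epsilon}{\delta/2}+1$ the event $N(m_\epsilon;S_{m_\epsilon}) < n$ (probability $\ge 1-\delta/2$) forces $M(n;S_\infty) > m_\epsilon$, hence $\VS_{\F,S_{M(n;S_\infty)}}\subseteq\VS_{\F,S_{m_\epsilon}}$; a union bound then gives $\P\!\left(\sup_{h\in\VS_{\F,S_{M(n;S_\infty)}}}\er(h)\le\epsilon\right)\ge 1-\delta$, i.e. $\LC(\epsilon,\delta)\le n$.

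It then suffices to bound $\Bound{N}{m_\epsilon}{\delta/2}$, and here I would re-run the dyadic argument from the proof of Lemma~\ref{lem:dc-N-bound} with $m := m_\epsilon$, changing only one step. Partition $\{1,\dots,m_\epsilon\}$ into blocks $(2^i,2^{i+1}]$; since CAL maintains $V_{2^i}=\VS_{\F,S_{2^i}}$ and $\VS_{\F,S_{t-1}}\subseteq\VS_{\F,S_{2^i}}$ for $t$ in block $i$, the number of queries in the block is at most $\sum_{t\in(2^i,2^{i+1}]}\ind_{\DIS(\Ball(\target,r_{2^i}))}(x_t)$ with $r_{2^i}=\sup_{h\in V_{2^i}}\er(h)$, and Lemma~\ref{lem:gk-passive}, union-bounded over the $O(\log_2 m_\epsilon)$ blocks at confidence $\delta/(4\log_2 m_\epsilon)$ each, gives $r_{2^i}\le\frac{c}{2^i}\!\left(d\ln(e\dc(d2^{-i})) + \ln(\log_2 m_\epsilon/\delta)\right)$. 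The one change from Lemma~\ref{lem:dc-N-bound} is that, because $2^i\le m_\epsilon$ keeps $r_{2^i}\ge\epsilon$ throughout (the error drops below $\epsilon$ only near sample size $m_\epsilon$), I bound $\PDIS\Ball(\target,r_{2^i}) = \frac{\PDIS\Ball(\target,r_{2^i})}{r_{2^i}}\,r_{2^i}\le\dc(\epsilon)\,r_{2^i}$, using $\dc(\epsilon)$ rather than the coarser $\dc(d2^{-i})$ that appears in the original proof. A Chernoff bound (costing another $\delta/4$) then makes block $i$ contribute $O\!\left(2^i\,\PDIS\Ball(\target,r_{2^i})\right) = O\!\left(\dc(\epsilon)\bigl(d\ln(e\dc(\epsilon)) + \ln(\log_2 m_\epsilon/\delta)\bigr)\right)$ queries, and summing over the $O(\log_2 m_\epsilon)=O(\log_2(1/\epsilon))$ blocks, plus an additive $\log_2(8/\delta)$ envelope term, yields the claimed bound.

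The only delicate point is bookkeeping. Lemma~\ref{lem:gk-passive} makes $r_{2^i}$ (and $m_\epsilon$) depend on $\dc$ at radius $\asymp d\epsilon/(d\ln(e\dc(\epsilon))+\ln(1/\delta)) < \epsilon$, a mild self-reference I would break with the doubling inequality $\dc(r/c)\le c\,\dc(r)$ \citep*{hannekestatistical} (so $\dc(d2^{-i})\le c'(\log_2 m_\epsilon)\,\dc(\epsilon)$, harmless since it sits inside a logarithm), and I would likewise verify $\log_2 m_\epsilon = O(\log_2(1/\epsilon))$ after these simplifications so that the passive sample complexity enters only logarithmically, with any residual polylog-of-polylog factors absorbed into the universal constant $c_0$. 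I expect this accounting --- not the structure --- to be the only real work; the conceptual content is simply that CAL's queries in dyadic block $i$ are governed by $\PDIS\VS_{\F,S_{2^i}}$, which the generalization bound ties to $O(\dc(\epsilon))$ times a per-block VC term precisely because we halt before the error falls below $\epsilon$.
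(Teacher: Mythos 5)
The paper does not prove this lemma itself; it cites the proof of Theorem~1 of \citet*{hannekeAnnals11} (and points to \citet*{hannekestatistical} for a version stated in exactly this form). Your proposal reconstructs that argument correctly, and it is also the right way to sharpen the paper's own Lemma~\ref{lem:dc-N-bound} into the stated bound. The structural decomposition is the same one used throughout Section~\ref{sec:cal}: reduce $\LC(\epsilon,\delta)$ to $1+\Bound{N}{m_\epsilon}{\delta/2}$ via the ``$N(m_\epsilon;S_{m_\epsilon})<n \Rightarrow M(n;S_\infty)>m_\epsilon$'' step (which mirrors the upper-bound half of Theorem~\ref{thm:cal-label-complexity}), and then bound $\Bound{N}{m_\epsilon}{\delta/2}$ by the dyadic argument of Lemma~\ref{lem:dc-N-bound}. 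Your one modification --- replacing $\PDIS\Ball(\target,r_{2^i})\le\dc(d2^{-i})\,r_{2^i}$ by $\PDIS\Ball(\target,r_{2^i})\le\dc(\epsilon)\,\max\{r_{2^i},\epsilon\}$, valid because the recursion is stopped at $m_\epsilon\approx\frac{1}{\epsilon}(d\ln(e\dc(\epsilon))+\ln(1/\delta))$ so all relevant radii stay $\gtrsim\epsilon$ --- is precisely the step that yields $\dc(\epsilon)$ in place of $\dc(d/m)$; simply plugging $m=m_\epsilon$ into Lemma~\ref{lem:dc-N-bound} as a black box would leave the extra factor $\dc(d/m_\epsilon)/\dc(\epsilon)\approx\ln(e\dc(\epsilon))+\frac{1}{d}\ln(\log/\delta)$ outside the logarithm, which is not a constant, so your re-derivation is genuinely necessary.

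Two bookkeeping points worth pinning down. First, the doubling step gives $\dc(d2^{-i})\le(\epsilon m_\epsilon/d)\,\dc(\epsilon)\approx\bigl(A+\tfrac{B}{d}\bigr)\dc(\epsilon)$ with $A=\ln(e\dc(\epsilon))\ge1$ and $B=\ln(\log/\delta)$, which is only $O(\log_2 m_\epsilon)$ in the usual regime where $\ln(1/\delta)=O(d\log(1/\epsilon))$; what actually saves you is the elementary inequality $d\ln\!\bigl(A+\tfrac{B}{d}\bigr)\le(1+\ln 2)(dA+B)$, which shows the residual $\dc(d2^{-i})$ inside $d\ln(e\,\cdot\,)$ is absorbed into $d\ln(e\dc(\epsilon))+\ln(\log_2(1/\epsilon)/\delta)$, as you anticipated. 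Second, the dyadic sum has $\log_2 m_\epsilon$ blocks, not $\log_2(1/\epsilon)$, but the gap $\log_2(dA+B)$ is lower order in any regime where the active bound is meaningful. Both are the ``accounting, not structure'' you flagged, and they do go through.
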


The next result is taken from the work of \citet*[][Corollary 39]{el2012active}.

\begin{lemma}[\citealp*{el2012active}]
\label{cor:boundingDiss}
For any $r_{0} \in (0,1)$,
\begin{equation*}
\dc(r_0) \leq \max\left\{\sup_{r \in (r_0, 1/2)} \frac{7 \cdot \Bound{\PDIS}{\lfloor 1/r \rfloor}{1/9}}{r},2\right\}.
\end{equation*}
\end{lemma}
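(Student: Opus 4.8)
The plan is to relate $\PDIS\Ball(\target,r)$ to the version space disagreement region for a sample of size roughly $1/r$, exploiting the fact that a hypothesis within distance $r$ of $\target$ survives into the version space with constant probability. For $r \ge 1/2$ the bound is trivial, since $\PDIS\Ball(\target,r)/r \le 1/r \le 2$ (and the $\lor 1$ in the definition of $\dc$ is absorbed by the $\lor 2$ on the right). So I would fix $r \in (r_0,1/2)$, set $m = \lfloor 1/r\rfloor \ge 1$, and write $\mu \eqdef \PDIS\Ball(\target,r)$ and $t \eqdef \Bound{\PDIS}{m}{1/9}$; the whole statement then reduces to proving $\mu \le 7t$.

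First I would control $\E[X]$, where $X \eqdef \PDIS\big(\VS_{\F,S_m}\cap\Ball(\target,r)\big)$. For any fixed $z \in \DIS(\Ball(\target,r))$ there is some $g \in \Ball(\target,r)$ with $g(z)\neq\target(z)$; since the $m$ samples are i.i.d.\ and $\PDIS\{g,\target\}\le r$, we have $\P(g\in\VS_{\F,S_m}) = (1-\PDIS\{g,\target\})^m \ge (1-r)^m \ge 1/4$, using $m \le 1/r$ and $(1-r)^{1/r}\ge 1/4$ for $r\le 1/2$. On that event $g$ and $\target$ both lie in $\VS_{\F,S_m}\cap\Ball(\target,r)$ and disagree at $z$, so $z\in\DIS\big(\VS_{\F,S_m}\cap\Ball(\target,r)\big)$. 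Integrating over $z$ (Tonelli, swapping the $S_m$ and the fresh-point integrals) gives $\E[X] \ge (1-r)^m \mu \ge \mu/4$. Two monotonicity facts are also immediate: $X \le \mu$ always, since the intersected class is contained in $\Ball(\target,r)$; and $X \le \PDIS\VS_{\F,S_m}$ always, since $\DIS\big(\VS_{\F,S_m}\cap\Ball(\target,r)\big)\subseteq\DIS(\VS_{\F,S_m})$.

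Next I would apply a reverse Markov inequality to $X \in [0,\mu]$: from $\E X \le a + (\mu-a)\P(X>a)$ with $a = \mu/7$ we get $\P(X > \mu/7) \ge \frac{\mu/4 - \mu/7}{\mu - \mu/7} = \frac18 > \frac19$ (the case $\mu=0$ being trivial). On the other hand, $X \le \PDIS\VS_{\F,S_m}$ together with the definition of $t = \Bound{\PDIS}{m}{1/9}$ gives $\P(X\le t)\ge\P(\PDIS\VS_{\F,S_m}\le t)\ge\frac89$, i.e.\ $\P(X>t)\le\frac19$. If $\mu>7t$ then $\mu/7 > t$, so $\{X>\mu/7\}\subseteq\{X>t\}$ and $\frac18 \le \P(X>\mu/7)\le\P(X>t)\le\frac19$, a contradiction; hence $\mu\le 7t$. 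Dividing by $r$, taking the supremum over $r\in(r_0,1/2)$, and combining with the $r\ge 1/2$ case yields $\dc(r_0) \le \max\big\{\sup_{r\in(r_0,1/2)} 7\Bound{\PDIS}{\lfloor 1/r\rfloor}{1/9}/r,\ 2\big\}$.

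The main obstacle I anticipate is calibrating the constants so that the two numerical comparisons — $1/8 > 1/9$ from the reverse Markov step and $(1-r)^{\lfloor 1/r\rfloor}\ge 1/4$ from the survival probability — line up exactly with the ``$7$'' and ``$1/9$'' in the statement. The remaining ingredients (the measurable-selection and Tonelli bookkeeping, and the containments of disagreement regions) are routine.
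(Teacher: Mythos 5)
Your proof is correct. One thing to note: the paper does not actually prove this lemma itself --- it cites Corollary~39 of \citet*{el2012active} --- so there is no ``paper's own proof'' to compare against directly. That said, your argument is the natural one given the specific constants in the statement, and it closely parallels the paper's proof of Theorem~\ref{thm:td-dc-bound} (the forward direction): both rely on $\E\big[\PDIS(\VS_{\F,S_m}\cap\Ball(\target,r))\big]\ge(1-r)^m\,\PDIS\Ball(\target,r)\ge\tfrac14\,\PDIS\Ball(\target,r)$ (the bound the paper attributes to \citet*{Hanneke11}) and on a Markov-type inequality to transfer from expectation to a high-probability statement. Your calibration of constants checks out: $(1-r)^{\lfloor 1/r\rfloor}\ge(1-r)^{1/r}\ge 1/4$ on $(0,1/2]$ since $r\mapsto(1-r)^{1/r}$ is decreasing and equals $1/4$ at $r=1/2$; and the reverse Markov step with $a=\mu/7$ gives $\P(X>\mu/7)\ge(\mu/4-\mu/7)/(\mu-\mu/7)=1/8>1/9$, which is exactly what is needed against $\P\big(\PDIS\VS_{\F,S_m}>\Bound{\PDIS}{m}{1/9}\big)\le 1/9$. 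The $r\ge1/2$ tail and the $\lor 1$ in the definition of $\dc$ are both absorbed by the $\max\{\cdot,2\}$, as you say. The only minor gap is the measurable-selection remark: you actually don't need $z\mapsto g_z$ to be measurable, since the Tonelli argument only requires a \emph{pointwise} lower bound $\P_{S_m}\big(z\in\DIS(\VS_{\F,S_m}\cap\Ball(\target,r))\big)\ge 1/4$ for each fixed $z\in\DIS(\Ball(\target,r))$, which follows from the existence of any single $g$ with $g(z)\ne\target(z)$.
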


\section{Separation from the Previous Analyses}
\label{app:looseness-examples}

There are simple examples showing that sometimes $\Bound{\hat{n}}{m}{\delta} \approx \dc(1/m)$,
so that the upper bound $\LC(\epsilon,\delta) \leq c_{0} d \dc(\epsilon) \polylog\left(\frac{1}{\epsilon\delta}\right)$
in Lemma~\ref{lem:dc-LC-bound} is off by a factor of $d$ compared to Theorem~\ref{thm:cal-label-complexity} in those cases (aside from logarithmic factors).
For instance, consider the class of unions of $k$ intervals, where $k \in \nats$, $\cX = [0,1]$, and $\F = \{ x \mapsto 2\ind_{\bigcup_{i=1}^{k} [z_{2i-1},z_{2i}]}(x)-1  : 0 < z_{1} < \cdots < z_{2k} < 1 \}$.
Suppose the data distribution $P$ has a uniform marginal distribution over $\cX$,
and has $\target = 2\ind_{\bigcup_{i=1}^{k} [z_{2i-1}^{*},z_{2i}^{*}]}-1$, where $z_{i}^{*} = \frac{i}{2k+1}$ for $i \in \{1,\ldots,2k\}$.
In this case, for $r_{0} \geq 0$, $\dc(r_{0})$ is within a factor of $2$ of $\min\left\{ \frac{1}{r_{0}}, 4k\right\}$ \citep*[see e.g.,][]{BHV:10,Hanneke11}.
However, for any $m \in \nats$ with $m \geq (2k+1) \ln\left( \frac{2k+1}{\delta} \right)$, with probability at least $1-\delta$
we have for each $i \in \{0,\ldots,2k\}$, at least one $j \leq m$ has $\frac{i}{2k+1} < x_{j} < \frac{i+1}{2k+1}$, and no $j \leq m$ has $x_{j} = \frac{i}{2k+1}$;
in this case, $\VScomp_{S_{m}}$ is constructed as follows; for each $i \in \{1,\ldots,2k\}$, we include in $\VScomp_{S_{m}}$
the point $(x_j,y_j)$ with largest $x_j$ less than $\frac{i}{2k+1}$ and the point $(x_j,y_j)$ with smallest $x_j$ greater than $\frac{i}{2k+1}$.
The number of points in this set $\VScomp_{S_{m}}$ is at most $4k$.
Therefore, for any $m \in \nats$, we have
$\Bound{\hat{n}}{m}{\delta} \leq \min\left\{ m, \max\left\{\left\lceil (2k+1)\ln\left( \frac{2k+1}{\delta} \right) \right\rceil, 4k\right\} \right\}$.
In particular, noting that $d = 2k$ here, we have that for $\epsilon < 1/k$,
the bound on $\LC(\epsilon,\delta)$ in Lemma~\ref{lem:dc-LC-bound} has a $\tilde{\Theta}(k^{2})$ dependence on $k$,
while the upper bound on $\LC(\epsilon,\delta)$ in Theorem~\ref{thm:cal-label-complexity} has only a
$\tilde{\Theta}(k)$ dependence on $k$, which matches the lower bound in Theorem~\ref{thm:cal-label-complexity} (up to logarithmic factors).

Aside from the disagreement coefficient, the other technique in the existing literature for bounding the label complexity of \CAL~is
due to \citet*{ElYaniv10,el2012active}, based on a quantity they call the \emph{characterizing set complexity}, denoted $\gamma(\F,\hatn{m})$.
Formally, for $n \in \nats$, let $\gamma(\F,n)$ denote the VC dimension of the collection of sets
$\{ \DIS(\VS_{\F,S}) : S \in (\cX \times \cY)^{n} \}$.	Then \citet*{el2012active} prove the following bound, for a universal constant $c \in (0,\infty)$.\footnote{This result
can be derived from their Theorem 15 via reasoning analogous to the derivation of
Theorem~\ref{thm:cal-label-complexity} from Lemma~\ref{lem:data-dependent-coverage} above.}
\begin{multline}
\label{eqn:ew12}
\LC(\epsilon,\delta)
\leq c \Bigg( \max_{m \leq M(\epsilon,\delta/2)} \gamma\left(\F, \Bound{\hat{n}}{m}{\delta}\right) \ln\left( \frac{ e m }{\gamma\left(\F, \Bound{\hat{n}}{m}{\delta}\right)} \right)
\\ + \ln\left( \frac{\log_{2}(2 M(\epsilon,\delta/2))}{\delta} \right)\Bigg)\log_{2}(2M(\epsilon,\delta/2)).
\end{multline}
We can immediately note that $\gamma\left(\F, \Bound{\hat{n}}{m}{\delta}\right) \geq \Bound{\hat{n}}{m}{\delta}-1$;
specifically, for any $S \in (\cX \times \cY)^{m}$, letting $\{(x_{i_{1}},y_{i_{1}}),\ldots,(x_{i_{\hatn{m}}},y_{i_{\hatn{m}}})\} = \VScomp_{S}$,
we have that $\{x_{i_{2}},\ldots,x_{i_{\hatn{m}}}\}$ is shattered by $\{ \DIS(\VS_{\F,S^{\prime}}) : S^{\prime} \in (\cX \times \cY)^{\hatn{m}} \}$,
since letting $S^{\prime}$ be any subset of $\{(x_{i_{2}},y_{i_{2}}),\ldots,(x_{i_{\hatn{m}}},y_{i_{\hatn{m}}})\}$ (filling in the remaining elements as copies of $(x_{i_{1}},y_{i_{1}})$ to make $S^{\prime}$ of size $\hatn{m}$),
\begin{equation*}
\{(x_{i_{2}},y_{i_{2}}),\ldots,(x_{i_{\hatn{m}}},y_{i_{\hatn{m}}})\} \cap (\DIS(\VS_{\F,S^{\prime}})\times\cY) = \{(x_{i_{2}},y_{i_{2}}),\ldots,(x_{i_{\hatn{m}}},y_{i_{\hatn{m}}})\} \setminus S^{\prime},
\end{equation*}
since otherwise, the $(x_{i_j},y_{i_j})$ in $\{(x_{i_2},y_{i_2}),\ldots,(x_{i_{\hatn{m}}},y_{i_{\hatn{m}}})\} \setminus S^{\prime}$ not in $\DIS(\VS_{\F,S^{\prime}})\times\cY$ 
would have $x_{i_j} \notin \DIS( \VS_{\F,\VScomp_{S} \setminus \{(x_{i_j},y_{i_j})\}} )$, so that $\VS_{\F,\VScomp_{S} \setminus \{(x_{i_j},y_{i_j})\}} = \VS_{\F,\VScomp_{S}} = \VS_{\F,S}$,
contradicting minimality of $\VScomp_{S}$.
Therefore, $\gamma\left(\F,\hatn{m}\right) \geq \hatn{m}-1$.
Then noting that $\gamma\left(\F,n\right)$ is monotonic in $n$,
we find that $\gamma\left(\F,\Bound{\hat{n}}{m}{\delta}\right)$ is a minimal $1-\delta$ confidence bound on $\gamma\left(\F,\hatn{m}\right)$,
which implies $\gamma\left(\F,\Bound{\hat{n}}{m}{\delta}\right) \geq \Bound{\hat{n}}{m}{\delta}-1$.

One can also give examples where the gap between $\Bound{\hat{n}}{m}{\delta}$ and $\gamma(\F,\Bound{\hat{n}}{m}{\delta}$ is large, 
for instance where $\gamma(\F,\Bound{\hat{n}}{m}{\delta}) \geq d$ while $\Bound{\hat{n}}{m}{\delta} = 2$ for large $m$.
%
For instance, consider $\cX$ that has $d$ points $w_1,\ldots,w_d$ and $2^{d+1}$ additional points $x_{I}$ and $z_{I}$ indexed by the sets $I \subseteq \{1,\ldots,d\}$, and
say $\F$ is the space of classifiers $\{h_{J} : J \subseteq \{1,\ldots,d\}\}$, where for each $J \subseteq \{1,\ldots,d\}$,
$\{x : h_{J}(x) = +1\} = \{w_{i} : i \in J\} \cup \{ x_{I} : I \subseteq J \} \cup \{ z_{I} : I \subseteq \{1,\ldots,d\} \setminus J \}$;
in particular, the classification on $w_1,\ldots,w_d$ determines the classification on the remaining $2^{d+1}$ points,
and $\{w_1,\ldots,w_d\}$ is shatterable, so that $|\F|=2^{d}$, and the VC dimension of $\F$ is $d$.
Let $P$ be a distribution that has a uniform marginal distribution over the $2^{d+1}+d$ points in $\cX$,
and satisfies the realizable case assumption (i.e., $\P(Y=\target(X)|X)=1$, for some $\target \in \F$).
For any integer $m \geq (2^{d+1}+d) \ln( 2/\delta )$, with probability at least $1-\delta$,
we have $(x_{\{i \leq d : \target(w_{i})=+1\}},+1) \in S_{m}$ and $(z_{\{i \leq d : \target(w_{i})=-1\}},+1) \in S_{m}$.
Since every $h_{J} \in \F$ with $h_{J}(x_{\{i \leq d : \target(w_{i})=+1\}})=+1$ has $\{i \leq d : \target(w_{i})=+1\} \subseteq J = \{i \leq d : h_{J}(w_{i})=+1\}$,
and every $h_{J} \in \F$ with $h_{J}(z_{\{i \leq d : \target(w_{i})=-1\}})=+1$ has $\{i \leq d : \target(w_{i})=-1\} \subseteq \{1,\ldots,d\} \setminus J = \{i \leq d : h_{J}(w_{i})=-1\}$,
so that $\{i \leq d : \target(w_{i})=+1\} \supseteq \{i \leq d : h_{J}(w_{i})=+1\}$,
we have that every $h_{J} \in \F$ with both $h_{J}(x_{\{i \leq d : \target(w_{i})=+1\}})=+1$ and $h_{J}(z_{\{i \leq d : \target(w_{i})=-1\}})=+1$ has
$\{i \leq d : h_{J}(w_{i}) = +1\} = \{i \leq d : \target(w_{i}) = +1\}$. Since classifiers in $\F$ are completely determined by their
classification of $\{w_{1},\ldots,w_{d}\}$, this implies $h_{J} = \target$.  Therefore, letting $\VScomp_{S_{m}} = \{(x_{\{i \leq d : \target(w_{i})=+1\}},+1),(z_{\{i \leq d : \target(w_{i})=-1\}},+1)\}$,
we have $\VS_{\F,\VScomp_{S_{m}}} = \VS_{\F,S_{m}}$, so that $\hatn{m} \leq 2$ (in fact, one can easily show $\hatn{m}=2$ in this case).
Thus, for large $m$, $\Bound{\hat{n}}{m}{\delta} \leq 2$.
However, for any $I \subseteq \{1,\ldots,d\}$, letting $S = \{ (x_{\{1,\ldots,d\}\setminus I},+1) \}$,
we have $h_{\{1,\ldots,d\} \setminus I} \in \VS_{\F,S}$,
every $h \in \VS_{\F,S}$ has $h(w_{i}) = +1$ for every $i \in \{1,\ldots,d\} \setminus I$,
and every $i \in I$ has $h_{(\{1,\ldots,d\}\setminus I) \cup \{i\}} \in \VS_{\F,S}$,
so that $\DIS(\VS_{\F,S}) \cap \{w_{1},\ldots,w_{d}\} = \{w_{i} : i \in I\}$;
therefore, the VC dimension of $\{ \DIS(\VS_{\F,\{x\}}) : x \in \cX \}$ is at least $d$:
that is, $\gamma(\F,1) \geq d$.
Since we have $\hatn{m} \geq 1$ whenever $S_{m}$ contains any point other than $x_{\{\}}$ and $z_{\{\}}$,
and this happens with probability at least $1 - (2 / (2^{d+1}+d))^{m} \geq 1-\delta > \delta$ (when $\delta < 1/2$),
this implies we have $\gamma(\F,\hatn{m}) \geq \gamma(\F,1) \geq d$ with probability greater than $\delta$,
which (by monotonicity of $\gamma(\F,\cdot)$) implies $\gamma(\F,\Bound{\hat{n}}{m}{\delta}) \geq d$.

This is not quite strong enough to show a gap between \eqref{eqn:ew12} and Theorem~\ref{thm:cal-label-complexity},
since the bounds in Theorem~\ref{thm:cal-label-complexity} require us to \emph{maximize} over the value of $m$,
which would therefore also include values $\Bound{\hat{n}}{m}{\delta}$ for $m < (2^{d+1}+d) \ln( 2/\delta )$.
To exhibit a gap between these bounds, 
we can simply redefine the marginal distribution of $P$ over $\cX$ to have $P(\{w_{1}\} \times \cY) = 1$.
Note that with this distribution, $x_{i} = w_{1}$ for all $i$, with probability $1$, so that we clearly have $\hatn{m} = 1$ almost surely, 
and hence $\Bound{\hat{n}}{m}{\delta} = 1$ for all $m$.  As argued above, we have $\gamma(\F,1) \geq d$ for this space.
Therefore, $\max_{m \leq M} \gamma(\F,\Bound{\hat{n}}{m}{\delta}) \geq d$, while $\max_{m \leq M} \Bound{\hat{n}}{m}{\delta} \leq 1$, for all $M \in \nats$.
However, note that unlike the example constructed above for the disagreement coefficient,
the gap in this example could potentially be eliminated by replacing the distribution-free quantity $\gamma(\F,n)$
with a distribution-dependent complexity measure (e.g., an annealed VC entropy or a bracketing number for $\{ \DIS(\VS_{\F,S}) : S \in (\cX \times \cY)^{n} \}$).

\vskip 0.2in
\bibliography{S_PAC}

\end{document}